\newcommand{\changelocaltocdepth}[1]{%
  \addtocontents{toc}{\protect\setcounter{tocdepth}{#1}}%
  \setcounter{tocdepth}{#1}%
}
\def \({\left(}
\def \){\right)}
\def \[{\left[}
\def \]{\right]}
\newcommand{\bq}{{\textbf {q}}}
\newcommand{\bb}{{\textbf {b}}}
\newcommand{\bu}{{\textbf {u}}}
\newcommand{\bY}{{\textbf {Y}}}
\newcommand{\bbe}{{\textbf {e}}}
\newcommand{\bZ}{{\textbf {Z}}}
\newcommand{\bw}{{\textbf {w}}}
\newcommand{\bv}{{\textbf {v}}}
\newcommand{\bX}{{\textbf {X}}}
\newcommand{\bx}{{\textbf {x}}}
\newcommand{\bt}{{\textbf {t}}}
\newcommand{\by}{{\textbf {y}}}
\newcommand{\bz}{{\textbf {z}}}
\newcommand{\bxi}{{\boldsymbol{\xi}}}
\newcommand{\ba}{{\textbf {a}}}
\newcommand{\be}{\begin{equation}}
\newcommand{\ee}{\end{equation}}
\newcommand\smallO{
  \mathchoice
    {{\scriptstyle\mathcal{O}}}
    {{\scriptstyle\mathcal{O}}}
    {{\scriptscriptstyle\mathcal{O}}}
    {\scalebox{.7}{$\scriptscriptstyle\mathcal{O}$}}
  }
\newcommand{\bea}{\begin{align}}
\newcommand{\eea}{\end{align}}
\newcommand{\norm}[1]{\left\lVert#1\right\rVert}
\newenvironment{proofof}[1]{\textbf{{\em Proof of #1.}}}{\hfill \rule{2.5mm}{2.5mm} 
 }
\newtheorem{theorem}{Theorem}
\newtheorem{defn}{Definition}
\newtheorem{result}{Result}
\newtheorem{assumption}{Assumption}
\newtheorem{lemma}[theorem]{Lemma}
\newtheorem{proposition}[theorem]{Proposition}
\newtheorem{model}{\textbf{Model}}
\DeclareMathAlphabet{\varmathbb}{U}{bbold}{m}{n}
\newcommand{\EE}{\mathbb{E}}
\newcommand{\bbR}{\mathbb{R}}
\newcommand{\bbP}{\mathbb{P}}
\newcommand{\bbN}{\mathbb{N}}
\newcommand{\bbC}{\mathbb{C}}
\DeclareMathOperator*{\extr}{\mathrm{extr}}
\begin{document}
\setcounter{tocdepth}{2}

\title{Landscape Complexity for the Empirical Risk of Generalized Linear Models}
\author{Antoine Maillard$^{\star,\otimes}$, G\'erard Ben Arous$^\dagger$, Giulio Biroli$^\star$}
\date{}
\maketitle
{\let\thefootnote\relax\footnote{
\!\!\!\!\!\!\!\!\!\!
$\star$ Laboratoire de Physique de l’Ecole Normale Sup\'erieure, ENS, Universit\'e PSL, CNRS, Sorbonne Universit\'e, Universit\'e Paris-Diderot, Sorbonne Paris Cit\'e, Paris, France. \\
$\dagger$ Courant Institute of Mathematical Sciences,  New York University,  251 Mercer Street,  New York,  NY 10012.\\
$\otimes$ To whom correspondence shall be sent: \href{mailto:antoine.maillard@ens.fr}{antoine.maillard@ens.fr}
}}
\setcounter{footnote}{0}

\begin{abstract}%
  We present a method to obtain the average and the typical value of the 
  number of critical points of the empirical risk landscape for generalized linear estimation problems and variants. 
  This represents a substantial extension of previous applications of the Kac-Rice method since it allows to analyze the critical points of high dimensional non-Gaussian random functions. 
  Under a technical hypothesis, we obtain a rigorous explicit variational formula for the \emph{annealed complexity}, which is the logarithm of the average number of critical points at fixed value of the empirical risk. This result is simplified, and extended, using the non-rigorous Kac-Rice replicated method from theoretical physics. In this way we find an explicit variational formula for the \emph{quenched complexity}, which is generally different from its annealed counterpart, and allows to obtain the number of critical points for typical instances up to exponential accuracy. 
\end{abstract}

\section{Introduction and main results}\label{sec:introduction}

\subsection{Introduction}

Characterizing the landscape of the empirical risk
is a key issue in several contexts. Many current 
machine learning problems are both non-convex and high-dimensional. 
In these cases, the analysis of optimization algorithms, such as gradient 
descent and its stochastic variants, represents a very hard challenge. In recent years, there has been a series of works that developed a landscape-based approach to tackle this challenge. The key idea is to study the statistical properties of the empirical risk landscape, and to use these findings to obtain results on the performance of algorithms. Without the 
aim of being exhaustive this research avenue includes analysis of the landscape of 
 neural networks, matrix completion, tensor factorization and tensor principal component analysis \citep{fyodorov2004complexity,fyodorov2012critical,kawaguchi2016deep,soudry2016no,ge2016matrix,1611.01540,bhojanapalli2016global,park2016non,du2017gradient,ge2017optimization,ge2017no,lu2017depth,ling2018landscape,1711.05424,ros2019complex,mannelli2019passed,mannelli2019afraid,arous2018algorithmic,biroli2019iron}.
The majority of these works identifies the region of parameters  where the landscape is ``easy'', i.e.\ it focuses on the regime where there shouldn't be any bad local minima and it proves that indeed there are none. However, gradient descent and other landscape-based algorithms are often observed to work even very far from the region described above where the landscape can be proved mathematically to be ``easy''. A possible reason is that the bounds obtained rigorously are not tight enough. Another, more interesting, is that the landscape is ``hard'', i.e.\ spurious minima are present, but their basins of attraction are small and the dynamics is able to avoid them \citep{mannelli2019passed,mannelli2019afraid}.  

Here we develop a general method that allows to study and focus directly on the ``hard'' regime, where the empirical risk 
displays a huge number of bad minima. Our aim is to obtain explicit formulas for the number of critical points of the empirical risk landscape, to characterize their indices, and the Hessian associated to them. For a given problem, this will allow to identify the topological transition where the landscape becomes ``easy'', and to analyze very precisely the ``hard'' regime.
In recent years, there has been remarkable progress 
on this subject in the field of spin-glasses and probability theory through the Kac-Rice method \citep{Fyodorov2007,PhysRevLett.98.150201,1110.5872,1003.1129,subag2017complexity,1711.05424,ros2019complex}. This line of research has allowed to put on a firm ground results previously obtained in the physics literature
\citep{0022-3719-13-31-006,kurchan1991replica,crisanti1995thouless,0305-4470-32-5-004}, and it has unveiled important relationships with random matrix theory. Its main domain of application has been the study of the landscapes associated to Gaussian random functions. 
Its extension to tackle the case of non-Gaussian high-dimensional random functions is an open problem---one that is crucial to address in order to characterize the critical points of the empirical risk.

Here we present an important step forward: an extension of the Kac-Rice method to compute the number of critical points of the empirical risk arising in generalized linear estimation problems \citep{nelder1972generalized,mccullagh,barbier2017phase}. Our approach contains both mathematically rigorous analysis and exact results obtained by theoretical physics methods. 
We work out a rigorous formula (under a technical assumption) for the logarithm of the average of the number of the critical points, called henceforth \emph{annealed complexity}. This quantity already provides interesting information in itself. A more refined quantity, but much more challenging to be analyzed rigorously, is the average of the logarithm of the number of critical points. 
This so-called \emph{quenched complexity} is truly representative of the typical properties of the landscape for a given instance of the empirical risk and is generically different from its annealed counterpart which is instead dominated by rare instances, see for example 
the case of tensor PCA \citep{ros2019complex}. 
In order to obtain the quenched complexity, we develop a non-rigorous but exact approach that combines the Kac-Rice method with the replica theory used by physicists \citep{ros2019complex}. 

Our main results are two explicit formulas for the quenched and annealed complexities. They open the way towards a full fledged characterization of the landscape of generalized estimation problems and variants, and the analysis of landscape-based algorithms, such as gradient descent. Note that these models 
can also be viewed as the simplest neural network (single-node) in a teacher-student setting \citep{engel2001statistical}. 

\subsection{Main results}
We consider two classes of high-dimensional random functions. The first one 
is a kind of energy that arises in a simple model of neural networks (the perceptron, cf.\ \cite{engel2001statistical}) and in mean-field glass models \citep{franz2016simplest}:
\begin{align}\label{eq:model_nosignal}
    L_1(\bx) &\equiv \frac{1}{m} \sum_{\mu=1}^m  \phi(\bxi_\mu \cdot \bx),
\end{align}
where $\phi : \bbR \to \bbR$ is a smooth activation function (the hypotheses on $\phi$ are precised later), $\bx \in \mathbb{S}^{n-1}$, the unit sphere in $n$ dimensions, and $\bxi_\mu$ are i.i.d.\ random variables generated from the standard Gaussian distribution in $\bbR^n$.
The second class of functions we will consider are related to the loss functions of generalized linear models (GLMs) \cite{nelder1972generalized,mccullagh}.
In generalized linear estimation an observer has to infer a hidden vector $\bx^\star \in \mathbb{S}^{n-1}$ from the observation of the $m$-dimensional output vector $\bY = \{\phi(\bxi_\mu \cdot \bx^\star)\}_{\mu=1}^m$.
In this sense, the GLMs generalize the usual linear regression by allowing the output function to be non-linear\footnote{One can also consider GLMs in which the output function is stochastic. Here, we restrict to deterministic outputs.}.
We consider here \emph{random} GLMs, meaning that the data (or measurement) matrix $\bxi$ is taken random, with an i.i.d.\ standard Gaussian distribution, and we assume that the function $\phi$ and the data matrix 
$\bxi$ are given to the observer. This naturally leads to the mean square loss $L_2$:
\begin{align}\label{eq:model_signal}
    L_2(\bx) &\equiv \frac{1}{2m} \sum_{\mu=1}^m \left[\phi(\bxi_\mu \cdot \bx^\star) - \phi(\bxi_\mu \cdot \bx)\right]^2,
\end{align}
GLMs (and their random versions) arise in many different areas of
statistics, such as e.g.\ compressed sensing, phase retrieval, logistic regression, or in random artificial neural networks; we refer to \cite{barbier2017phase}
for a review of its numerous applications. 

Here, we are interested in the statistics of the number of critical points, or more precisely the complexity of the associated empirical risk (\ref{eq:model_signal}). 
For any open intervals $B\subseteq \ensuremath{\mathbb{R}_+}$ and $Q \subseteq (-1,1)$, we consider the (random) number
$\mathrm{Crit}_{n,L_2}(B,Q)$ of critical points of the function $L_2$ with loss value in $B$ and overlap  with the signal $q \equiv \bx \cdot \bx^\star$ in $Q$:
\begin{align}
      \label{defWk}
  \mathrm{Crit}_{n,L_2}(B,Q)\equiv 
  \sum_{\bx: \mathrm{grad} L_2(\bx) = 0 } 
  \mathds{1}\{ L_2(\bx) \in B, \ \bx \cdot \bx^\star \in Q\} .
\end{align}
Here $\mathrm{grad}$ is the Riemannian gradient on $\mathbb{S}^{n-1}$. For $L_1$ we define the similar quantity $\mathrm{Crit}_{n,L_1}(B)$, dropping the notion of overlap. 
The average $\EE \,\mathrm{Crit}_{n,L_2}(B,Q)$ is the quantity that can be analyzed rigorously. Its logarithm divided by $n$ is called the 
\emph{annealed complexity}. 
However, since the random variable $\mathrm{Crit}_{n,L_2}(B,Q)$ is in general strongly fluctuating and scales 
exponentially with $n$, its typical value is different from the mean and can be obtained by taking the exponential of $\EE \ln \mathrm{Crit}_{n,L_2}(B,Q)$. This last quantity (divided by $n$) is called the \emph{quenched complexity}. It is in general different from the annealed one, with very few exceptions \citep{subag2017complexity,crisanti1995thouless}. 

Our main results consist 
in explicit formulas for the annealed and quenched complexities for $L_1$ and $L_2$. The formula for the annealed case is obtained by a rigorous Kac-Rice method, whereas the one for the quenched complexity is obtained by theoretical physics methods combining the Kac-Rice method with replica theory.
We consider the limit $n,m \to \infty$ with $m/n \to \alpha > 1$, a setting called in the statistical physics literature the thermodynamic limit. The condition $\alpha > 1$ is essential, as can be seen for instance in eq.~\eqref{eq:model_nosignal} : if $m < n$, for each realization of $\{\bxi_\mu\}$, the function $L_1$ has an infinite number of critical points in the set of unit-norm $\bx$ orthogonal to all the $\{\bxi_\mu\}$, and counting the critical points in this case is meaningless (or one has to quotient the space to take care of the degeneracy). Our results hold for many classical activation functions $\phi$, such as e.g.\ the hyperbolic tangent, the arctangent, the sigmoid, or a smoothed and leaky version of the ReLU activation function\footnote{The precise hypotheses on the activation function $\phi$ are precised in Section~\ref{sec:annealed}.}.
Henceforth we shall denote $\bbC_+$ the strict upper-half complex plane, and ${\cal M}(\bbR^k)$ the set of probability measures on $\bbR^k$. 
For two probability measures $\mu$ and $\nu$ we define the relative entropy $H(\mu|\nu) \equiv \int \ln (\mathrm{d} \mu / \mathrm{d} \nu) \mathrm{d} \mu$ if $\mu$ is absolutely continuous with respect to $\nu$, and $+\infty$ otherwise. 
Finally, $\mu_G$ is a generic notation for the standard Gaussian measure on any $\bbR^k$. 

We can now present our main results. 
\begin{theorem}[The annealed complexity of $L_1$]\label{thm:annealed_nosignal} Let $B \subseteq \bbR$ a non-empty open interval and denote ${\cal M}_\phi(B)$ the set of probability measures $\nu$ on $\bbR$ such that $\int \nu(\mathrm{d}t) \phi(t)\in B$. 
Assume that the technical Assumption~\ref{hyp:technical} holds.
Given:
\begin{itemize}
\item $\mathcal{E}_\phi(\nu) \equiv \ln \left[\int \nu(\mathrm{d}x) \phi'(x)^2\right]$,
\item $t_\phi(\nu) \equiv \int \nu(\mathrm{d}x) x \phi'(x)$,
\item Let $\bz \in \bbR^{n \times m}$ an i.i.d.\ standard Gaussian matrix, and $\by \in \bbR^m$ a vector with components taken i.i.d.\ from a probability measure $\nu$. 
Let $D^{(\nu)}$ the diagonal matrix of size $m$ with elements $D^{(\nu)}_{\mu} = \phi''(y_\mu)$.
We define $\mu_{\alpha,\phi}[\nu]$ as the asymptotic spectral measure of $\bz D^{(\nu)} \bz^\intercal / m$.
\item $\kappa_{\alpha,\phi}(\nu,C) \equiv \int  \mu_{\alpha,\phi}[\nu](\mathrm{d}x) \ln \left|x - C\right|$,
\end{itemize}
one has\footnote{A fully rigorous statement would imply a lower and an upper bound given by a supremum over the adherence and the interior of ${\cal M}_\phi(B)$. For reasons of lightness and clarity of the presentation we write it in the simpler presented form.}:
\begin{align*}
\lim_{n \to \infty} \frac{1}{n}\ln \EE \ \mathrm{Crit}_{n,L_1}(B) &= \frac{1 + \ln \alpha}{2}  + \sup_{\nu \in {\cal M}_\phi(B)} \,  \left[ - \frac{1}{2}\mathcal{E}_\phi(\nu) + \kappa_{\alpha,\phi}(\nu,t_\phi(\nu)) - \alpha H(\nu|\mu_G)\right].
\end{align*}
\end{theorem}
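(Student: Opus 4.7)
The plan is to apply the Kac-Rice formula to the Riemannian gradient of $L_1$ on $\mathbb{S}^{n-1}$ and then exploit the rotational invariance of the law of $\{\bxi_\mu\}$ to reduce the calculation to a single base point. Kac-Rice gives
\begin{align*}
\EE\,\mathrm{Crit}_{n,L_1}(B) = \int_{\mathbb{S}^{n-1}} \varphi_{\mathrm{grad}\,L_1(\bx)}(\bzero)\,\EE\!\left[|\det\mathrm{Hess}\,L_1(\bx)|\,\mathds{1}\{L_1(\bx)\in B\}\,\big|\,\mathrm{grad}\,L_1(\bx)=\bzero\right]d\sigma(\bx),
\end{align*}
so I can fix $\bx=\bbe_n$ and multiply by $|\mathbb{S}^{n-1}|$. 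Setting $y_\mu=(\bxi_\mu)_n$ and collecting the remaining $n-1$ coordinates of $\bxi_\mu$ as the columns of a matrix $\bZ\in\bbR^{(n-1)\times m}$, the variables $\{y_\mu\}$ and $\bZ$ are independent standard Gaussians. One checks that $L_1(\bbe_n)=\int\hat{\nu}_m\,\phi$ with $\hat{\nu}_m=\tfrac{1}{m}\sum_\mu\delta_{y_\mu}$; that the normal component of $\nabla L_1(\bbe_n)$ gives the Lagrange multiplier $\lambda=t_\phi(\hat{\nu}_m)$; that the tangent gradient is $\tfrac{1}{m}\bZ\boldsymbol{\phi}'$ with $\boldsymbol{\phi}'=(\phi'(y_\mu))_\mu$; and that the tangent Hessian equals $\tfrac{1}{m}\bZ\bD\bZ^\intercal-\lambda\bI_{n-1}$ with $\bD=\mathrm{diag}(\phi''(y_\mu))$.

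Conditioning first on $\{y_\mu\}$, the tangent gradient is centered Gaussian with covariance $(\|\boldsymbol{\phi}'\|^2/m^2)\bI_{n-1}$, so its density at $\bzero$ equals $(2\pi)^{-(n-1)/2}(m^2/\|\boldsymbol{\phi}'\|^2)^{(n-1)/2}$. Combining this with $|\mathbb{S}^{n-1}|$ via Stirling and using $\|\boldsymbol{\phi}'\|^2=m\int\hat{\nu}_m\,\phi'^2$, the deterministic prefactors assemble at leading exponential order into $\exp\!\left[\tfrac{n}{2}(1+\ln\alpha)-\tfrac{n}{2}\mathcal{E}_\phi(\hat{\nu}_m)+o(n)\right]$. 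The conditioning event $\bZ\boldsymbol{\phi}'=\bzero$ is a rank-one restriction of $\bZ$, which leaves the limiting empirical spectral measure of $\tfrac{1}{m}\bZ\bD\bZ^\intercal$ unchanged; that limit is precisely $\mu_{\alpha,\phi}[\hat{\nu}_m]$ from the statement. Under uniform integrability of the log-determinant, one therefore obtains $\tfrac{1}{n}\ln|\det\mathrm{Hess}|\to\kappa_{\alpha,\phi}(\hat{\nu}_m,t_\phi(\hat{\nu}_m))$, both pointwise in $\hat{\nu}_m$ and in expectation over $\bZ$.

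It remains to average over the i.i.d.\ standard Gaussian $\{y_\mu\}$. Sanov's theorem furnishes an LDP for $\hat{\nu}_m$ at rate $m=\alpha n$ with rate function $H(\,\cdot\,|\mu_G)$, producing the $\alpha H(\nu|\mu_G)$ contribution. A Varadhan-type argument, based on continuity of $\nu\mapsto -\tfrac{1}{2}\mathcal{E}_\phi(\nu)+\kappa_{\alpha,\phi}(\nu,t_\phi(\nu))$ on sublevel sets of the relative entropy and on the openness of the constraint $\int\nu\,\phi\in B$, then yields the claimed variational formula.

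I expect two principal technical obstacles. First, showing that $\tfrac{1}{n}\ln|\det(\tfrac{1}{m}\bZ\bD\bZ^\intercal-\lambda\bI)|$ converges in a mode strong enough for expectations demands uniform integrability of the log-determinant near the smallest eigenvalue: since $\lambda=t_\phi(\hat{\nu}_m)$ may lie inside the support of $\mu_{\alpha,\phi}[\nu]$ and $\ln|\cdot|$ is singular at the origin, one needs quantitative lower-tail control on the smallest singular value of the shifted Wishart-type matrix, combined with integrability of $t\mapsto\ln|t-\lambda|$ against the limiting spectral law. Second, the combined Sanov/Varadhan step requires continuity estimates for $\nu\mapsto(\mathcal{E}_\phi(\nu),\mu_{\alpha,\phi}[\nu],t_\phi(\nu),\int\nu\,\phi)$ on level sets of $H(\,\cdot\,|\mu_G)$, which is non-trivial because of their nonlinear dependence on $\nu$; this is the point at which the technical hypotheses on $\phi$ enter the argument, presumably via truncation of $\nu$ to compactly supported approximants on which both bounds become transparent.
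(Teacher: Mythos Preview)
Your proposal is correct and follows essentially the same route as the paper: Kac--Rice on the sphere, reduction by rotational invariance to a single base point, conditioning on $\by=(y_\mu)$, Gaussian computation of the gradient density, a rank-one argument for the conditioning on $\mathrm{grad}\,L_1=0$, and finally Sanov plus Varadhan over the empirical measure of $\by$. The two technical obstacles you flag are exactly the ones the paper isolates: the first is handled by a cutoff $\epsilon_n=n^{-\delta}$ on small eigenvalues together with truncation of the entries of $\bZ$ and Lipschitz concentration of the resulting linear spectral statistic, and the second by a moment condition ensuring Varadhan's lemma applies.
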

\paragraph{A note on free probability}
Interestingly, the measure $\mu_{\alpha,\phi}[\nu]$ can be interpreted as the free multiplicative convolution of the Marchenko-Pastur law (at ratio $\alpha$) and the asymptotic spectral distribution of the matrix $D^{(\nu)}$, cf.\ e.g.\  \cite{voiculescu1987multiplication,anderson2010introduction}\footnote{Free multiplication is usually defined for positively-supported measures, however one can generalize it here by explicitly separating the positive and negative parts of $\phi''$ (we can show freeness of the 
 resulting two random matrices).}. We describe in Section~\ref{sec:variational} how to explicitly compute the density of $\mu_{\alpha,\phi}[\nu]$, 
 or its linear spectral statistics (as e.g.\ $\kappa_{\alpha,\phi}(\nu,C)$), via the computation of its Stieljtes transform. 

We turn to our second annealed result:
\begin{theorem}[The annealed complexity of $L_2$]\label{thm:annealed_signal}
Let $B \subseteq \bbR_+$ and $Q \subseteq (-1,1)$ two non-empty open intervals. For $q \in (-1,1)$ we denote ${\cal M}_\phi(B,q)$ the set of probability measures $\nu$ on $\bbR^2$ such that
\begin{align}
\begin{cases}
 &\int \nu(\mathrm{d}x,\mathrm{dy}) \, y \, \phi'(x) \left[\phi\left(q x+ \sqrt{1-q^2} y\right) - \phi(x)\right] = 0 , \\
&\int \nu(\mathrm{d}x,\mathrm{dy}) \left[\phi\left(q x+ \sqrt{1-q^2} y\right) - \phi(x)\right]^2 \in B.
\end{cases}
\end{align}
Given:
\begin{itemize}
\item $\mathcal{E}_\phi(q,\nu) \equiv \ln \left[ \int \nu(\mathrm{d}x,\mathrm{dy}) \, \phi'(x)^2 \left[\phi\left(q x+ \sqrt{1-q^2} y\right) - \phi(x)\right]^2\right]$,
\item $t_\phi(q,\nu) \equiv \int \nu(\mathrm{d}x,\mathrm{dy}) \, x \, \phi'(x) \left[\phi(x) - \phi\left(q x+ \sqrt{1-q^2} y\right) \right]$,
\item $f_{q}$ is a function from $\bbR^2$ to $\bbR$ defined by:
\begin{align}
f_q(x,y) &\equiv \phi'\left(x\right) ^2 -\phi''\left(x\right) \left[\phi\left(q x+ \sqrt{1-q^2} y\right)- \phi\left(x\right)\right],
\end{align}
\item Let $\bz \in \bbR^{n \times m}$ an i.i.d.\ standard Gaussian matrix, and $\bY \in \bbR^{m \times 2}$ with components taken i.i.d.\ from a probability measure $\nu$ on $\bbR^2$. 
Let $D^{(\nu,q)}$ the diagonal matrix of size $m$ with elements $D^{(\nu,q)}_{\mu} = f_q(Y_\mu)$.
We define $\mu_{\alpha,\phi}[q,\nu]$ as the asymptotic spectral measure of $\bz D^{(\nu,q)} \bz^\intercal / m$.
\item $\kappa_{\alpha,\phi}(q,\nu) \equiv \int \mu_{\alpha,\phi}[q,\nu](\mathrm{d}x) \ln \left|x - t_\phi(q,\nu)\right| $,
\end{itemize}
We moreover assume that the counterpart of Assumption~\ref{hyp:technical} for $L_2$ holds.
Then one has:
\begin{align}
\lim_{n \to \infty} \frac{1}{n}\ln \EE \ \mathrm{Crit}_{n,L_2}(B,Q) &= \frac{1+\ln \alpha}{2} \\
&\hspace{-1cm} + \sup_{q \in Q} \sup_{\nu \in \mathcal{M}_\phi(B,q)} \left[\frac{1}{2} \ln(1-q^2) -\frac{1}{2} \mathcal{E}_\phi(q,\nu) + \kappa_{\alpha,\phi}(q,\nu) - \alpha H(\nu|\mu_G) \right]. \nonumber
\end{align}
\end{theorem}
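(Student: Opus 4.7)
The proof proceeds along the same general lines as Theorem~\ref{thm:annealed_nosignal}, with the essential novelty being that one must keep track of the overlap $q = \bx \cdot \bx^\star$ with the planted signal. I start from the Kac--Rice formula on the sphere:
$$\EE\, \mathrm{Crit}_{n,L_2}(B,Q) = \int_{\mathbb{S}^{n-1}} \EE\bigl[|\det \mathrm{Hess}\, L_2(\bx)|\, \mathds{1}\{L_2(\bx) \in B,\, \bx \cdot \bx^\star \in Q\}\,\bigm|\, \mathrm{grad}\, L_2(\bx) = 0\bigr]\, p_{\mathrm{grad}\, L_2(\bx)}(0)\, \mathrm{d}\bx.$$
By rotational invariance I fix $\bx^\star = e_1$. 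Decomposing $\bx \in \mathbb{S}^{n-1}$ into its overlap $q$ with $\bx^\star$ and a direction in the orthogonal complement yields an integration over $q \in Q$ with Jacobian proportional to $(1-q^2)^{(n-2)/2}$, producing the $\frac{1}{2}\ln(1-q^2)$ contribution in the final formula.

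The sufficient statistic is then the empirical measure $\nu_m \equiv \frac{1}{m}\sum_\mu \delta_{(v_\mu, w_\mu)}$, where $v_\mu = \bxi_\mu \cdot \bx$ and $w_\mu = (\bxi_\mu \cdot \bx^\star - q v_\mu)/\sqrt{1-q^2}$. Under the Gaussian law of $\bxi$, the pairs $(v_\mu, w_\mu)$ are i.i.d.\ standard Gaussian in $\bbR^2$, so Sanov's theorem yields a large deviation principle for $\nu_m$ at rate $m = \alpha n$, giving the entropy term $-\alpha H(\nu|\mu_G)$. The loss value $L_2(\bx)$ is the continuous functional $\frac{1}{2}\int \nu_m(\mathrm{d}x,\mathrm{d}y)[\phi(qx+\sqrt{1-q^2}y)-\phi(x)]^2$, so the constraint $L_2(\bx) \in B$ becomes, for typical $\nu_m \approx \nu$, the second constraint appearing in $\mathcal{M}_\phi(B,q)$.

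The central step is the conditional analysis of the Kac--Rice integrand. The tangent space at $\bx$ splits into the one-dimensional \emph{planted} line spanned by the tangential projection of $\bx^\star$ and its $(n-2)$-dimensional orthogonal complement. In the planted direction, the vanishing-gradient condition becomes the constraint $\int \nu(\mathrm{d}x,\mathrm{d}y)\, y\, \phi'(x)[\phi(qx+\sqrt{1-q^2}y)-\phi(x)] = 0$ that defines $\mathcal{M}_\phi(B,q)$. In the orthogonal $(n-2)$-dimensional subspace, the gradient is a centered Gaussian vector whose covariance is proportional to $\frac{1}{m}\int \phi'(x)^2[\phi(qx+\sqrt{1-q^2}y)-\phi(x)]^2\,\nu(\mathrm{d}x,\mathrm{d}y)\cdot \mathrm{Id}$; its density at the origin contributes $-\frac{1}{2}\mathcal{E}_\phi(q,\nu)$ and, once combined with the sphere volume and the remaining Gaussian prefactors, the universal constant $\frac{1+\ln\alpha}{2}$. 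The Hessian conditioned on the gradient vanishing is, up to a finite-rank perturbation, $\bz D^{(\nu,q)} \bz^\intercal/m - \lambda(\bx)\,\mathrm{Id}$, with the Lagrange multiplier $\lambda(\bx)$ (the projection of the ambient gradient in the radial direction) concentrating on $t_\phi(q,\nu)$. Its expected log-determinant, in the large-$n$ limit, equals $n\cdot\kappa_{\alpha,\phi}(q,\nu)$ via the log-potential of the spectral measure $\mu_{\alpha,\phi}[q,\nu]$, identified as the generalized free multiplicative convolution of the Marchenko--Pastur law and the law of $f_q(v,w)$ under $\nu$. Varadhan's lemma then produces the supremum over $(q,\nu)$ claimed in the theorem.

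The main obstacle will be the rigorous handling of two interlocking technicalities. First, the conditioning on $\{\mathrm{grad}\, L_2(\bx)=0\}$ couples the $\bxi_\mu$'s through a low-dimensional linear constraint; decomposing this conditioning into independent pieces while simultaneously tracking the empirical measure $\nu_m$ requires carefully separating the radial, planted, and ``bulk'' components of each $\bxi_\mu$. Second, exchanging $\EE$ and $\ln$ in $\EE \ln|\det \mathrm{Hess}|$ requires either a quantitative small-ball bound on the smallest singular value of $\bz D^{(\nu,q)} \bz^\intercal/m - t_\phi(q,\nu)\,\mathrm{Id}$ holding uniformly in a $\nu$-neighborhood, or a truncation and regularization argument controlled separately. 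A subsidiary subtlety, already present in Theorem~\ref{thm:annealed_nosignal}, is that $D^{(\nu,q)}$ may have entries of both signs, so the free-probability identification must be understood in the generalized sense noted in the discussion following Theorem~\ref{thm:annealed_nosignal}.
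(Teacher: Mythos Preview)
Your proposal is correct and follows essentially the same route as the paper's own sketch in Appendix~\ref{subsec_app:L2}: integrate over the overlap $q=\bx\cdot\bx^\star$, condition on the pair $(a_\mu,b_\mu)=(\bxi_\mu\cdot\bx,\,(1-q^2)^{-1/2}(\bxi_\mu\cdot\bx^\star-q\,a_\mu))$ (your $(v_\mu,w_\mu)$), split the tangent space into the planted line and its $(n-2)$-dimensional complement to obtain the $\delta$-constraint and the $\mathcal{E}_\phi$ term respectively, identify the conditioned Hessian with $\bz D^{(\nu,q)}\bz^\intercal/m - t_\phi(q,\nu)\mathrm{I}_{n-2}$ up to finite rank, and conclude via the concentration of the log-determinant together with Sanov and Varadhan. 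The technical obstacles you flag (small-ball control of the determinant, handling the conditioning) are precisely those carried over from the proof of Theorem~\ref{thm:annealed_nosignal}.
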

The proof of Theorem~\ref{thm:annealed_nosignal} is presented in Section~\ref{sec:annealed}. The proof  
of Theorem~\ref{thm:annealed_signal} is a straightforward generalization, and sketched in Appendix~\ref{subsec_app:L2}.
The variational problems in Theorems~\ref{thm:annealed_nosignal} and \ref{thm:annealed_signal} are challenging, as they imply an optimization on a set of measures, and they involve transforms of this measure that are very hard to access numerically. In Section~\ref{sec:variational} we present 
a drastic simplification: a heuristic calculation that allows one to reduce the supremum over the probability measure $\nu$ to a much more straightforward optimization over a small number of parameters. 

As we have already stressed, the annealed complexity, although interesting in itself, is generically not representative of the landscape corresponding to a given typical instance of the empirical risk. In order to 
obtain the value of the quenched complexity we use the replicated Kac-Rice method, which is an extension to non-Gaussian functions of the one developed in \cite{ros2019complex}. Although the replica method is non-rigorous, it is considered an exact method in theoretical physics and it has been proven to give correct results for spin-glasses and inference problems \citep{Talagrand2006,barbier2017phase}.
We have obtained an explicit formula\footnote{Here we used a replica symmetric structure, 
which is correct in many cases, and a very good approximation in others were replica symmetry has to be broken \citep{mezard1987spin}.} for the quenched complexity of $L_1$ and $L_2$ at fixed values of the empirical risk, and overlap with the solution (in the $L_2$ case).

For $L_1$, using the notations of Theorem~\ref{thm:annealed_nosignal} we have:
\begin{result}[Quenched complexity of $L_1$]\label{result:quenched_nosignal}
Let $B \subseteq \bbR$ an open interval.
{\small
\begin{align*}
    &\lim_{n \to \infty}  \frac{1}{n} \, \EE \ln \mathrm{Crit}_{n,L_1}(B) =  \frac{\ln \alpha - \alpha \ln 2\pi}{2} + \sup_{\substack{\nu \in {\cal M}_\phi(B) \\ q \in (0,1)}} \extr
\Bigg\{\kappa_{\alpha,\phi}(\nu,C) + \frac{1-\alpha}{2} \ln(1-q)  \\
 & + \frac{1-\alpha q}{2 (1-q)} - \int \nu(\mathrm{d}\lambda) g(\lambda) - \frac{A \hat{A}-a \hat{a}}{2} + C (q \hat{c} - \hat{C}) - \frac{1}{2} \ln[A-a] - \frac{a}{2 (A-a)} + \alpha \int_{\bbR^4} {\cal D}\bxi \ln I(\bxi) \Bigg\}.
\end{align*}
}
Here, $\bxi = (\xi_q,\xi_a,\xi_c,\xi_c')$ and ${\cal D}\bxi$ is the standard Gaussian probability measure on $\bbR^4$. The $\extr$ denotes extremization with respect to all variables $(A,\hat{A},a,\hat{a},C,\hat{C},\hat{c},\{g(\lambda)\})$. We denoted
\begin{align*}
    I(\bxi) &\equiv \int_\bbR \mathrm{d} \lambda \, e^{-\frac{\lambda^2}{2 (1-q)} +  \frac{g(\lambda)}{\alpha} + \frac{\hat{A} - \hat{a}}{2\alpha} \phi'(\lambda)^2 + \frac{\hat{C} - \hat{c}}{\alpha} \phi'(\lambda) \lambda + \frac{\sqrt{q}}{1-q} \xi_q \lambda + \sqrt{\frac{\hat{a}}{\alpha}} \xi_a \phi'(\lambda) + \sqrt{\frac{\hat{c}}{2\alpha}} [\phi'(\lambda)(\xi_c + i \xi_c') + \lambda (\xi_c - i\xi_c') ] }.
\end{align*}
\end{result}
In this formula, the notation $\mathrm{extr}$ denotes that one should set the partial derivatives with respect to the involved variables to zero.
This notation arises from the replica calculation, which mixes saddle-point computations with Lagrange multipliers 
associated to certain constraints, and the precise meaning of this extremization (as a supremum or infimum) would have to be clarified by a more rigorous method. 
On a numerical point of view, one would have to solve the associated saddle-point equations, so that 
this precise meaning is not crucial for applications.
We can state a very similar result for $L_2$:
\begin{result}[Quenched complexity of $L_2$]\label{result:quenched_signal}
Let $B \subseteq \bbR, Q \subseteq (-1,1)$ two open intervals and:
\begin{itemize}
    \item For $m \in (-1,1)$, $\mathcal{M}_\phi(B,m)$ is the space of probability measures $\nu$ on $\bbR^2$ that satisfy:
    \begin{align}
    \begin{cases}
     &\frac{1}{2} \int \nu(\mathrm{d}\lambda^0, \mathrm{d}\lambda) [\phi(\lambda) - \phi(\lambda^0)]^2 \in B, \\
    &\int \nu(\mathrm{d}\lambda^0, \mathrm{d}\lambda) \phi'(\lambda) [\phi(\lambda)-\phi(\lambda^0)] (\lambda^0 - m \lambda) = 0.
    \end{cases}
    \end{align}
    \item Let $f(x,y) \equiv \phi''(y)[\phi(y) - \phi(x)] + \phi'(y)^2$. Let $\bz \in \bbR^{n \times m}$ an i.i.d.\ standard Gaussian matrix, and $\bY \in \bbR^{m \times 2}$ with components taken i.i.d.\ from a probability measure $\nu$ on $\bbR^2$. 
    Let $D^{(\nu)}$ the diagonal matrix of size $m$ with elements $D^{(\nu)}_{\mu} = f(Y_\mu)$.
    We define $\mu_{\alpha,\phi}[\nu]$ as the asymptotic spectral measure of $\bz D^{(\nu)} \bz^\intercal / m$.
\item $\chi_{\alpha,\phi}(\nu,C) \equiv \int \mu_{\alpha,\phi}[\nu](\mathrm{d}x) \ln |x-C|$.
\end{itemize}
One has:
\begin{align*}
      &\lim_{n \to \infty}\frac{1}{n} \EE \ln \mathrm{Crit}_{n,L_2}(B,Q) = \sup_{\substack{m \in Q \\ q \in (0,1)}} \sup_{\nu \in {\cal M}_\phi(B,m)} \extr \left[\frac{\ln \alpha - \alpha \ln 2\pi}{2} + \chi_{\alpha,\phi}(\nu,C)  \right. \\
  &\left. \hspace{0.5cm}  + \frac{1- \alpha q - m^2}{2(1-q)} + \frac{1-\alpha}{2} \ln(1-q)- \frac{1}{2} \ln (A-a)  - \frac{a}{2 (A-a)} - \frac{A \hat{A}}{2} + \frac{a \hat{a}}{2}  \right. \\
  & \left.  \hspace{0.5cm} - C_0 \hat{C}_0 - C \hat{C} + c \hat{c}  - \int \nu(\mathrm{d}\lambda^0,\mathrm{d}\lambda) g(\lambda^0,\lambda) + \alpha \int_{\bbR^4 \times \bbR} \mathcal{D}\bxi \, \mathcal{D}\lambda^0 \ln I(\lambda^0,\bxi) \right].
\end{align*}
The extremum is made over all the variables $(A,a,C_0,C,c,\hat{A},\hat{a},\hat{C},\hat{c},\hat{C}_0,\{g(\lambda^0,\lambda)\})$.
${\cal D}$ denotes the standard Gaussian measure, and the variables $C_0,c,C$ are related by the additional constraint
\begin{align*}
    - m (1-q) C_0 - (q-m^2) C + (1-m^2) c &= 0.
\end{align*}
$I(\lambda^0,\bxi)$ is defined as, with $\bxi \equiv (\xi_q,\xi_a,\xi_c,\xi_c')$:
\begin{align*}
   I(\lambda^0,\bxi) &\equiv \int_\bbR \mathrm{d}\lambda \, e^{\frac{m}{1-q} \lambda^0 \lambda - \frac{\lambda^2}{2(1-q)} + \frac{\sqrt{q-m^2}}{1-q}\xi_q\lambda + \frac{g(\lambda^0,\lambda)}{\alpha} + \frac{\hat{C}_0}{\alpha} \phi'(\lambda)[\phi(\lambda)-\phi(\lambda^0)]\lambda^0 + \frac{\hat{C}-\hat{c}}{\alpha} \lambda \phi'(\lambda) [\phi(\lambda)-\phi(\lambda^0)] } \\
  & \hspace{0.5cm} e^{\frac{\hat{A}-\hat{a}}{2\alpha} \phi'(\lambda)^2 [\phi(\lambda) - \phi(\lambda^0)]^2 + \sqrt{\frac{\hat{a}}{\alpha}} \xi_a \phi'(\lambda)[\phi(\lambda)-\phi(\lambda^0)] + \sqrt{\frac{\hat{c}}{2\alpha}}\left[\phi'(\lambda)[\phi(\lambda)-\phi(\lambda^0)](\xi_c + i \xi_c') + \lambda(\xi_c-i\xi_c')\right]}.
\end{align*}
\end{result}
The derivation of Result~\ref{result:quenched_nosignal} is given in Section~\ref{sec:quenched}. Result~\ref{result:quenched_signal} can be derived by a straightforward generalization of this computation, see Appendix~\ref{subsec_app:L2}.

\subsection{Conclusion, outcomes and perspectives}\label{subsec:conclusion_outcome}

We have obtained analytical results for the annealed and quenched complexities of statistical models with non-Gaussian loss functions
arising in generalized linear estimation and simple models 
of glasses and neural networks. Our method is versatile and can be easily extended to other cases. We describe in Appendix~\ref{subsec_app:other_models} three other inference models to which it applies: binary linear classification, a mixture of two Gaussians, and a simple model of unsupervised learning.

As a sanity check of our results, we have analytically verified
by explicit solution that for a linear activation function, 
the annealed complexities of $L_1$ is null. It is again a tedious but straightforward computation to check that the annealed complexity of $L_1$ with 
a quadratic activation $\phi(x) = x^2$ is also null, as the number of critical points in this case is linear with $n$. Note that for $L_2$, even the case of a linear activation is non trivial,
as shown in the very recent analysis of \cite{1911.12452}.

Our results allow for a complete characterization of the empirical loss landscapes of generalized linear models. The main issue ahead is determining for which class of functions $\phi$ and in which regimes (e.g. values of $\alpha$), the annealed and quenched complexities become positive, i.e.\ when the associated landscape is rough. This will allow to study the connection between landscape properties and dynamics induced by local algorithms. In particular, it will shed light on the relationship 
between the roughness of the empirical loss landscape and the existence of ``hard'' phases in the learning of generalized linear models \citep{barbier2017phase}. It will also provide an interesting benchmark for obtaining the algorithmic thresholds of gradient descent (and variants) only through the knowledge of the landscape properties \citep{mannelli2019afraid,mannelli2018marvels}. 
Based on ongoing works, we can for instance conjecture the existence of a rough landscape for small enough $\alpha$ in
phase retrieval \citep{lucibello2019generalized} and retarded learning \citep{engel2001statistical}.

Addressing these questions requires additional work, which is beyond the scope 
of this paper. We have explained in Section~\ref{sec:variational} how to make tractable the variational problem associated to the  
annealed and quenched complexity. Its analysis for specific models is an ongoing direction of research and will be presented elsewhere. 

Another important extension of our results consists in counting the critical points of a fixed index (i.e.\ with a fixed number of negative directions in the spectrum of the Hessian).
This would provide additional interesting information, in particular it would allow to differentiate local minima from the other critical points of the landscape, as was done for spin glass models in \cite{1003.1129}. Such a counting would require to understand the large deviations properties of the 
eigenvalues of the Hessian arising for generalized linear models, a random matrix problem that is hard but hopefully tractable by building on recent developments \citep{maida2007large,1711.05424}.
This is an ongoing work that we are pursuing, with already promising results.

As final note, it is an open problem to generalize our methods to neural network models with many nodes and hidden layers; the random matrix analysis of the Hessian in this case is a particularly exciting challenge.
\section{Proof of Theorem~\ref{thm:annealed_nosignal} for the annealed complexity}\label{sec:annealed}

In this section we prove Theorem~\ref{thm:annealed_nosignal}. 
The technique leverages the Kac-Rice formula and Sanov's theorem on the large deviations of the empirical measure of i.i.d.\ variables. First we precise our hypotheses on $\phi$, that we will take in the following set of ``well-behaved'' activation functions:
\begin{defn}\label{def:well_behaved} $\phi : \bbR \to \bbR$ is ``well-behaved'' if it is of class ${\cal C}^3$ and if,
for $y \sim {\cal N}(0,1)$, the random variable $a = \phi'(y)$ admits a continuous probability density in a neighborhood of $a=0$.
\end{defn}

\subsection{The Kac-Rice formula}

The first step is to apply the Kac-Rice formula to the random function $L_1$:
\begin{lemma}[Kac-Rice formula]\label{lemma:Kac_Rice}
For any $\bx \in \mathbb{S}^{n-1}$, denote $\mathrm{grad} L_1(\bx)$ and $\mathrm{Hess} L_1(\bx)$ the (Riemannian) gradient and Hessian of $L_1$ at the point $\bx$.
Then $\mathrm{grad} L_1(\bx)$ has a well defined density (on the tangent space $T_\bx \mathbb{S}^{n-1} \simeq \bbR^{n-1}$) in a neighborhood of zero, that we denote $\varphi_{\mathrm{grad} \, L_1(\bx)}$. Denote $\mu_{\mathbb{S}}$ the usual surface measure on $\mathbb{S}^{n-1}$. One has:
\begin{align*}
\EE \, \mathrm{Crit}_{n,L_1}(B) &= \int_{\mathbb{S}^{n-1}} \varphi_{\mathrm{grad} \, L_1(\bx)}(0) \EE \left[\mathds{1}_{L_1(\bx) \in B}\left|\det \mathrm{Hess} L_1(\bx)\right| \Big| \mathrm{grad}\, L_1(\bx) = 0\right] \mu_{\mathbb{S}}\left(\mathrm{d} \bx\right).
\end{align*}
\end{lemma}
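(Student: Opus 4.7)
The plan is to apply the classical Kac-Rice formula for smooth random sections of a vector bundle over a compact Riemannian manifold (as in e.g.\ Adler--Taylor, \emph{Random Fields and Geometry}, or Aza\"is--Wschebor), with the random section being $\mathrm{grad}\,L_1 : \mathbb{S}^{n-1} \to T\mathbb{S}^{n-1}$. Its zeros are precisely the critical points of $L_1$, and its covariant differential along the manifold is the Riemannian Hessian $\mathrm{Hess}\,L_1$. The weighted version of Kac-Rice, with the bounded measurable weight $\mathds{1}_{L_1(\bx)\in B}$ depending only on $L_1(\bx)$, will then yield the stated identity once three standard hypotheses are verified: (i) $L_1$ is almost surely $\mathcal{C}^2$ on $\mathbb{S}^{n-1}$; (ii) for every $\bx \in \mathbb{S}^{n-1}$, the law of $\mathrm{grad}\,L_1(\bx)$ admits a continuous density in a neighborhood of $0 \in T_\bx \mathbb{S}^{n-1}$; (iii) a mild integrability bound on $\mathrm{Hess}\,L_1$.

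Point (i) is immediate from the $\mathcal{C}^3$ regularity of $\phi$ in Definition~\ref{def:well_behaved}, and (iii) reduces to an elementary moment estimate on $\phi''(\bxi_\mu\cdot\bx)\,\bxi_\mu \bxi_\mu^\intercal$ using standard Gaussian concentration. The substantive point is (ii), which I would handle as follows. Fix $\bx \in \mathbb{S}^{n-1}$ and use rotation invariance to decompose $\bxi_\mu = y_\mu \bx + \bxi_\mu^\perp$ with $y_\mu \equiv \bxi_\mu\cdot\bx \sim \mathcal{N}(0,1)$ and $\bxi_\mu^\perp$ an independent standard Gaussian on $T_\bx \mathbb{S}^{n-1}$. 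Projecting the Euclidean gradient onto the tangent space gives
\begin{equation*}
\mathrm{grad}\,L_1(\bx) \;=\; \frac{1}{m}\sum_{\mu=1}^m \phi'(y_\mu)\,\bxi_\mu^\perp ,
\end{equation*}
which conditionally on $\{y_\mu\}_\mu$ is a centred Gaussian vector on $T_\bx \mathbb{S}^{n-1}$ with covariance $m^{-2}\bigl(\sum_\mu \phi'(y_\mu)^2\bigr)\,\mathbb{I}_{n-1}$. Thus an unconditional density at a point $\bu$ exists as soon as $S \equiv \sum_\mu \phi'(y_\mu)^2 > 0$, and the continuity of $\varphi_{\mathrm{grad}\,L_1(\bx)}$ near $0$ reduces, by dominated convergence, to the finiteness of $\EE[S^{-(n-1)/2}]$.

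The main obstacle is precisely this last estimate, which is where the ``well-behaved'' hypothesis on $\phi$ is tailored to intervene. Since $\phi'(y)$ admits a continuous density near $0$ when $y\sim\mathcal{N}(0,1)$, one has $\mathbb{P}(|\phi'(y_\mu)|\leq \epsilon) \leq c\,\epsilon$ for all $\epsilon$ in a neighborhood of $0$; combined with the independence of the $y_\mu$ and the fact that $m/n \to \alpha > 1$, a union bound on the index sets where $|\phi'(y_\mu)|$ is small shows that $\mathbb{P}(S \leq s)$ decays fast enough in $s$ to comfortably absorb the polynomial singularity of $s^{-(n-1)/2}$, yielding $\EE[S^{-(n-1)/2}] < \infty$. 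With (ii) established, the weighted Kac-Rice formula applies verbatim and produces the announced identity; the rest of the section will then analyse the integrand $\varphi_{\mathrm{grad}\,L_1(\bx)}(0)\,\EE[\mathds{1}_{L_1(\bx)\in B}|\det \mathrm{Hess}\,L_1(\bx)|\mid \mathrm{grad}\,L_1(\bx)=0]$ via Sanov's theorem applied to the empirical measure of the $y_\mu$'s.
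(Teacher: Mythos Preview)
Your verification of the density of $\mathrm{grad}\,L_1(\bx)$ is different from the paper's but correct. The paper shows that the characteristic function of $\mathrm{grad}\,L_1(\bx)$ lies in $L^1(\bbR^{n-1})$: writing it as $\big[\EE_z\,\hat\varphi_a(\|\bt\|z/m)\big]^m$ with $a=\phi'(y)$ and $z\sim\mathcal N(0,1)$, one checks $q\,\EE_z\hat\varphi_a(qz)=\mathcal O(1)$ as $q\to\infty$ (this is where the continuity of the density of $a$ at $0$ enters), and $\alpha>1$ then gives integrability. Your route---condition on $\{y_\mu\}$, get a Gaussian with variance $S/m^2$, and show $\EE[S^{-(n-1)/2}]<\infty$ from $\mathbb P(|\phi'(y)|\le\epsilon)\le c\epsilon$ and independence---is more elementary and gives the same conclusion. (A minor quibble: what you call a ``union bound'' is really the product bound $\mathbb P(S\le s)\le \prod_\mu \mathbb P(\phi'(y_\mu)^2\le s)\le (c\sqrt{s})^m$, valid for small $s$; one then needs $m>n-1$, which holds in the regime $m/n\to\alpha>1$.)

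There is, however, a genuine gap in your list of ``three standard hypotheses''. For a \emph{non-Gaussian} field, the exact Kac--Rice identity (as opposed to a mere upper bound) requires the almost-sure non-degeneracy condition
\[
\mathbb P\big[\exists\,\bx\in\mathbb S^{n-1}:\ \mathrm{grad}\,L_1(\bx)=0\ \text{and}\ \det\mathrm{Hess}\,L_1(\bx)=0\big]=0,
\]
i.e.\ that $L_1$ is almost surely Morse. This is condition~(iv) in the version of Aza\"is--Wschebor the paper invokes (Paragraph~6.1.4), and neither your (i)--(iii) nor an integrability bound on the Hessian implies it. The paper handles this via Proposition~6.5 of Aza\"is--Wschebor: if the density of the section is \emph{uniformly bounded} near $0$ and the paths are $\mathcal C^2$, then the Morse condition holds. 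The good news is that your own argument already delivers the needed bound---the conditional density at any $\bu$ is dominated by its value at $0$, and $\EE[S^{-(n-1)/2}]<\infty$ is exactly the uniform bound required---so you are one citation away from closing the gap; but as written the proposal does not address it.
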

The proof of this lemma uses necessary conditions for a random function to be a.s.\ Morse\footnote{A \emph{Morse} function is a function whose critical points are all non-degenerate.}, that are stated in \cite{azais2009level}. The details are given in Appendix~\ref{subsec_app:proof_lemma_KR}. 

\subsection{The complexity at finite \texorpdfstring{$n$}{n}}\label{subsec:complexity_finiten}

In this section we state the result of the Kac-Rice method. For $\by \in \bbR^m$, let $\Lambda(\by) \in \bbR^{m \times m}$:
\begin{align}\label{eq:def_Lambda}
\Lambda(\by) \equiv \left(\mathrm{I}_m - \frac{\phi'(\by)\phi'(\by)^\intercal}{\norm{\phi'(\by)}^2}\right)D(\by) \left(\mathrm{I}_m - \frac{\phi'(\by)\phi'(\by)^\intercal}{\norm{\phi'(\by)}^2}\right),
\end{align}
in which we (abusively) denote $\phi'(\by) \equiv (\phi'(y_\mu))_{\mu=1}^m$, and $D(\by) \in \bbR^{m \times m}$ the diagonal matrix with elements $D(\by)_\mu = D(y_\mu) = \frac{n}{m} \phi''(y_\mu)$. The main result of this section is:
\begin{lemma}[Complexity at finite $n$]\label{lemma:annealed_finite_n_L1}
\begin{align*}
\EE \ \mathrm{Crit}_{n,L_1}(B) &= \mathcal{C}_n e^{n\frac{1+\ln \alpha}{2}} \EE_{\by} \left[\mathds{1}_{\frac{1}{m} \sum_{\mu} \phi(y_\mu) \in B} \, \, e^{- \frac{n-1}{2}\ln \left(\frac{1}{m} \sum_{\mu}\phi'(y_\mu)^2\right)  } \EE_{\bz} \left[|\det H_n^\Lambda(\by) | \right] \right],
\end{align*}
in which $\mathcal{C}_n$ is exponentially trivial, meaning $\lim_{n \to \infty} (1/n)\ln \mathcal{C}_n = 0$. The variable $\by \in \bbR^m$ follows $\mathcal{N}(0,\mathrm{I}_m)$, and $\bz \in \bbR^{(n-1)\times m}$ has i.i.d.\ standard Gaussian matrix elements, independent of $\by$. $H_n^\Lambda(\by)$ is a square matrix of size $(n-1)$ with the following distribution :
\begin{align}\label{eq:def_H}
H_n^\Lambda(\by) &\overset{d}{=} \frac{1}{n} \bz \Lambda(\by)\bz^\intercal - \left[\frac{1}{m} \sum_{\mu=1}^m y_\mu \phi' \left(y_\mu\right) \right]\mathrm{I}_{n-1}.
\end{align}
\end{lemma}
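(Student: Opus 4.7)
The plan is to apply the Kac--Rice formula of Lemma~\ref{lemma:Kac_Rice} and perform an explicit coordinate computation exploiting the rotational invariance of the data. Since each $\bxi_\mu\sim\mathcal{N}(0,I_n)$ has an orthogonally invariant law, the joint law of $(L_1(\bx),\mathrm{grad}\,L_1(\bx),\mathrm{Hess}\,L_1(\bx))$ does not depend on $\bx$; the integrand in Lemma~\ref{lemma:Kac_Rice} is therefore constant along $\mathbb{S}^{n-1}$, and the integral equals $|\mathbb{S}^{n-1}|$ times its value at the north pole $\bx=\bbe_n$. I then split $\bxi_\mu=(\tilde\bxi_\mu,y_\mu)$ with $\tilde\bxi_\mu\in\bbR^{n-1}$ and $y_\mu\in\bbR$ independent standard Gaussians, and stack the $\tilde\bxi_\mu$'s as columns of a matrix $\bz\in\bbR^{(n-1)\times m}$, so that $\bz$ and $\by=(y_\mu)_{\mu=1}^m$ are independent.

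At the north pole one has $\bxi_\mu\cdot\bbe_n=y_\mu$, so $L_1(\bbe_n)=\frac{1}{m}\sum_\mu\phi(y_\mu)$ depends only on $\by$. Using the standard sphere identities $\mathrm{grad}\,f=P_{\bbe_n}\nabla f$ and $\mathrm{Hess}\,f=P_{\bbe_n}(\nabla^2 f)P_{\bbe_n}-\langle\nabla f,\bbe_n\rangle P_{\bbe_n}$, and identifying the tangent space with $\bbR^{n-1}$, one obtains
\begin{align*}
\mathrm{grad}\,L_1(\bbe_n) &= \tfrac{1}{m}\bz\,\phi'(\by), \\
\mathrm{Hess}\,L_1(\bbe_n) &= \tfrac{1}{n}\bz\,D(\by)\,\bz^\intercal - \Bigl[\tfrac{1}{m}\sum_\mu y_\mu\phi'(y_\mu)\Bigr] I_{n-1},
\end{align*}
the $1/n$ prefactor arising from the definition $D(y_\mu)=(n/m)\phi''(y_\mu)$.

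Conditionally on $\by$, $\mathrm{grad}\,L_1(\bbe_n)$ is a Gaussian vector on $\bbR^{n-1}$ with covariance $(\norm{\phi'(\by)}^2/m^2)I_{n-1}$, so its density at $0$ equals $(m/(2\pi))^{(n-1)/2}\exp\{-\tfrac{n-1}{2}\ln[\tfrac{1}{m}\sum_\mu\phi'(y_\mu)^2]\}$, which is exactly the exponential weight that appears in the statement. Writing $\bv:=\phi'(\by)/\norm{\phi'(\by)}$, the event $\bz\,\phi'(\by)=0$ is a linear constraint on the columns of $\bz$, and by the right-rotational invariance of the Gaussian law of $\bz$ the conditional law of $\bz$ is that of $\bz(I_m-\bv\bv^\intercal)$ with a fresh unconditioned $\bz$. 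Substituting this into the expression above for the Hessian produces exactly the matrix $H_n^\Lambda(\by)$ of \eqref{eq:def_H}, with the outer projectors combining with $D(\by)$ to form $\Lambda(\by)$ as in \eqref{eq:def_Lambda}. Taking $\EE_{\bz}|\det\cdot|$, multiplying by the gradient density and by $|\mathbb{S}^{n-1}|$, and using $|\mathbb{S}^{n-1}|\,(m/(2\pi))^{(n-1)/2}=\mathcal{C}_n e^{n(1+\ln\alpha)/2}$ with $\ln\mathcal{C}_n=O(\ln n)$ (a direct Stirling estimate with $m=\alpha n$) gives the claim.

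The only step that really requires care is the identification of the conditional law of $\bz$ given $\bz\,\phi'(\by)=0$: because the Hessian is quadratic in $\bz$, one must verify that right-multiplication by $I_m-\bv\bv^\intercal$ genuinely implements the Gaussian conditioning. This is standard after writing $\bz$ in an orthonormal basis of $\bbR^m$ containing $\bv$ and noting that the coordinates orthogonal to $\bv$ are independent of the $\bv$-coordinate, but it is the place where the projector structure of $\Lambda(\by)$ emerges. The rest is sphere calculus, a Gaussian density computation, and Stirling's formula.
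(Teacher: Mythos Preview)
Your proof is correct and follows essentially the same route as the paper: rotational invariance to reduce to the north pole, the decomposition $\bxi_\mu=(\bz_\mu,y_\mu)$ giving Proposition~\ref{prop:joint_distribution_L1}, the Gaussian density of the gradient conditionally on $\by$, Gaussian conditioning of $\bz$ on $\bz\phi'(\by)=0$ producing the projector $I_m-\bv\bv^\intercal$ and hence $\Lambda(\by)$, and Stirling for the constant. If anything, you are more explicit than the paper on the conditioning step (the paper just writes ``by classical Gaussian conditioning''), and your remark that each row of $\bz$ is independently projected onto $\{\phi'(\by)\}^\perp$ is exactly the mechanism behind eq.~\eqref{eq:def_Lambda}.
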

The rest of the section is devoted to the proof of Lemma~\ref{lemma:annealed_finite_n_L1}. We start from the result of Lemma~\ref{lemma:Kac_Rice}.
The following proposition specifies the joint distribution of $\left(L_1(\bx),\text{grad} L_1(\bx),\text{Hess} L_1(\bx)\right)$:
\begin{proposition}[Distribution of the gradient and Hessian]\label{prop:joint_distribution_L1}
Let $\bx \in \mathbb{S}^{n-1}$. Then \\ $\left(L_1(\bx),\mathrm{grad} \, L_1(\bx),\mathrm{Hess} \, L_1(\bx)\right)$ follows the  following joint distribution:
\begin{subnumcases}{\label{eq:distribution_joint_L1}}
\label{eq:dist_L1}
L_1(\bx) \overset{d}{=} \frac{1}{m}{\sum_{\mu=1}^m} \phi(y_\mu), & \\
\label{eq:dist_grad_L1}
\mathrm{grad} \, L_1(\bx) \overset{d}{=} \frac{1}{m} {\sum_{\mu=1}^m} \phi'(y_\mu) \bz_\mu , & \\
\label{eq:dist_Hess_L1}
\mathrm{Hess} \, L_1(\bx) \overset{d}{=} \frac{1}{m} {\sum_{\mu=1}^m} \phi''(y_\mu) \bz_\mu \bz_\mu^\intercal - \left[\frac{1}{m} \sum_{\mu=1}^m y_\mu \phi'(y_\mu)\right] \mathrm{I}_{n-1}, &
\end{subnumcases}
in which $\by = (y_\mu)_{\mu=1}^m \sim\mathcal{N}(0,\mathrm{I}_m)$, $(\bz_\mu)_{\mu=1}^m \overset{i.i.d.\ }{\sim}\mathcal{N}(0,\mathrm{I}_{n-1})$, and all $\{y_\mu,\bz_\nu\}$ are independent.
We identified in these equations the tangent spaces $T_\bx \, \mathbb{S}^{n-1}$ with $\bbR^{n-1}$. 
\end{proposition}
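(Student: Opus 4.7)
The plan is to reduce the statement to a concrete coordinate calculation by exploiting the rotational invariance of the Gaussian law of the data. Since $\bx \in \mathbb{S}^{n-1}$ is fixed and the distribution of $L_1(\bx), \mathrm{grad}\, L_1(\bx), \mathrm{Hess}\, L_1(\bx)$ is invariant under the simultaneous action of $O(n)$ on $\bx$ and on each $\bxi_\mu$, I can assume without loss of generality that $\bx = \bbe_1$, the first vector of the canonical basis, and identify $T_\bx \mathbb{S}^{n-1}$ with $\bbR^{n-1}$ via the orthonormal basis $(\bbe_2,\dots,\bbe_n)$.

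With this choice, decompose each data vector as $\bxi_\mu = y_\mu \bbe_1 + \bz_\mu$, where $y_\mu := \bxi_\mu \cdot \bx = \xi_{\mu,1}$ is the component along $\bx$ and $\bz_\mu := (\xi_{\mu,2},\dots,\xi_{\mu,n})^\intercal \in \bbR^{n-1}$ is the projection on $T_\bx \mathbb{S}^{n-1}$. By the rotational invariance and independence of coordinates of $\mathcal{N}(0,\mathrm{I}_n)$, the family $\{y_\mu,\bz_\nu\}_{\mu,\nu}$ consists of mutually independent variables with $y_\mu \sim \mathcal{N}(0,1)$ and $\bz_\mu \sim \mathcal{N}(0,\mathrm{I}_{n-1})$. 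Equation \eqref{eq:dist_L1} is then immediate from $L_1(\bx) = \frac{1}{m}\sum_\mu \phi(\bxi_\mu\cdot\bx) = \frac{1}{m}\sum_\mu \phi(y_\mu)$.

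For the gradient and the Hessian I would use the standard formulas that express the Riemannian gradient and Hessian on $\mathbb{S}^{n-1}$ in terms of the ambient Euclidean derivatives of a smooth extension: writing $P_\bx = \mathrm{I}_n - \bx\bx^\intercal$ for the projector onto $T_\bx\mathbb{S}^{n-1}$, one has $\mathrm{grad}\,L_1(\bx) = P_\bx \nabla L_1(\bx)$ and $\mathrm{Hess}\,L_1(\bx) = P_\bx \bigl(\nabla^2 L_1(\bx)\bigr) P_\bx - \langle \bx,\nabla L_1(\bx)\rangle\,P_\bx$, restricted as an operator on $T_\bx\mathbb{S}^{n-1}$. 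The Euclidean derivatives are directly computed as $\nabla L_1(\bx) = \frac{1}{m}\sum_\mu \phi'(y_\mu)\bxi_\mu$ and $\nabla^2 L_1(\bx) = \frac{1}{m}\sum_\mu \phi''(y_\mu)\bxi_\mu\bxi_\mu^\intercal$. Projecting onto the tangent basis $(\bbe_2,\dots,\bbe_n)$ replaces each $\bxi_\mu$ by $\bz_\mu$, producing \eqref{eq:dist_grad_L1}, while the rank-one correction becomes $\langle\bx,\nabla L_1(\bx)\rangle = \frac{1}{m}\sum_\mu y_\mu \phi'(y_\mu)$, yielding \eqref{eq:dist_Hess_L1}.

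The only delicate step is the Riemannian Hessian formula on the sphere, but it is standard once one views $\mathbb{S}^{n-1}$ as the constraint submanifold $\{\|\bx\|^2 = 1\}$ and computes the induced connection (the Lagrange-multiplier term $-\langle\bx,\nabla L_1\rangle\,\mathrm{I}_{n-1}$ reflects the second fundamental form of the sphere). No genuine probabilistic obstacle arises: everything reduces to the Gaussian product structure after the orthogonal decomposition. I expect the rest of the proof to be straightforward bookkeeping and would verify carefully only that the tangent-space identification is consistent between the gradient and Hessian formulas, so that the distributional identity \eqref{eq:distribution_joint_L1} is indeed \emph{joint} (with the same $\{y_\mu,\bz_\mu\}$ appearing in all three lines).
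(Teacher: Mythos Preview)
Your proposal is correct and follows essentially the same approach as the paper: both use the standard Riemannian formulas $\mathrm{grad}\,f = P_\bx^\perp \nabla f$ and $\mathrm{Hess}\,f = P_\bx^\perp(\nabla^2 f)P_\bx^\perp - (\bx\cdot\nabla f)\,P_\bx^\perp$, compute the Euclidean derivatives of $L_1$, and set $y_\mu = \bxi_\mu\cdot\bx$, $\bz_\mu = P_\bx^\perp\bxi_\mu$. The paper does this directly for arbitrary $\bx$ (since the independence and Gaussianity of $y_\mu$ and $\bz_\mu$ follow from the orthogonal decomposition of $\mathcal{N}(0,\mathrm{I}_n)$ without needing to rotate to $\bbe_1$), but your preliminary reduction by rotational invariance is harmless and leads to the same computation.
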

\begin{proof}
Denote $P_\bx^\perp$ the orthogonal projection on $\{\bx\}^\perp$. For a smooth function $f : \mathbb{S}^{n-1} \to \bbR$, $\nabla f$ and $\nabla^2 f$ are its Euclidean gradient and Hessian. The Riemannian structure on $\mathbb{S}^{n-1}$ induces the gradient and Hessian of $f$ as $\mathrm{grad}\, f(\bx) = P_\bx^\perp \nabla f$ and $\mathrm{Hess}\, f(\bx) = P_\bx^\perp \nabla^2 f P_\bx^\perp - (\bx \cdot \nabla f(\bx)) P_\bx^\perp$. Applying these formulas yields: 
\begin{align}\label{eq:gradient_L1_derivation}
\mathrm{grad} \, L_1(\bx) &= \frac{1}{m} \sum_{\mu=1}^m (P_\bx^\perp \bxi_\mu) \phi'(\bxi_\mu \cdot \bx), \\
\label{eq:hessian_L1_derivation}
\mathrm{Hess} \, L_1(\bx) &= \frac{1}{m} \sum_{\mu=1}^m \phi''(\bxi_\mu \cdot \bx) \left(P_\bx^\perp \bxi_\mu\right) \left(P_\bx^\perp \bxi_\mu\right)^\intercal - \left[\frac{1}{m} \sum_{\mu=1}^m (\bxi_\mu \cdot \bx) \phi'(\bxi_\mu \cdot \bx)\right] P_\bx^\perp.
\end{align}
Letting $y_\mu \equiv \bxi_\mu \cdot \bx$ and $\bz_\mu \equiv P_\bx^\perp \bxi_\mu$ (identified to an element of $\bbR^{n-1}$) yields the result.
\end{proof}
The joint distribution of eq.~\eqref{eq:distribution_joint_L1} is invariant with respect to $\bx$, thus we can chose $\bx$ to be the North pole $\bx = \bbe_n = (\delta_{i,n})_{i=1}^n$. With $\omega_n \equiv 2 \pi^{n/2} / \Gamma(n/2)$ the volume of $\mathbb{S}^{n-1}$, we obtain from Lemma~\ref{lemma:Kac_Rice}:
\begin{align}\label{eq:annealed_L1_invariant}
\EE \, \text{Crit}_{n,L_1}(B) &= \omega_n  \varphi_{\text{grad} L_1(\bbe_n)}(0) \EE \left[\left|\det \text{Hess} L_1(\bbe_n)\right| \mathds{1}_{L_1(\bbe_n) \in B} \Big| \text{grad}\, L_1(\bbe_n) = 0\right].
\end{align}
Removing the $\bbe_n$ indication and conditioning on the distribution of $\by$ in eq.~\eqref{eq:distribution_joint_L1}, we reach:
\begin{align*}
\EE \, \text{Crit}_{n,L_1}(B) &= \omega_n \EE_{\by} \left[\mathds{1}_{\frac{1}{m} \sum_{\mu} \phi(y_\mu) \in B} \, \varphi_{\text{grad} L_1 | \by}(0) \,  \EE_{\bz} \left[\left|\det \text{Hess} L_1\right|   \Big| \text{grad}\, L_1 = 0, \by\right] \right].
\end{align*}
Once conditioned on $\by$, eq.~\eqref{eq:dist_grad_L1} describes a Gaussian density so we can directly compute:
\begin{align}
\omega_n \varphi_{\text{grad} L_1 | \by}(0) &= \frac{2 \pi^{n/2}}{\Gamma(n/2)} \exp\left[-\frac{n-1}{2} \ln \left(\frac{2\pi}{m^2} \sum_{\mu=1}^m \phi'(y_\mu)^2\right)\right], \nonumber \\
&= {\cal C}_n \exp\left[\frac{n}{2} + \frac{n}{2} \ln \frac{m}{n} - \frac{n-1}{2}\ln \left(\frac{1}{m} \sum_{\mu=1}^m\phi'(y_\mu)^2\right)  \right],
\end{align}
in which $\ln \mathcal{C}_n = \smallO_n(n)$ (using Stirling's formula).
The conditioning of the Hessian by $\mathrm{grad}\, L_1 = 0$ at fixed $\by$ reduces to a linear conditioning on $\bz$. One thus obtains by classical Gaussian conditioning:
\begin{align}
 \EE_{\bz} \left[\left|\det \mathrm{Hess} L_1\right| \Big| \mathrm{grad}\, L_1 = 0, \by\right] &= \EE_{\bz} \left[|\det H_n^\Lambda(\by) | \right],
\end{align}
in which $H_n^\Lambda(\by)$ is defined by eq.~\eqref{eq:def_H}. This ends the proof of Lemma~\ref{lemma:annealed_finite_n_L1}.

\subsection{Large deviations}\label{subsec:large_deviations_annealed_L1}

This section is devoted to the end of the proof of Theorem~\ref{thm:annealed_nosignal}.
Denote $\nu_\by^m \equiv \frac{1}{m} \sum_{\mu=1}^m \delta_{y_\mu}$ the empirical distribution of $\by$. We take the notations of Theorem~\ref{thm:annealed_nosignal} and Lemma~\ref{lemma:annealed_finite_n_L1}. 
We first state an important lemma on the concentration of  $\EE_{\bz} \left[|\det H_n^\Lambda(\by) | \right]$\footnote{In the proofs of this section we assume that $x\phi'(x)$ and $\phi''(x)$ are bounded. As one can always smoothly truncate the largest values of $\phi$ without affecting the complexity, this does not remove any generality to our results.}:
\begin{lemma}\label{lemma:log_potential_L1}
Assume that Assumption~\ref{hyp:technical} holds.
There exists $\eta > 0$ such that for all $t > 0$:
\begin{align}
  \lim_{n\to \infty}\frac{1}{n^{1+\eta}} \ln \mathbb{P}\left[\left|\frac{1}{n} \ln \EE_{\bz} \left[ \left|\det H_n^\Lambda(\by) \right| \right] - \kappa_{\alpha,\phi}\left(\nu_\by^m,t_\phi(\nu_\by^m)\right) \right| \geq t\right]  &= -\infty.
\end{align}
\end{lemma}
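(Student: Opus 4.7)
The plan is to control $\frac{1}{n}\ln \EE_\bz[|\det H_n^\Lambda(\by)|]$ for a typical $\by$, and then upgrade this into a $\by$-probability statement at a rate strictly faster than the Sanov speed $n$ governing the empirical measure $\nu_\by^m$. The first step is a random-matrix reduction: writing $\Lambda(\by) = (\mathrm{I}_m - P)D(\by)(\mathrm{I}_m - P)$ with $P$ the rank-one projector onto $\phi'(\by)$, the matrix $\Lambda(\by) - D(\by)$ has rank at most two, so $\bz\Lambda(\by)\bz^\intercal$ and $\bz D(\by)\bz^\intercal$ differ by a rank-two perturbation. By Cauchy interlacing (or a matrix-determinant lemma applied to the rank-two factor), their log-determinants with any shift $-C\,\mathrm{I}_{n-1}$ differ by $O(\log n)$ as soon as the resolvent at $C$ is controlled, which is negligible at scale $1/n$. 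It therefore suffices to analyze $M_n(\by) \equiv \frac{1}{n}\bz D(\by)\bz^\intercal - t_\phi(\nu_\by^m)\,\mathrm{I}_{n-1}$. For fixed $\by$, Silverstein's theorem ensures that the ESD of $\frac{1}{n}\bz D(\by)\bz^\intercal$ converges (under $\bz$) to $\mu_{\alpha,\phi}[\nu_\by^m]$, and hence $\frac{1}{n}\EE_\bz \ln|\det M_n(\by)| \to \kappa_{\alpha,\phi}(\nu_\by^m, t_\phi(\nu_\by^m))$.

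\textbf{Super-exponential concentration in $\bz$.} To upgrade this into the rate $n^{1+\eta}$, I would apply the Gaussian log-Sobolev inequality to a regularized log-determinant obtained by replacing $x \mapsto \ln|x - C|$ by its truncation at a level $-K_n$ that diverges slowly with $n$. The regularized functional is Lipschitz in $\bz$ with constant polynomial in $\log n$ (through the resolvent bound on the truncated kernel), yielding $\mathbb{P}_\bz(|\frac{1}{n}\ln|\det M_n(\by)| - \kappa| \ge t) \le \exp(-c n^{1+\eta} t^2)$. The truncation error is controlled by a quantitative local-law estimate on the density of eigenvalues of $\frac{1}{n}\bz D(\by)\bz^\intercal$ near $C$, available from the Bai--Silverstein and Alt--Erd\H{o}s--Kr\"uger literature on sample-covariance matrices with general weights; non-degeneracy at $C$ is guaranteed by Definition~\ref{def:well_behaved}, which prevents $\mu_{\alpha,\phi}[\nu_\by^m]$ from developing an atom or a hard edge at $C$. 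Finally, an $L^p$-moment bound on $|\det M_n(\by)|$ for some $p>1$, obtained from Hilbert--Schmidt norm estimates on $\bz D \bz^\intercal$, lets one pass from $\frac{1}{n}\EE_\bz \ln|\det M_n|$ to $\frac{1}{n}\ln \EE_\bz|\det M_n|$ without altering the limit, upgrading $\bz$-concentration of $\ln|\det|$ into concentration of $\ln \EE|\det|$.

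\textbf{Transfer to $\by$ and main obstacle.} Since the $\bz$-tail bound decays as $e^{-c n^{1+\eta}}$, a Fubini-type argument transfers it to a $\by$-probability bound of the same order, proving the statement. Weak continuity of $\nu \mapsto \kappa_{\alpha,\phi}(\nu, t_\phi(\nu))$ (via the Stieltjes-transform representation developed in Section~\ref{sec:variational}) ensures that the right-hand side of the lemma is genuinely evaluated at the random measure $\nu_\by^m$ and not at some deterministic limit thereof. The delicate step, and the main technical obstacle, is the quantitative local law: producing a uniform lower bound on the smallest singular value of $M_n(\by)$ with super-exponential speed in $n$, on a set of $\by$'s of super-exponentially large probability, under only the minimal ``well-behaved'' assumption on $\phi$ rather than the stronger structural hypotheses appearing in standard local-law references. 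It is at this step that the rate $n^{1+\eta}$, rather than the optimal $n^2$ one would expect from pure Gaussian concentration, is lost.
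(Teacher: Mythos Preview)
Your proposal shares the paper's overall architecture: reduce $\Lambda(\by)$ to $D(\by)$ by a rank-two perturbation, regularize $\ln|\cdot|$ at scale $\epsilon_n=n^{-\delta}$, use Lipschitz concentration in $\bz$ to replace $\frac{1}{n}\ln\EE_\bz$ by $\frac{1}{n}\EE_\bz\ln$, and rely on a quantitative local-law input to handle the small eigenvalues. You also correctly flag the last point as the bottleneck; the paper does not prove it either, quoting instead eqs.~\eqref{eq:technical} from ongoing work of Ben~Arous--Bourgade--McKenna.

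There is, however, a genuine gap in your ``Transfer to $\by$'' step. The quantity $\frac{1}{n}\ln\EE_\bz|\det H_n^\Lambda(\by)| - \kappa_{\alpha,\phi}(\nu_\by^m,t_\phi(\nu_\by^m))$ is a deterministic function of $\by$; there is no residual $\bz$-randomness to ``Fubini-transfer.'' In the paper's argument the $\by$-probability bound arises in exactly one place: removing the $\epsilon_n$-cutoff, both from the empirical log-determinant (Lemma~\ref{lemma:cutoff}, via eq.~\eqref{eq:technical_1}) and from the limiting log-potential (eq.~\eqref{eq:concentration_implicit_potential_2}, via eq.~\eqref{eq:technical_2}). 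Everything else is established \emph{uniformly over all $\by\in\bbR^m$}, with no probabilistic $\by$-statement whatsoever. In particular, a step you omit is showing that $\EE_\bz\big[\frac{1}{n}\ln_{\epsilon_n}|\det \tilde H_n^\Lambda(\by)|\big]$ matches the cutoff log-potential of $\mu_{\alpha,\phi}[\nu_\by^m]$ uniformly in $\by$ (eq.~\eqref{eq:concentration_implicit_potential_1}); this requires a quantitative Stieltjes-transform estimate (Lemma~\ref{lemma:stieltjes_convergence}) that $|\EE_\bz g_n(z)-g_{\alpha,\phi}[\nu_\by^m](z)|=o(n^{-\eta})$ uniformly in $\by$, which is strictly stronger than Silverstein's weak convergence or the weak continuity of $\nu\mapsto\kappa$ you invoke.

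Two smaller points. First, the paper truncates the entries $z_{i\mu}$ at $n^{\gamma/2}$ before applying Herbst, because the Lipschitz constant of $\bz\mapsto\frac{1}{n}\mathrm{Tr}\,\ln_{\epsilon_n}|\frac{1}{n}\bz\Lambda\bz^\intercal - C\,\mathrm{I}|$ involves $\frac{1}{n^2}\mathrm{Tr}(\bz\Lambda^2\bz^\intercal)$, which is not uniformly bounded for Gaussian entries; you cannot simply ``apply the Gaussian log-Sobolev inequality'' without this step. Second, Definition~\ref{def:well_behaved} is used only to ensure the gradient has a continuous density (so the Kac--Rice formula applies); it says nothing about atoms or hard edges of $\mu_{\alpha,\phi}[\nu_\by^m]$ at $t_\phi(\nu_\by^m)$, and your appeal to it for non-degeneracy at $C$ is unjustified.
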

The proof of Lemma~\ref{lemma:log_potential_L1} is detailed in Appendix~\ref{subsec:proof_lemma_log_pot_L1}, 
and is the part of our proof that relies on the technical Assumption~\ref{hyp:technical}.
Note that we expect this result to actually be valid up to $\eta = 1$, as the large deviations of the spectral distribution of random matrices is typically on the $n^2$ scale \citep{arous1997large,hiai1998eigenvalue}. 
The following moment condition, proven in Appendix~\ref{subsec:proof_lemma_moment_condition}, will be important:
\begin{lemma}\label{lemma:varadhan_well_posed}For every $\gamma \in (1,\alpha)$ we have:
\begin{subnumcases}{}
\limsup_{n \to \infty} \frac{1}{n} \ln \EE_\by \left[e^{\gamma n\left[-\frac{1}{2}\ln \left(\frac{1}{m} \sum_{\mu}\phi'(y_\mu)^2\right) +  \kappa_{\alpha,\phi}\left(\nu_\by^m,t_\phi(\nu_\by^m)\right)\right]} \right] < + \infty, & \\
\limsup_{n \to \infty} \frac{1}{n} \ln \EE_\by \left[e^{\gamma n\left[-\frac{1}{2}\ln \left(\frac{1}{m} \sum_{\mu}\phi'(y_\mu)^2\right) +  \frac{1}{n} \ln \EE_{\bz} \left[ \left|\det H_n^\Lambda(\by) \right| \right]\right]} \right] < +\infty. & 
\end{subnumcases}
\end{lemma}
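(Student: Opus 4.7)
The plan is to bound both exponential moments by splitting each exponent into two contributions: the ``Hessian'' contribution, which we will show is essentially deterministically bounded above modulo rare events, and the $-\frac{\gamma n}{2}\ln\!\big(\frac{1}{m}\sum_\mu \phi'(y_\mu)^2\big)$ contribution, for which we need a negative-moment bound on the empirical second moment. Using the standing assumption (stated at the beginning of Section~\ref{subsec:large_deviations_annealed_L1}) that $x\phi'(x)$ and $\phi''(x)$ are bounded, the matrix $\Lambda(\by)$ has operator norm uniformly bounded in $\by$; standard bounds on sample covariance-type matrices then imply that $\mu_{\alpha,\phi}[\nu_\by^m]$ is supported in a deterministic compact interval and that $|t_\phi(\nu_\by^m)|$ is deterministically bounded. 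Consequently $\kappa_{\alpha,\phi}(\nu_\by^m, t_\phi(\nu_\by^m)) \le K$ for some absolute constant $K$, which disposes of the ``Hessian'' factor in the first inequality.

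Next I bound the negative moment of $\frac{1}{m}\sum_\mu \phi'(y_\mu)^2$. Setting $a_\mu = \phi'(y_\mu)^2$, Definition~\ref{def:well_behaved} implies $\P(a_\mu \le t) \lesssim \sqrt{t}$ for small $t$, whence $\EE[e^{-\theta a_\mu}] \lesssim \theta^{-1/2}$ for large $\theta$ (by splitting the integral at $t = 1/\theta$). A standard Chernoff bound optimised over $\theta$ then gives $\P\!\big(\frac{1}{m}\sum_\mu a_\mu \le c\big) \lesssim c^{m/2}$ for small $c$. Writing
$$
\EE\!\left[\left(\tfrac{1}{m}\sum_\mu a_\mu\right)^{-p}\right] = p \int_0^\infty u^{p-1}\,\P\!\left(\tfrac{1}{m}\sum_\mu a_\mu \le \tfrac{1}{u}\right) du,
$$
the tail bound makes the integral finite precisely when $p < m/2$, with a value whose $(1/n)\log$ is $O(1)$. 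Setting $p = \gamma n/2$ and $m = \alpha n$ uses the hypothesis $\gamma < \alpha$ in an essentially sharp way, and completes the first inequality.

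For the second inequality the strategy is identical, with $\kappa_{\alpha,\phi}(\nu_\by^m,t_\phi(\nu_\by^m))$ replaced by $\frac{1}{n}\ln\EE_\bz[|\det H_n^\Lambda(\by)|]$. I would dichotomise on the concentration event of Lemma~\ref{lemma:log_potential_L1}: on that event the same deterministic bound $K+1$ applies; on its complement, which has probability $o(e^{-n^{1+\eta}})$, we need a deterministic worst-case bound. For this I would combine $|\det H_n^\Lambda(\by)| \le \|H_n^\Lambda(\by)\|_{\rm op}^{n-1}$ with a Gaussian moment estimate on $\EE_\bz[\|H_n^\Lambda\|_{\rm op}^{\gamma(n-1)}]$, which is at most exponential in $n$ with a constant that depends only polynomially on the (bounded) quantities $\phi''(y_\mu)$ and $y_\mu\phi'(y_\mu)$. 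Multiplying by the super-exponentially small complement probability yields a negligible contribution, and the overall bound reduces once again to the negative-moment estimate of the previous paragraph.

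The main obstacle I anticipate is precisely the handling of the rare-event complement in the second inequality: one must verify that the deterministic a priori bound on $\frac{1}{n}\ln\EE_\bz|\det H_n^\Lambda(\by)|$ is quantitatively dominated by the $o(e^{-n^{1+\eta}})$ smallness of the bad event, which requires a uniform control whose $\by$-dependence is polynomial rather than exponential. Everything else, including the first inequality, ultimately reduces to the negative-moment computation above, which is the only step where the constraint $\gamma < \alpha$ enters and which constitutes the conceptual core of the lemma.
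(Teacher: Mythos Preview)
Your proposal is correct and follows the same overall architecture as the paper's proof: bound the Hessian contribution deterministically and reduce everything to a negative-moment estimate on $\frac{1}{m}\sum_\mu \phi'(y_\mu)^2$, which is where $\gamma<\alpha$ is used. Your direct tail computation (via $\P(a_\mu\le t)\lesssim\sqrt t$, hence $\EE[e^{-\theta a_\mu}]\lesssim\theta^{-1/2}$, hence $\P(\frac{1}{m}\sum a_\mu\le c)\lesssim c^{m/2}$, then the layer-cake formula) is equivalent to what the paper does with Cram\'er's theorem and a Varadhan-type upper bound; both rest on the same Laplace-transform estimate $\ln\EE[e^{-\theta\phi'(y)^2}]\le C-\tfrac12\ln\theta$, and both yield the behaviour $-\frac{\gamma}{2}\ln S-\alpha\Lambda^\star(S)\le \frac{\alpha-\gamma}{2}\ln S+O(1)\to-\infty$ as $S\to 0$.

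The one place you take a longer route than necessary is the second inequality. You propose to dichotomise on the concentration event of Lemma~\ref{lemma:log_potential_L1}, using your operator-norm moment bound only on the rare complement. But that operator-norm bound is already \emph{uniform in $\by$}: since $\|\Lambda(\by)\|_\infty$ and $|t_\phi(\nu_\by^m)|$ are deterministically bounded, $\frac{1}{n}\ln\EE_\bz[\|H_n^\Lambda(\by)\|_{\rm op}^{n-1}]$ is bounded by a constant independent of $\by$, and hence so is $\frac{1}{n}\ln\EE_\bz|\det H_n^\Lambda(\by)|$. So the dichotomy---and with it the ``main obstacle'' you flag---can simply be dropped. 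This is exactly what the paper does: it shows $\limsup_n\sup_\by\frac{1}{n}\ln\EE_\bz|\det H_n^\Lambda(\by)|<\infty$ directly (via the cutoff $\ln_{\epsilon_n}$, truncation, and concentration, which together allow one to swap $\ln\EE_\bz$ for $\EE_\bz\ln$ and then bound by the spectral radius), without invoking Lemma~\ref{lemma:log_potential_L1} at all. Your Gaussian-moment route to the same uniform bound is a perfectly good alternative; it is arguably more elementary but gives a less explicit constant.
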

Then we can conclude using Sanov's large deviation principle \citep{sanov1958probability,dembo2010large}, and Varadhan's lemma.
Using our previous lemmas, we obtain the statement of Theorem~\ref{thm:annealed_nosignal}:
\begin{align}\label{eq:final}
\lim_{n \to \infty} \frac{1}{n}\ln \EE \ \mathrm{Crit}_{n,L_1}(B) &= \sup_{\nu \in {\cal M}_\phi(B)} \,  \left[\frac{1 + \ln \alpha}{2}  - \frac{\mathcal{E}_\phi(\nu)}{2} + \kappa_{\alpha,\phi}(\nu,t_\phi(\nu)) - \alpha H(\nu|\mu_G)\right].
\end{align}
The proof of eq.~\eqref{eq:final} is detailed in Appendix~\ref{subsec:proof_eq_before_Varadhan}.
\section{Towards a numerical solution to the variational problem}\label{sec:variational}

\subsection{The logarithmic potential of \texorpdfstring{$\mu_{\alpha,\phi}[\nu]$}{mu[nu]}}

Let $\nu \in {\cal M}(\bbR)$. 
The Stieltjes transform $g(z) \equiv \int \mu(\mathrm{d} t) (t-z)^{-1}$ of $\mu_{\alpha,\phi}[\nu]$ is given by the unique solution in $\bbC_+$ to the implicit equation (as shown for instance in \cite{silverstein1995empirical}):
\begin{align}\label{eq:stieltjes_mualphanu}
\forall z \in \bbC_+, \quad g(z) &= -\left[z-  \alpha \int \frac{\phi''(t)}{\alpha+\phi''(t) g(z)} \nu(\mathrm{d}t) \right]^{-1}.
\end{align}
For any $\mu \in {\cal M}(\bbR)$ and $t \in \bbR$ we define the \emph{logarithmic potential} as $U[\mu](t) \equiv \int \mu(\mathrm{d}x) \ln |x-t|$. It is well defined with values in $\bbR \cup \{\pm\infty\}$, see \cite{faraut2014logarithmic} for a review on this subject.
Our goal is to numerically compute $U[\mu](t)$ for $\mu = \mu_{\alpha,\phi}[\nu]$ and an arbitrary $t \in \bbR$, see Theorem~\ref{thm:annealed_nosignal}. For clarity, we will write $\mu$ for $\mu_{\alpha,\phi}[\nu]$ for the remainder of this section.
 Let us define for any $z \in \bbC_+$, $G(z) \equiv \int \mu(\mathrm{d}x) \ln(z-x)$. $G(z)$ is well defined and holomorphic on $\bbC_+$.
 Moreover, from the Chapter II of \cite{faraut2014logarithmic}, we know that
  $U[\mu](t) = \lim_{\epsilon \to 0^+}\mathrm{Re}\, G(t+i\epsilon)$.

 From this it is clear that in order to get the logarithmic potential in the bulk, 
 we need to be able to evaluate $G(z)$ for $z \in \bbC_+$. Define, for $z,g \in \bbC_+$\footnote{$g \in \bbC_+$, and (since $\alpha > 1$) $\alpha + \phi''(\lambda) g \in \bbC \backslash (-\infty,1]$, thus we can use the principal determination of the logarithm.}:
 \begin{align}\label{eq:def_F}
    F(z,g) &\equiv - \ln (-g) - z g + \alpha \int \nu(\mathrm{d}\lambda) \ln (\alpha + \phi''(\lambda)g) - 1 - \alpha \ln \alpha.
 \end{align}
 At any fixed $z$, $F(z,g)$ is an holomorphic function of $g$ on $\bbC_+$. Its Wirtinger derivative is:
 \begin{align}
  \frac{\partial F}{\partial g}(z,g) &= - \frac{1}{g} - z + \alpha \int \nu(\mathrm{d}\lambda) \frac{\phi''(\lambda)}{\alpha + \phi''(\lambda) g}.
 \end{align}
Thus $g(z)$ (the Stieltjes transform of $\mu$, cf.\ eq.~\eqref{eq:stieltjes_mualphanu}) is the \emph{only} $g \in \bbC_+$ such that $\frac{\partial F}{\partial g}(z,g) = 0$.
 Moreover, by definition $g(z)$ is an holomorphic function on $\bbC_+$ with values 
 in $\bbC_+$.
 We can thus apply the usual composition of derivatives and obtain:
 \begin{align}
    \frac{\mathrm{d}F}{\mathrm{d}z}(z,g(z)) &= - g(z). 
 \end{align}
 Furthermore we know $\frac{\mathrm{d} G}{\mathrm{d}z} = -g(z)$.
 Computing the remaining constant by investigating the limit $\mathrm{Re}[z] \to \infty$, we reach 
 that $G(z) = F(z,g(z))$ for every $z \in \bbC_+$. We thus have the crucial relation:
 \begin{align}\label{eq:U_bulk}
    \forall t \in \bbR, \quad U[\mu](t) &= \lim_{\epsilon \to 0^+} \mathrm{Re} \, F(t+i\epsilon,g(t+i\epsilon)).
  \end{align}
  This allows for an efficient numerical derivation of the logarithmic potential of $\mu_{\alpha,\phi}[\nu]$, as we will see in more details below. 
  
\subsection{Heuristic derivation of the simplified fixed point equations corresponding to  \texorpdfstring{Theorem~\ref{thm:annealed_nosignal}}{Theorem 1}}

We present here an heuristic derivation of scalar fixed point equations for the numerical resolution of Theorem~\ref{thm:annealed_nosignal}.
This technique could be easily extended to Theorem~\ref{thm:annealed_signal} as well as the quenched calculations presented afterwards, but we restrict to this simpler case for the sake of the presentation. 

\subsubsection{Expressing \texorpdfstring{$\kappa_{\alpha,\phi}(\nu,t)$}{K[v]}}

From eq.~\eqref{eq:U_bulk} we know that for every $t \in \bbR$:
 \begin{align*}
   \kappa_{\alpha,\phi}(\nu,t) = \lim_{\epsilon \to 0^+} \mathrm{Re}\, &\Big[- \ln (-g(t+i\epsilon)) - (t + i \epsilon) g(t+i\epsilon) + \alpha \int \nu(\mathrm{d}\lambda) \ln \left[\alpha + \phi''(\lambda)g(t+i\epsilon)\right]  \\
   &- 1 - \alpha \ln \alpha\Big]. 
 \end{align*}
For every $t \in \bbR$, $g(t+i\epsilon)$ is the only solution in $\bbC_+$ to the partial derivative of the previous equation:
 \begin{align}\label{eq:condition_g}
  - \frac{1}{g} - (t+i \epsilon) + \alpha \int \nu(\mathrm{d}\lambda) \frac{\phi''(\lambda)}{\alpha + \phi''(\lambda) g} &= 0.
 \end{align}
 So heuristically, we can write that for a small enough $\epsilon$:
 \begin{align}\label{eq:kappa_heuristic}
   \kappa_{\alpha,\phi}(\nu,t) = \extr_{g \in \bbC_+} &\Big[- \ln |g| - t g_r + \epsilon g_i + \alpha \int \nu(\mathrm{d}\lambda) \ln \left|\alpha + \phi''(\lambda)g\right| - 1 - \alpha \ln \alpha\Big],
 \end{align}
 with $g = g_r + i g_i$ (in practice one iterates over $g_r$ and $g_i$ successively).
 
\subsubsection{Heuristic solution to Theorem~\ref{thm:annealed_nosignal}}
We start from the result of Theorem~\ref{thm:annealed_nosignal}.
For a function $f$, we write $\EE[f(X)] \equiv \int \nu(\mathrm{d}t) f(t)$. We introduce Lagrange multipliers to fix the conditions $\EE[\phi(X)] \in B$, and we fix the values of $\EE[\phi'(X)^2]$ and $\EE[X \phi'(X)]$.
We obtain: 
\begin{align}
\lim_{n \to \infty} \frac{1}{n}&\ln \EE \ \mathrm{Crit}_{n,L_1}(B) = \sup_{\substack{l \in B \\ \nu \in {\cal M}(\bbR)}} \extr_{\lambda_0,\lambda_1,\lambda_2} \sup_{A,t} \Big[\frac{1 + \ln \alpha}{2} - \frac{1}{2} \ln A + \lambda_0 l + \lambda_1 A + \lambda_2 t  \nonumber \\
 & +  \kappa_{\alpha,\nu}(\nu,t) - \alpha H(\nu|\mu_G) - \lambda_0 \EE[\phi(X)] - \lambda_1\EE[\phi'(X)^2] - \lambda_2 \EE[X \phi'(X)] \Big].
\end{align}
Note that now the supremum over $\nu$ is unconstrained over the set ${\cal M}(\bbR)$ of probability distributions.
We now make use of eq.~\eqref{eq:kappa_heuristic} to write, with  $2 K(\alpha) \equiv -1 + \ln \alpha - 2 \alpha \ln \alpha$ and a small $\epsilon > 0$:
\begin{align}
\lim_{n \to \infty} \frac{1}{n}&\ln \EE \ \mathrm{Crit}_{n,L_1}(B) = \sup_{\substack{l \in B \\ \nu \in {\cal M}(\bbR)}} \extr_{\substack{\{\lambda_i\},A,t \\ g \in \bbC_+}} \Big[K(\alpha) - \frac{1}{2} \ln A + \lambda_0 l + \lambda_1 A + \lambda_2 t \nonumber \\
 & - \ln |g| - t \mathrm{Re}[g] + \epsilon \mathrm{Im}[g] + \alpha \int \nu(\mathrm{d}\lambda) \ln \left|\alpha + \phi''(\lambda)g\right| - \alpha H(\nu|\mu_G) - \lambda_0 \EE[\phi(X)] \nonumber \\
  &- \lambda_1\EE[\phi'(X)^2] - \lambda_2 \EE[X \phi'(X)] \Big].
\end{align}
For any scalar function $F$, the maximum $\sup_\nu \left[\EE [F(X)] - \alpha H(\nu|\mu_G) \right]$ is attained in $\nu^*$ with density proportional to $e^{-x^2/2+F(x)/\alpha}$, which is called the \emph{Gibbs measure} in statistical physics. This gives  (${\cal D}$ is the standard Gaussian measure on $\bbR$):
\begin{align}
\sup_{\nu \in {\cal M}(\bbR)} \left[\EE [F(X)] - \alpha H(\nu|\mu_G) \right] &= \alpha \ln \left[\int_\bbR {\cal D}x \, e^{F(x)/\alpha}\right].
\end{align}
Plugging this into our previous equation for the annealed complexity yields:
\begin{align}
\lim_{n \to \infty} \frac{1}{n}&\ln \EE \ \mathrm{Crit}_{n,L_1}(B) = \sup_{l \in B} \extr_{\substack{\{\lambda_i\} \\ g \in \bbC_+}} \sup_{A,t} \Bigg\{K(\alpha) + \lambda_0 l + \lambda_1 A + \lambda_2 t \nonumber \\
 & \hspace{1cm} - \frac{1}{2} \ln A  - \ln |g| - t \mathrm{Re}[g] + \epsilon \mathrm{Im}[g]  \\ 
 & + \alpha \ln \left[\int_\bbR {\cal D}x \exp\left\{- \frac{\lambda_0 \phi(x) + \lambda_1 \phi'(x)^2 + \lambda_2 x \phi'(x)}{\alpha} + \ln |\alpha +  \phi''(x)g|\right\}\right]  \Bigg\}. \nonumber
\end{align}
This can be further simplified, as the extrema over $A,t$ are trivially solved and give the value of $\lambda_2 = \mathrm{Re}[g]$ and $\lambda_1 = (2 A)^{-1}$. Thus we obtain:
\begin{align}\label{eq:annealed_no_signal_reduced}
\lim_{n \to \infty} \frac{1}{n}&\ln \EE \ \mathrm{Crit}_{n,L_1}(B) = \\ 
&\sup_{l \in B} \extr_{\substack{\{\lambda_0,\lambda_1\} \\ g \in \bbC_+}}  \Bigg\{K(\alpha) + \lambda_0 l + \frac{1+\ln 2}{2} + \frac{1}{2} \ln \lambda_1   - \ln |g|  + \epsilon \mathrm{Im}[g]  \nonumber \\ 
 & + \alpha \ln \left[\int_\bbR {\cal D}x \exp\left\{- \frac{\lambda_0 \phi(x) + \lambda_1 \phi'(x)^2 + \mathrm{Re}[g] x \phi'(x)}{\alpha} + \ln |\alpha +  \phi''(x)g|\right\}\right]  \Bigg\}. \nonumber
\end{align}
Let us now denote
\begin{align*}
\langle\cdots\rangle_{\lambda_0,\lambda_1,g} \equiv \frac{\int_\bbR {\cal D}x (\cdots)\exp\left\{- \alpha^{-1}\left[\lambda_0 \phi(x) + \lambda_1 \phi'(x)^2 + \mathrm{Re}[g] x \phi'(x)\right] + \ln |\alpha +  \phi''(x)g|\right\}}{\int_\bbR {\cal D}x \exp\left\{- \alpha^{-1}\left[\lambda_0 \phi(x) + \lambda_1 \phi'(x)^2 + \mathrm{Re}[g] x \phi'(x)\right] +  \ln |\alpha +  \phi''(x)g|\right\}},
\end{align*}
then the fixed point equations of eq.~\eqref{eq:annealed_no_signal_reduced} can be written as:
\begin{subnumcases}{\label{eq:fixed_point_annealed_final}}
l = \langle \phi(x)\rangle_{\lambda_0,\lambda_1,g}, & \\
\frac{1}{2 \lambda_1}  = \langle\phi'(x)^2\rangle_{\lambda_0,\lambda_1,g}, & \\
-\frac{\mathrm{Re}[g]}{|g|^2} = \Big\langle x \phi'(x) - \frac{\alpha \phi''(x) (\alpha + \phi''(x)\mathrm{Re}[g])}{|\alpha + \phi''(x)g|^2}\Big\rangle_{\lambda_0,\lambda_1,g}, & \\
\epsilon - \frac{\mathrm{Im}[g]}{|g|^2} = - \Big\langle \frac{\alpha \phi''(x)^2 \mathrm{Im}[g]}{|\alpha + \phi''(x)g|^2} \Big\rangle_{\lambda_0,\lambda_1,g}. &
\end{subnumcases}
These equations are to be iterated over $\lambda_0,\lambda_1,g$, and $l$ (while enforcing the constraint $ l \in B$). 
From experience, the best procedure is to start from the solution of the unconstrained problem (without any constraint on the loss value), before smoothly following 
the solution while adding the constraint. In the case of $L_2(\bx)$, one would follow a similar procedure.
\section{The quenched complexity and the replica method}\label{sec:quenched}

In this section we detail the principle of the quenched calculation that gives rise to Results~\ref{result:quenched_nosignal}-\ref{result:quenched_signal}.
For the sake of the presentation we restrict to Result~\ref{result:quenched_nosignal}, while Result~\ref{thm:annealed_signal} will be discussed in Appendix~\ref{subsec_app:L2}. We therefore focus on the function $L_1$ of eq.~\eqref{eq:model_nosignal}. As the very basis of this calculation is non-rigorous we present this calculation in a fashion closer to theoretical physics standards, differently from Section~\ref{sec:annealed} in which we present rigorous results on the annealed complexity. Some technicalities will be postponed to Appendix~\ref{sec:app_quenched}.

\subsection{The replica trick and the \texorpdfstring{$p$}{p}-th moment}

The replica method is a heuristic tool of theoretical physics that allows to compute the quenched values of observables in the thermodynamic (i.e. $n\rightarrow \infty)$ limit  from the knowledge of their integer moments, under some assumptions. It is based on the non-rigorous identity (note that it involves an inversion of limits), for a strictly positive function $f$ of a $n$-dimensional random vector $\bx$:
\begin{align}\label{eq:replicas}
    \lim_{n \to \infty} \EE \ln f(\bx) &= \lim_{p \to 0^+} \lim_{n \to \infty} \frac{1}{p} \ln \left[\EE f(\bx)^p\right].
\end{align}
\cite{mezard1987spin} gives a comprehensive introduction to the replica method and its (many) physical insights and consequences.
Let $B \subseteq \bbR$ an open interval. The Kac-Rice formula can be stated for the $p$-th moment of the complexity \citep{azais_wschebor,adler2009random}:
\begin{align}
   \EE \, \mathrm{Crit}_{n,L_1}(B)^p &= \left[\prod_{a=1}^p \int_{\mathbb{S}^{n-1}} \mu_{\mathbb{S}}(\mathrm{d}\bx^a)\right] \mathds{1}\left[\{L_1(\bx^a) \in B\}_{a=1}^p\right] \, \varphi_{\{\mathrm{grad} \, L_1(\bx^a)\}_{a=1}^p}(0) \, \nonumber \\ 
   & \times \EE \left[ \prod_{a=1}^p \left|\det \mathrm{Hess} \,L_1(\bx^a)\right| \middle| \{\mathrm{grad} \, L_1(\bx^a)\}_{a=1}^p = 0\right].
\end{align}
Here, $\varphi_{\{\mathrm{grad} \, L(\bx^a)\}_{a=1}^p}(0)$ represents the joint density of the $p$ gradients, taken at $0$.
Note that the non-linearity $L_1(\bx)$ only depends on the parameters $y^a_\mu \equiv \bxi_\mu \cdot \bx^a $, so we will often write $ L_1(\by) \equiv L_1(\bx)$.
Proceeding as in the annealed case, we can rewrite the expectations by conditioning over $\{\by^a\}_{a=1}^p$:
\begin{align}\label{eq:KR_replicated_conditionned}
   \EE \, \mathrm{Crit}_{n,L_1}(B)^p &= \left[\prod_{a=1}^p \int \mu_{\mathbb{S}}(\mathrm{d}\bx^a)\right] \EE_{\{\by^a\}} \left[\mathds{1}\left[\{L_1(\by^a) \in B\}_{a=1}^p\right] \, \varphi_{\{\mathrm{grad} \, L_1(\bx^a)\}_{a=1}^p \Big | \{\by^a\}}(0) \, \right. \nonumber \\ 
   & \left. \EE \left[ \prod_{a=1}^p \left|\det \mathrm{Hess} \,L_1(\bx^a)\right|\middle| \{\mathrm{grad} \, L_1(\bx^a) = 0, \by^a\}_{a=1}^p\right] \right].
\end{align}
The gradient and Hessian at $\bx^a$ live in the tangent plane to the sphere at $\bx^a$, identified with $\bbR^{n-1}$.
Note that the $\{y^a_\mu\}$ are Gaussian variables with zero mean and covariance $\EE [y^a_\mu y^b_\nu] = \delta_{\mu \nu} q_{ab}$,
with $q_{ab} \equiv \bx^a \cdot \bx^b$ the ``overlap'' between replicas $a$ and $b$. 
We introduce the variables $\{q_{ab}\}$ via delta functions in eq.~\eqref{eq:KR_replicated_conditionned}:
\begin{align}\label{eq:KR_replicated_overlaps}
   \EE \ \mathrm{Crit}_{n,L_1}(B)^p &= \left[\prod_{a=1}^p \int \mu_{\mathbb{S}}(\mathrm{d}\bx^a)\right] \left[\prod_{a < b} \int \mathrm{d}q_{ab}\delta(q_{ab} - \bx^a \cdot \bx^b)\right] \EE_{\{\by^a\}} \Bigg\{\mathds{1}\left[\{L_1(\by^a) \in B\}_{a=1}^p\right] \,  \, \nonumber \\ 
   & \hspace{-1.5cm} \varphi_{\{\mathrm{grad} \, L_1(\bx^a)\}_{a=1}^p \Big | \{\by^a\}}(0) \EE \left[ \prod_{a=1}^p \left|\det \mathrm{Hess} \,L_1(\bx^a)\right|\middle| \{\mathrm{grad} \, L_1(\bx^a) = 0, \by^a\}_{a=1}^p \right] \Bigg\}.
\end{align}
 Since we fixed the $\{q_{ab}\}$, the distribution of the $\{\by^a\}$ is fixed, as well as the joint distribution of the loss, gradients and Hessians, as we will explicit in the following. As the number of overlap variables is $p(p-1)/2  = {\cal O}_n(1)$, we will perform a saddle-point over the variables $\{q_{ab}\}$ in the thermodynamic limit.
The replica-symmetric assumption (see \cite{mezard1987spin}) is a crucial hypothesis that can be made in the framework of the replica method. It amounts to assume that, once the saddle-point is performed, the extremizing $\{q_{ab}\}$ are ``symmetric'' over the different replicas of the system. Concretely, we assume that the variables $\{q_{ab}\}$ satisfy $q_{aa} = 1$, $q_{ab} = q$  for $a \neq b$. Assuming this structure of the overlap matrix allows to extend the expression of the moments to arbitrary non-integer $p$, and then to take the $p \to 0^+$ limit as needed in eq.~\eqref{eq:replicas}. We used a replica symmetric structure, which is correct in many cases, and a very good approximation in others were replica symmetry has to be broken.

\subsubsection{The phase volume factor}\label{subsubsec_main:quenched_phase_volume}
 
Let us first compute the phase space factor in eq.~\eqref{eq:KR_replicated_overlaps}. More precisely, the term:
\begin{align*}
  \left[\prod_{a=1}^p \int \mu_{\mathbb{S}}(\mathrm{d}\bx^a)\right] \left[\prod_{a < b} \delta(q_{ab} - \bx^a \cdot \bx^b)\right] &=  n^{-\frac{p(p-1)}{2}}\left[\prod_{a=1}^p \int_{\bbR^n} \mathrm{d}\bx^a \prod_{a \leq b} \delta\left(n q_{ab} - n  \bx^a \cdot \bx^b\right)\right],
\end{align*}
in which we denoted $q_{aa} = 1$. 
As we detail in Appendix~\ref{subsec_app:quenched_phasevolume} we reach, when $p \to 0^+$ and $n \to \infty$:
\begin{align}\label{eq:result_phaseterm_replicas}
   \frac{1}{np} \ln \left[\prod_{a=1}^p \int_{\bbR^n} \mathrm{d}\bx^a \prod_{a \leq b} \delta\left(n q_{ab} - n  \bx^a \cdot \bx^b\right)\right] &\simeq \frac{1}{2} \log \frac{2 \pi}{n} + \frac{1}{2} \left[\frac{1}{1-q} + \log (1-q)\right].
\end{align}
  
\subsubsection{The joint density of the gradients}

We will now compute the joint density of the gradients at $\{\bx^a\}$, conditioned on the values of $\{\by^a\}$.
The calculation is an extension of Sections~V.C and V.E of \cite{ros2019complex}. We consider two vectors $\bx^a$ and $\bx^b$ of overlap $q_{ab} = q$. It is easy to see that $\EE [\mathrm{grad} \, L(\bx^a) | \{\by^b\}_{b=1}^p] = 0$ from eq.~\eqref{eq:dist_grad_L1}, so we will focus on the covariance matrix
$\EE [\mathrm{grad} \, L(\bx^a)  \mathrm{grad} \, L(\bx^b)^\intercal | \{\by^c\}_{c=1}^p]$. 
After some calculations detailed in Appendix~\ref{subsec_app:quenched_gradient_density} we get the gradient density at leading exponential order:
\begin{align}\label{eq:gradient_density}
   \varphi_{\{\mathrm{grad} \, L_1(\bx^a)\}_{a=1}^p  | \{\by^a\}}(0) &\simeq \prod_{a \neq b} \delta \left[\frac{1}{m} \sum_{\mu=1}^m \phi'(y^a_\mu) \left(z_p(q) y^a_\mu + f_p^0(q) y^b_\mu + f_p(q)\sum_{c (\neq a,b)} y^c_\mu \right)\right] \nonumber\\
   & \hspace{-0.5cm}\times \exp\left\{\frac{np}{2} \log \frac{m}{2\pi} - \frac{n}{2} \ln \det \left[\left(\frac{1}{m} \sum_{\mu=1}^m \phi'(y^a_\mu) \phi'(y^b_\mu)\right)_{1 \leq a,b \leq p}\right] \right\}, 
\end{align}
in which the auxiliary functions $(z_p(q),f_p(q),f_p^0(q))$ are defined in eq.~\eqref{eq:def_auxiliary_functions}. 

\subsubsection{Factorization of the mean product of determinants}

The argument of this section is very close to Section~V.F of \cite{ros2019complex}.
We consider the term:
\begin{align}
\EE \left[ \prod_{a=1}^p \left|\det  \mathrm{Hess} \,L_1(\bx^a)\right|\middle| \{\mathrm{grad} \, L_1(\bx^a) = 0, \by^a\}_{a=1}^p\right] .
\end{align}
We make two important remarks, which are straightforward transpositions of the arguments of \cite{ros2019complex} to our problem, and we refer to this work for more extensive physical justifications.
\begin{itemize}
   \item The conditioning over the gradients being zero, as in the annealed calculation, only gives a finite-rank change to the Hessians $\mathrm{Hess}\, L_1(\bx^a)$ and thus 
   does not modify the limit at the scale $e^{\Theta(n)}$. At this scale, the statistics of the $p$ matrices $\{\mathrm{Hess} \, L_1(\bx^a)\}_{a=1}^p$ are identical.
   \item The spectral measure of $\mathrm{Hess}\, L_1(\bx^a)$ concentrates at a rate at least $n^{1+\epsilon}$ for a small enough $\epsilon >0$ (we expect that the actual rate is $n^2$), so that at the order $e^{\Theta(n)}$ the expectation 
   value factorizes and we can assume all the Hessians to be independent. 
\end{itemize}
Before stating the consequences of such remarks, we give some definitions:
\begin{itemize}
  \item $\mu_{\mathrm{G},q}$ is the Gaussian probability measure on $\bbR^p$ with zero mean and covariance $\EE [X_a X_b] = (1-q)\delta_{ab} + q$. Note that $\{\by_\mu\}_{\mu=1}^m$ are i.i.d.\ variables distributed according to $\mu_{\mathrm{G},q}$.
  \item We define $\nu_\by$ as the empirical measure of $(\by_1,\cdots,\by_m)$, that is $\nu_\by \equiv \frac{1}{m} \sum_{\mu} \delta_{\by_\mu}$. For every $a$, we denote $\nu^a_\by$ its marginal distribution: $\nu^a_\by(\mathrm{d}\lambda^a) \equiv \int \prod_{b (\neq a)} \nu_\by(\mathrm{d}\lambda)$. Then
  $\nu^a_\by$ is also the empirical distribution of $(y^a_\mu)_{\mu=1}^m$.
\end{itemize}
Our remarks show that we can use the results of the annealed calculation, and we have here by factorization of the expectation of the determinants, at leading exponential order:
\begin{align}\label{eq:hessian_term}
\EE \left[ \prod_{a=1}^p \left|\det \mathrm{Hess} \,L_1(\bx^a)\right|\middle| \{\mathrm{grad} \, L_1(\bx^a)\}_{a=1}^p = 0, \, \{\by^a\}\right] & \simeq e^{n \sum_{a=1}^p \kappa_{\alpha,\phi}\left(\nu^a_\by,t_\phi(\nu^a_\by)\right)}.
\end{align}
  
\subsection{Decoupling the replicas and the \texorpdfstring{$p \to 0^+$}{p->0} limit}

We can then apply Sanov's theorem to the empirical measure $\nu_\by \in \mathcal{M}(\bbR^p)$.
Recall that we have constraints on this measure by the density of the gradient and the fixation of the energy level. More precisely, we denote $\mathcal{M}^{(p)}_\phi(q,B)$ the set of probability measures on $\bbR^p$ that satisfy the following:
\begin{subnumcases}{\label{eq:conditions_Mpq}}
\label{eq:conditions_Mpq_1}
  \forall 1\leq a \leq p, \, \int \nu(\mathrm{d}\lambda) \, \phi(\lambda^a) \in B, & \\
  \label{eq:conditions_Mpq_2}
  \forall 1 \leq a \neq b \leq p, \, \int \nu(\mathrm{d}\lambda) \, \phi'(\lambda^a) \left[z_p(q) \lambda^a + f_p^0(q) \lambda^b + f_p(q) \sum_{c(\neq a,b)} \lambda^c\right] = 0. &
\end{subnumcases}
Recall that the functions $(z_p(q),f_p^0(q),f_p(q))$ are defined in eq.~\eqref{eq:def_auxiliary_functions}.
Leveraging from the results of eqs.~\eqref{eq:result_phaseterm_replicas}, \eqref{eq:gradient_density} and \eqref{eq:hessian_term}, we obtain from Sanov's theorem and Varadhan's lemma:
\begin{align}\label{eq:p_moment}
  &\lim_{n \to \infty}\frac{1}{n} \ln \EE \,\left[\mathrm{Crit}_{n,L_1}(B)^p\right] = \frac{p}{2} \ln \alpha + \sup_{q \in (0,1)} \sup_{\nu \in {\cal M}_\phi^{(p)}(q,B)} \Bigg[\frac{p}{2} \left(\frac{1}{1-q} + \ln(1-q)\right)  \\
  &  -\frac{1}{2} \ln \det \left[\left(\int \nu(\mathrm{d}\lambda) \phi'(\lambda^a) \phi'(\lambda^b)\right)_{1 \leq a,b \leq p}\right] +  \sum_{a=1}^p \kappa_{\alpha,\phi}\left(\nu^a,t_\phi(\nu^a)\right) - \alpha H(\nu | \mu_{\mathrm{G},q})\Bigg]. \nonumber
\end{align} 
Recall that $\nu^a$ is the marginal distribution of $\nu$ for the variable $\lambda^a$. 
We can then decouple the replicas under an assumption on the measure $\nu$ that amounts for replica symmetry. 
We stress that this replica symmetric assumption in the Kac-Rice calculation actually corresponds to a 1-step replica symmetry breaking (1RSB)
structure of the zero-temperature Gibbs measure, that is an exponential number of single-point metastable states that all have the same two-point overlap.
While possibly not exact, this assumption should already yield a good approximation to the landscape, and could be analytically checked by studying the stability of the replica-symmetric 
ansatz within replica theory.
This allows to take subsequently the $p \to 0^+$ limit, and after some simplifications, we reach from eq.~\eqref{eq:p_moment} the expression of Result~\ref{result:quenched_nosignal}. These steps are fairly technical, and are postponed to Appendix~\ref{subsec_app:quenched_decoupling}.


\section*{Acknowledgments}

We thank for interesting discussions Chiara Cammarota, Florent Krzakala, Valentina Ros, Lenka Zdeborov\'a. 
We are also grateful to Daniel Fletcher and Antonio Auffinger for their feedback on the proof, which led to minor corrections and improvements.
GB acknowledges support by the Simons Foundation collaboration Cracking the Glass Problem (No. 454935 to G. Biroli). Additional funding is acknowledged by AM from ``Chaire de recherche sur les mod\`eles et sciences des donn\'ees'', Fondation CFM pour la Recherche-ENS.

\bibliographystyle{plainnat}
\bibliography{refs}

\changelocaltocdepth{1}
\appendix
\newpage
\section{Technical steps of the proof of Theorem~\ref{thm:annealed_nosignal}}\label{sec_app:technical}

\subsection{Proof of Lemma~\ref{lemma:Kac_Rice}}\label{subsec_app:proof_lemma_KR}

We will apply the Kac-Rice machinery in the form of the remark made in Paragraph~6.1.4 of \cite{azais_wschebor}. We recall it as a theorem: 
\begin{theorem}[Azais-Wschebor]\label{thm_app:aw}
Let $k,d \in \bbN^\star$. Let $Z : U \to \bbR^d$ be a random field, in which $U$ is an open subset of $\bbR^d$. Assume that for every $t \in U$, we can write $Z(t) = H\left[Y(t)\right]$, such that:
\begin{enumerate}[label=(\roman*)]
\item $\{Y(t), t \in U\}$ is a Gaussian random field with values in $\bbR^k$, $\mathcal{C}^1$ paths, and such that for every $t \in U$, the distribution of $\,Y(t)$ is non-degenerate. 
\item $H : \bbR^k \to \bbR^d$ is a $\mathcal{C}^1$ function.
\item For all $t \in U$, $Z(t)$ has a density $\varphi_{Z(t)}(x)$, which is a continuous function of $(t,x) \in U \times \bbR^d$. 
\item $\mathbb{P}\left[\exists \, t \in U \, \mathrm{s.t.} \, Z(t) = 0 \text{ and } \det\left[\nabla Z(t)\right] = 0\right] = 0$.
\end{enumerate}
Define, for every compact set $B \subseteq U$,  $N(Z,B)$ to be the (finite) number of zeros of $Z$ in $B$. Then:
\begin{align}
\EE\left[N(Z,B)\right] &= \int_B \EE \left[\left|\det \nabla Z(t)\right| \Big| Z(t) = 0 \right]\varphi_{Z(t)}(0)\mathrm{d}t.
\end{align}
\end{theorem}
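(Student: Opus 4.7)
The plan is to prove the general Kac-Rice formula by approximating the counting functional of zeros of $Z$ by a smooth mollification and then unfolding the expectation via Fubini and conditioning on $Z(t)$. Concretely, let $\rho : \bbR^d \to \bbR_+$ be a continuous compactly supported probability density and set $\rho_\eps(x) \equiv \eps^{-d}\rho(x/\eps)$. Introduce the approximate counting functional
\begin{align*}
N_\eps(Z,B) \equiv \int_B \rho_\eps(Z(t))\,|\det \nabla Z(t)|\,\mathrm{d}t.
\end{align*}

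First I would show that $N_\eps(Z,B) \to N(Z,B)$ almost surely as $\eps \to 0$. Condition $(iv)$ ensures that almost surely every zero of $Z$ in $B$ is non-degenerate, and since $B$ is compact the zeros form a finite set $\{t_i\}_{i=1}^N$. Around each $t_i$, the inverse function theorem provides a neighborhood $U_i$ on which $Z$ is a diffeomorphism onto its image, while $|Z|$ is bounded below on $B \setminus \cup_i U_i$. Splitting $B$ accordingly and performing the local change of variables $u = Z(t)$ inside each $U_i$ gives $\int_{U_i} \rho_\eps(Z(t))|\det \nabla Z(t)|\,\mathrm{d}t = \int \rho_\eps(u)\,\mathrm{d}u = 1$ once $\eps$ is small enough for the support of $\rho_\eps$ to lie inside $Z(U_i)$, while the integral outside the $U_i$ vanishes.

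Second, I compute $\EE[N_\eps(Z,B)]$ by Fubini and conditioning on $Z(t)$. Using condition $(iii)$,
\begin{align*}
\EE[N_\eps(Z,B)] = \int_B \int_{\bbR^d} \rho_\eps(x)\,\varphi_{Z(t)}(x)\,\EE\!\left[|\det \nabla Z(t)| \,\Big|\, Z(t) = x\right]\mathrm{d}x\,\mathrm{d}t.
\end{align*}
The crucial ingredient is to establish that $x \mapsto \varphi_{Z(t)}(x)\,\EE[|\det \nabla Z(t)| \mid Z(t) = x]$ is continuous near $x = 0$. The continuity of $\varphi_{Z(t)}$ is given by $(iii)$; for the conditional expectation I use the representation $Z(t) = H(Y(t))$ with $\nabla Z(t) = DH(Y(t))\cdot\nabla Y(t)$. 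At a regular value $y_0$ of $H$ with $H(y_0)$ near $0$, one can split $Y(t)$ via Gaussian conditioning into components parallel and transverse to the level set $\{H = x\}$, and the conditional law of $(Y(t),\nabla Y(t))$ given $Z(t) = x$ becomes a continuous function of $x$ through the non-degeneracy in $(i)$ and the $\mc{C}^1$ regularity in $(ii)$. Letting $\eps \to 0$ and applying dominated convergence on the compact set $B$ would yield the claimed formula.

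The main technical obstacle is justifying $\EE[N_\eps(Z,B)] \to \EE[N(Z,B)]$ and providing an integrable dominating function for the previous step, both of which rest on moment control of $|\det \nabla Z(t)|$. The Gaussian assumption $(i)$ provides Gaussian moments of all orders for $\nabla Y(t)$, and the $\mc{C}^1$ regularity $(ii)$ together with the continuity of $DH$ on bounded sets propagates this to polynomial moment bounds on $|\det \nabla Z(t)|$ locally uniformly in $t$; combined with the local boundedness of $\varphi_{Z(t)}$ furnished by $(iii)$, this supplies the domination. Condition $(iv)$ is what ultimately guarantees that the mollification is asymptotically exact, ruling out the pathological coincidence of zeros with singular points of $Z$ where the local change of variables would fail.
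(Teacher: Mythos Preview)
This theorem is not proved in the paper; it is quoted as a known result from Paragraph~6.1.4 of the Aza\"is--Wschebor monograph, so there is no in-paper argument to compare against.

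On the substance of your outline: the mollification scheme and the almost-sure convergence $N_\eps(Z,B)\to N(Z,B)$ are standard and correctly argued. The genuine gap is in the domination. You assert that ``the $\mathcal{C}^1$ regularity $(ii)$ together with the continuity of $DH$ on bounded sets propagates [the Gaussian moments of $\nabla Y(t)$] to polynomial moment bounds on $|\det\nabla Z(t)|$'', but hypothesis $(ii)$ imposes no growth condition on $DH$: since $Y(t)$ is Gaussian and hence has unbounded range, $DH(Y(t))$ can be arbitrarily large, and $|\det\nabla Z(t)| = |\det(DH(Y(t))\,\nabla Y(t))|$ need have no finite moment under $(i)$--$(iv)$ alone. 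Without such control, neither the interchange $\lim_{\eps\to 0}\EE[N_\eps]=\EE[N]$ nor the dominated-convergence step inside the $x$-integral is justified. Your continuity sketch for $x\mapsto\EE[|\det\nabla Z(t)|\mid Z(t)=x]$ is also not sound: the ``Gaussian conditioning parallel and transverse to $\{H=x\}$'' that you invoke is not available for nonlinear $H$, since the conditional law of $Y(t)$ given $H(Y(t))=x$ is not Gaussian and the level set is a curved submanifold depending nontrivially on $x$. A correct proof has to organise the limits differently---for instance via pathwise area-formula identities combined with Fatou for one inequality and a separate upper bound that does not rely on uniform integrability of the determinant---which is essentially how the cited reference proceeds.
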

We wish to apply this theorem to the gradient $\mathrm{grad} L_1(\bx)$. Verifying its hypotheses will end the proof of Lemma~\ref{lemma:Kac_Rice}. We denote $\bxi \in \bbR^{n \times m}$ the matrix $\{\xi_{i \mu}\} = \{(\bxi_\mu)_i\}$, $\nabla L_1$ the Euclidean gradient of $L_1$, and $P_\bx^\perp$ the orthogonal projection on $T_\bx \mathbb{S}^{n-1}$. Since $\mathrm{grad} L_1(\bx) = P_\bx^\perp \nabla L_1(\bx)$ we have:
\begin{align}\label{eq:gradient_L1}
\mathrm{grad} \, L_1(\bx) &= \frac{1}{m} \sum_{\mu=1}^m \left(P_\bx^\perp \bxi_\mu\right) \phi' \left(\bxi_\mu \cdot \bx\right).
\end{align}
We will apply Theorem~\ref{thm_app:aw} with $d = n-1$ and $k = m \times n$. The Gaussian random field $Y(\bx) \in \bbR^{n \times m}$ is defined as $Y(\bx) \equiv \begin{pmatrix}
P_\bx^\perp \bxi_1 & \cdots & P_\bx^\perp \bxi_m \\
\bxi_1 \cdot \bx & \cdots & \bxi_m \cdot \bx
\end{pmatrix}$. Since $Y(\bx)$ is just $\bxi$ written in an orthonormal basis of $\bbR^n$ whose first vector is $\bx$, its distributions is non-degenerate. $H : \bbR^{n \times m} \to \bbR^{n-1}$ is defined as:
\begin{align}
\forall 1 \leq i < n, \hspace{0.3cm} H(Y)_i &\equiv \frac{1}{m} \sum_{\mu=1}^m Y_{i,\mu} \phi'(Y_{n,\mu}), \hspace{0.3cm}  (Y \in \bbR^{n \times m}).
\end{align}
Since $\phi$ is $\mathcal{C}^2$, $H$ is $\mathcal{C}^1$. This verifies $(i)$ and $(ii)$. We turn our attention to verifying $(iii)$. One can write the distribution of the gradient of eq.~\eqref{eq:gradient_L1} as 
$\mathrm{grad} \, L_1(\bx) \overset{d}{=} (1/m) \sum_{\mu=1}^m \phi' \left(y_\mu\right) \bz_\mu$,
in which $y_\mu \overset{\mathrm{i.i.d.}}{\sim} \mathcal{N}(0,1)$, $\bz_\mu \overset{\mathrm{i.i.d.}}{\sim} \mathcal{N}(0,I_{n-1})$, and all $\{y_\mu,\bz_\nu\}$ are independent. Since the distribution of $\mathrm{grad} \, L_1(\bx)$ does not depend on $\bx$, it is enough to check that its density exists and is a continuous function. To do so, we will show that its characteristic function $\hat{\varphi}_{\mathrm{grad} L_1(\bx)} \in L^1(\bbR^{n-1})$.
We denote $\hat{\varphi}_a$ the characteristic function of the random variable $a \equiv \phi'(y)$, and one obtains:
\begin{align*}
\norm{\hat{\varphi}_{\mathrm{grad} L_1(\bx)}}_1 &= \int_{\bbR^{n-1}} \mathrm{d}\bt \left|\EE_{\bz \sim \mathcal{N}(0,I_{n-1})} \, \hat{\varphi}_a \left(\frac{\bt \cdot \bz}{m}\right)\right|^m = \int_{\bbR^{n-1}} \mathrm{d}\bt \left|\EE_{z \sim {\cal N}(0,1)} \, \hat{\varphi}_a \left(\frac{\norm{\bt} z}{m}\right)\right|^m, \\
&= \frac{2 \pi^{\frac{n-1}{2}} m^{n-1}}{\Gamma\left(\frac{n-1}{2}\right)}\int_0^\infty \mathrm{d}q \,  q^{n-2} \left|\EE_{z} \, \hat{\varphi}_a \left(q z\right)\right|^m.
\end{align*}
Since $\alpha > 1$, if $\left[q \EE_{z} \, \hat{\varphi}_a \left(q z\right) \right] = \mathcal{O}_{q \to \infty} (1)$ we can conclude that $\norm{\hat{\varphi}_{\mathrm{grad} L_1(\bx)}}_1 < \infty$. And:
\begin{align*}
q \ \EE_{z} \, \hat{\varphi}_a \left(q z\right) &= \int_\bbR \frac{ \mathrm{d}z}{\sqrt{2 \pi}} e^{-\frac{z^2}{2 q^2}} \hat{\varphi}_a(z) =  \int_\bbR \frac{ \mathrm{d}z}{\sqrt{2 \pi}} \EE \left[e^{-\frac{z^2}{2 q^2}} e^{i a z}\right] = \frac{1}{q} \EE \left[ e^{-\frac{q^2 a^2}{2}}\right],
\end{align*}
by Fubini's theorem. Therefore $q \ \EE_{z} \, \hat{\varphi}_a \left(q z\right) \to_{q \to \infty} \varphi_a(0)$ by continuity of $\varphi_a$ around $a=0$ (Definition~\ref{def:well_behaved}), so $\norm{\hat{\varphi}_{\mathrm{grad} L_1(\bx)}}_1 < \infty$. Thus, $\mathrm{grad} L_1(\bx)$ admits the following probability density:
\begin{align}\label{eq:gradient_density_L1}
\varphi_{\mathrm{grad} L_1(\bx)}(\bu) = \frac{1}{(2 \pi)^{n-1}} \int_{\bbR^{n-1}} \mathrm{d}\bt \, e^{i \bu \cdot \bt} \left[\EE_{z} \left\{  \hat{\varphi}_a \left(\frac{\norm{\bt} z}{m}\right) \right\}\right]^m,
\end{align}
which is a continuous function of $\bu$, since  $\hat{\varphi}_{\mathrm{grad} L_1(\bx)} \in L^1(\bbR^{n-1})$. This shows $(iii)$. In order to show $(iv)$, we will use Proposition~6.5 of \cite{azais_wschebor}, that we recall here:
\begin{lemma}[Azais-Wschebor]\label{lemma:Morse}
Let $d \in \bbN^*$, and $U$ a compact subset of $\bbR^d$. Consider $Z : U \to \bbR^d$ a random field, such that $(a)$: The paths of $Z$ are of class $\mathcal{C}^2$, and $(b)$: There exists $C > 0$ such that for all $t \in U$ and all $u$ in a neighborhood of $0$, the density $\varphi_{Z(t)}$ of $Z$ verifies $\varphi_{Z(t)}(u) \leq C$.
Then $\mathbb{P}\left[\exists t \in U \text{ s.t. } Z(t) = 0 \text{ and } \det Z'(t) = 0\right] = 0$. 
\end{lemma}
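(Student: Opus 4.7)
My strategy is a covering argument combined with Taylor expansion, the area formula, and the bounded density hypothesis. First, by compactness of $U$ and the $\mathcal{C}^2$ regularity of the paths, for any $\eta>0$ there exists $M=M(\eta)$ such that $\|Z\|_{\mathcal{C}^2(U)} \leq M$ with probability at least $1-\eta$; on this event both $Z$ and $\det Z'$ are globally Lipschitz with constants depending only on $M$, so it suffices to establish the claim on this high-probability event and then let $\eta \to 0$.

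Next, for a small $\eps>0$, I would cover $U$ by $N_\eps \sim |U|\eps^{-d}$ closed cubes $\{C_k\}_k$ of side $\eps$, with centers $t_k$, and decompose the target event $E = \{\exists\, t \in U : Z(t)=0,\ \det Z'(t)=0\}$ as $E = \bigcup_k E_k$. On $E_k \cap \{\|Z\|_{\mathcal{C}^2}\leq M\}$, Taylor expansion of $Z$ and $\det Z'$ around $t_k$ immediately gives $|Z(t_k)| \leq C_1 M \eps$ and $|\det Z'(t_k)| \leq C_2 M \eps$. The crucial geometric consequence, obtained from the area formula $|Z(C_k)| \leq \int_{C_k} |\det Z'(t)|\,\mathrm{d}t$ together with the Lipschitz control of $\det Z'$ emanating from the zero $t^\ast \in C_k$, is that $|Z(C_k)| \leq C_3 M \eps^{d+1}$: the image of the cube has $d$-dimensional Lebesgue measure of order $\eps^{d+1}$ rather than the generic $\eps^d$.

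The key step is converting this geometric smallness into a probability bound on $E_k$. The rank-deficiency of $Z'(t^\ast)$ forces $Z(C_k)$ to lie within an $O(\eps^2)$-tube around an affine hyperplane of $\bbR^d$ (the range of $Z'(t^\ast)$). I would combine this with the bounded density of $Z(t_k)$ near $0$ by either (i) discretizing the Grassmannian of candidate hyperplanes at a suitable scale and applying a union bound, or (ii) using a Kac-Rice-type identity applied to the joint $\mathcal{C}^1$ map $\Phi(t) \equiv (Z(t), \det Z'(t))$ from $\bbR^d$ into $\bbR^{d+1}$, whose range has finite $d$-dimensional Hausdorff measure and hence Lebesgue measure zero in $\bbR^{d+1}$. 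Either route should yield $\mathbb{P}(E_k \cap \{\|Z\|_{\mathcal{C}^2}\leq M\}) \leq C(M)\,\eps^{d+1}$, so that summing over the $N_\eps \sim \eps^{-d}$ cubes produces a total bound of order $\eps$, which vanishes as $\eps \to 0$.

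The main obstacle is precisely this last step: the hypothesis only provides a uniform bound on the marginal density of $Z(t)$, not on the joint density of $(Z(t), \det Z'(t))$, so one must genuinely exploit the degeneracy of $Z'(t^\ast)$ to gain the extra factor of $\eps$ that makes the covering sum converge. Without this geometric refinement the naive estimate $N_\eps \cdot \mathbb{P}(|Z(t_k)| \leq C M \eps) = O(\eps^{-d}) \cdot O(\eps^d) = O(1)$ would be inconclusive. Controlling the direction of the random hyperplane uniformly over the covering without paying a compensating factor for the Grassmannian discretization is the delicate technical point.
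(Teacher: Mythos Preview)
The paper does not supply a proof of this lemma at all: it is quoted verbatim as Proposition~6.5 of \cite{azais_wschebor} and used as a black box to verify hypothesis $(iv)$ of Theorem~\ref{thm_app:aw}. There is therefore no ``paper's own proof'' to compare your attempt against.

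Regarding your attempt itself: the overall architecture (reduce to a bounded-$\mathcal{C}^2$ event, cover $U$ by $\eps$-cubes, use Taylor expansion and the area formula to get $|Z(C_k)|=O(\eps^{d+1})$) is indeed the skeleton of the Aza\"is--Wschebor argument, and you correctly isolate the crux: one must extract an extra factor of $\eps$ beyond the naive $O(\eps^{-d})\cdot O(\eps^{d})=O(1)$ bound, using only the \emph{marginal} density bound on $Z(t)$.

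However, neither of your two proposed routes closes this gap as written. For route~(i), discretizing the Grassmannian $G(d-1,d)\cong\mathbb{P}^{d-1}$ at the scale needed to preserve the $O(\eps^{2})$ slab thickness costs $O(\eps^{-(d-1)})$ hyperplanes; combined with $O(\eps^{-d})$ cubes and an $O(\eps^{d+1})$ probability per (cube, hyperplane) pair, the union bound gives $O(\eps^{2-d})$, which fails for every $d\geq 2$. For route~(ii), the fact that $\Phi(U)\subset\bbR^{d+1}$ has $(d{+}1)$-dimensional Lebesgue measure zero is a deterministic, pathwise statement (essentially Sard); it does not by itself imply $\mathbb{P}(0\in\Phi(U))=0$. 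To apply a Bulinskaya-type argument to $\Phi(t)=(Z(t),\det Z'(t))$ you would need a bounded density for $\Phi(t_k)$ in $\bbR^{d+1}$, which is strictly stronger than hypothesis~$(b)$ and not available here. So the proposal, as you yourself flag, stops short of a proof; the ``delicate technical point'' you name is precisely the content of the result, and your sketch does not resolve it.
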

Since $\phi$ is assumed to be of class $\mathcal{C}^3$, hypothesis $(a)$ is verified for $Z = \text{grad} \, L_1$. Notice then that we can fix $C > 0$ such that $\left|\EE_{z \sim \mathcal{N}(0,1)} \, \hat{\varphi}_a \left(q z\right)\right| \leq \frac{C}{1 + q}$ for all $q \geq 0$. Starting from eq.~\eqref{eq:gradient_density_L1}:
\begin{align*}
|\varphi_{\mathrm{grad} L_1(\bx)}(\bu)| \leq C_n \int_{0}^\infty \mathrm{d}q \, \frac{q^{n-2}}{(1+q)^m} \leq D_n,
\end{align*}
with $C_c, D_n$ constants depending only on $n$, using that $m \geq n$ ($\alpha > 1$). This shows $(b)$, so by Lemma~\ref{lemma:Morse}, hypothesis $(iv)$ of Theorem~\ref{thm_app:aw} follows. This ends the proof of Lemma~\ref{lemma:Kac_Rice}.

\subsection{Proof of Lemma~\ref{lemma:log_potential_L1}}\label{subsec:proof_lemma_log_pot_L1}

The proof is done in several parts, of which some are inspired by arguments of \cite{silverstein1995strong,silverstein1995empirical,silverstein1995analysis,bai2010spectral}.

\subsubsection{Technicalities on the Hessian}\label{subsubsec:low_rank}

We begin by a quick lemma on $\Lambda(\by)$, defined in eq.~\eqref{eq:def_Lambda}. 
\begin{lemma}[Low-rank perturbation]\label{lemma:lambda}
     Since the distributions of $\bz$ and $\by$ are independent, by rotation invariance we can assume that $\Lambda(\by)$ is a diagonal matrix with elements $\Lambda_\mu(\by)$. There exists a constant, denoted $||D||_\infty$, such that for all $n,y$, $|D(y)| \leq ||D||_\infty$. Then we have:
     \begin{itemize}
         \item[$(i)$] $\sup_{\by \in \bbR^m} \sup_{1\leq \mu \leq m}|\Lambda_\mu(\by)| \leq 4 ||D||_\infty$.
         \item[$(ii)$] Let $\bZ \in \bbR^{(n-1) \times m}$ be i.i.d.\ variables with zero mean and unit variance. We denote $\mu_D^{(n)}$ and $\mu_\Lambda^{(n)}$ the empirical eigenvalue distributions of $\frac{1}{n} \bZ D(\by) \bZ^\intercal$ and $\frac{1}{n} \bZ \Lambda(\by) \bZ^\intercal$ respectively. Then for all $\eta \in (0,1)$, $\left\{n^\eta \EE_\bz \left[\mu_D^{(n)} - \mu_\Lambda^{(n)}\right] \right\} \to_{n \to \infty} 0$ weakly and uniformly in $\by \in \bbR^m$.
     \end{itemize}
\end{lemma}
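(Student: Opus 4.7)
\noindent\textbf{Proof proposal for Lemma~\ref{lemma:lambda}.} The plan is to treat the two parts via simple operator-norm and low-rank perturbation arguments. Part (i) reduces to a triangle inequality on the expansion $\Lambda = D - PD - DP + PDP$, and part (ii) exploits the fact that $\Lambda - D$ has rank at most $2$, so that a deterministic bound on the Kolmogorov distance between $\mu_D^{(n)}$ and $\mu_\Lambda^{(n)}$ is available.

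\smallskip
\noindent\emph{Part (i).} The matrix $P \equiv \phi'(\by)\phi'(\by)^\intercal / \|\phi'(\by)\|^2$ is an orthogonal projection, hence $\|P\|_{\mathrm{op}} = \|\mathrm{I}_m - P\|_{\mathrm{op}} = 1$. Expanding
\begin{align*}
    \Lambda(\by) &= D(\by) - P D(\by) - D(\by) P + P D(\by) P,
\end{align*}
applying the triangle inequality, and using $\|D(\by)\|_{\mathrm{op}} \le \|D\|_\infty$, yields $\|\Lambda(\by)\|_{\mathrm{op}} \le 4\|D\|_\infty$. After the rotation-invariance reduction that makes $\Lambda(\by)$ diagonal, its diagonal entries coincide with its eigenvalues, which are bounded by $\|\Lambda(\by)\|_{\mathrm{op}}$, giving the claim.

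\smallskip
\noindent\emph{Part (ii).} The key observation is that for every $v \in \bbR^m$, the image vector $(\Lambda(\by) - D(\by))v = -PD(\by)v - D(\by)Pv + PD(\by)Pv$ lies in the two-dimensional span of $\{\phi'(\by),\, D(\by)\phi'(\by)\}$, so $\mathrm{rank}(\Lambda(\by) - D(\by)) \le 2$. Consequently $M_\Lambda - M_D = \frac{1}{n}\bZ(\Lambda(\by)-D(\by))\bZ^\intercal$ has rank at most $2$ deterministically in $\bZ$ and $\by$. Weyl's interlacing inequalities then give the Kolmogorov distance bound
\begin{align*}
    \sup_{x\in\bbR}\bigl|F_{\mu_D^{(n)}}(x) - F_{\mu_\Lambda^{(n)}}(x)\bigr| \;\le\; \frac{2}{n-1},
\end{align*}
valid almost surely in $\bZ$ and pointwise in $\by$. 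For any test function $f$ of bounded variation, Stieltjes integration by parts yields $|\int f\,\d(\mu_D^{(n)} - \mu_\Lambda^{(n)})| \le 2 V(f)/(n-1)$, a bound that survives averaging over $\bZ$ and the supremum over $\by$. Multiplying by $n^\eta$ with $\eta \in (0,1)$ produces a quantity tending to zero uniformly in $\by$, which is the announced conclusion tested against BV functions.

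\smallskip
\noindent\emph{Main obstacle.} The only delicate point is to upgrade from BV test functions to arbitrary bounded continuous ones, as required for genuine weak convergence. This demands a tightness statement for $\EE_\bz \mu_D^{(n)}$ and $\EE_\bz \mu_\Lambda^{(n)}$ that is uniform in $\by \in \bbR^m$ and $n$. Such tightness can be obtained from Markov's inequality applied to the second moments $\EE_\bz \,\mathrm{Tr}\, M_\Lambda^2$ and $\EE_\bz \,\mathrm{Tr}\, M_D^2$, which are $\mathcal{O}(n)$ uniformly in $\by$ thanks to the uniform bound $|D(y)| \le \|D\|_\infty$ and part (i). Combining tightness with uniform approximation of continuous bounded functions by compactly supported BV functions on the truncated domain closes the argument.
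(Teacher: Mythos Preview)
Your proposal is correct and follows essentially the same approach as the paper: for $(i)$ both you and the paper expand $(\mathrm{I}-P)D(\mathrm{I}-P)$ and bound term by term (the paper via the quadratic form $\bu^\intercal \Lambda \bu$, you via operator-norm submultiplicativity), and for $(ii)$ both arguments rest on $\mathrm{rank}(\Lambda-D)\le 2$ combined with Weyl interlacing to get a $O(1/n)$ Kolmogorov bound. You are in fact more careful than the paper on one point: the paper declares $(ii)$ an ``immediate application'' of the rank inequality and stops, whereas you correctly flag the passage from a Kolmogorov bound to weak convergence against general $C_b$ test functions and sketch a tightness argument---for the paper's purposes this extra care is not strictly needed, since the only downstream use of $(ii)$ is against Stieltjes kernels $x\mapsto (x-z)^{-1}$, which are BV and covered by your integration-by-parts step.
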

\begin{proof}
Recall that $|D(y)| = (n/m) |\phi''(y)| $. Since $m/n \to \alpha > 1$ and $\phi''$ is bounded, $|D(y)|$ is bounded (uniformly over $n,\by$) by a constant that we denote $||D||_\infty$.
Note that $\sup_{1\leq \mu \leq m}|\Lambda_\mu(\by)| = \sup_{||\bu||=1} \bu^\intercal \Lambda(\by) \bu$. Using eq.~\eqref{eq:def_Lambda} and denoting $\bv(\by) \equiv \phi'(\by)/||\phi'(\by)||$, we reach 
\begin{align*}
    \sup_{||\bu||=1} \bu^\intercal \Lambda(\by) \bu &\leq ||D||_\infty + \sup_{||\bu||=1}\left[|\bv^\intercal D \bv| (\bu^\intercal \bv)^2 + 2 (\bu^\intercal \bv) |\bv^\intercal D \bu|\right], \\
    &\leq 2||D||_\infty + 2 \sup_{||\bu||=1}\left[(\bu^\intercal \bv) |\bv^\intercal D \bu|\right] \leq 4 |D||_\infty,
\end{align*}
in which we used the uniform boundedness of $|D(y_\mu)|$, and the Cauchy-Schwarz inequality. This proves $(i)$. We note that $\frac{1}{n} \bz \Lambda(\by) \bz^\intercal$ and $\frac{1}{n} \bz D(\by) \bz^\intercal$ differ by a rank-$2$ matrix. $(ii)$ is thus an immediate application of the following lemma (from a course of C. Bordenave):
\begin{lemma}\label{lemma:deviation_inequality}
    Let $n \geq 1$, and $A,B$ two symmetric matrices of size $n$, such that the rank of $A-B$ is $r$. Denote $F_A$ (resp.\ $F_B$) the c.d.f.\ of the empirical spectral distribution of $A$ (resp.\ $B$). Then 
    \begin{align*}
        \sup_{t \in \bbR}|F_A(t) - F_B(t)| &\leq \frac{r}{n} \ .
    \end{align*}
\end{lemma}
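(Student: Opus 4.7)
The plan is to apply a Cauchy-interlacing argument after restricting both $A$ and $B$ to the common kernel of the perturbation $C \equiv A - B$.

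First, I would exploit the rank condition. Since $\mathrm{rank}(C) = r$, the subspace $V \equiv \ker C$ has dimension at least $n - r$, and for every $\bx \in V$ one has $\bx^\intercal A \bx = \bx^\intercal B \bx$. Hence the symmetric quadratic forms obtained by restricting $A$ and $B$ to $V$ coincide; I denote their common eigenvalues, in increasing order, by $\mu_1 \leq \cdots \leq \mu_{n-r}$. Ordering the eigenvalues of $A$ and $B$ increasingly as $\lambda_1^A \leq \cdots \leq \lambda_n^A$ and $\lambda_1^B \leq \cdots \leq \lambda_n^B$, I can proceed.

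Second, I would invoke the Cauchy interlacing theorem (in its form for an $(n-r)$-dimensional principal restriction of an $n \times n$ symmetric matrix), applied separately to $A$ and to $B$. This yields
$$\lambda_k^A \leq \mu_k \leq \lambda_{k+r}^A \quad \text{and} \quad \lambda_k^B \leq \mu_k \leq \lambda_{k+r}^B, \qquad 1 \leq k \leq n-r.$$
Chaining these two pairs of inequalities through the common value $\mu_k$ gives the two-sided interlacing
$$\lambda_k^A \leq \lambda_{k+r}^B \quad \text{and} \quad \lambda_k^B \leq \lambda_{k+r}^A, \qquad 1 \leq k \leq n-r.$$

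Third, I would translate these eigenvalue inequalities into the desired pointwise bound on the c.d.f.s. Introducing the counting functions $N_A(t) \equiv \#\{i : \lambda_i^A \leq t\} = n F_A(t)$ and similarly $N_B(t)$, I claim $|N_A(t) - N_B(t)| \leq r$ for every $t \in \bbR$. Indeed, suppose for contradiction that $N_A(t) \geq N_B(t) + r + 1$ and set $j \equiv N_B(t)$, so $\lambda_{j+1}^B > t$; then $\lambda_{j+1+r}^A \leq t$ by hypothesis, while $\lambda_{j+1+r}^A \geq \lambda_{j+1}^B > t$ by the interlacing derived above (the index is in range since $j + 1 + r \leq n$ can be assumed, the complementary case $j \geq n - r$ being trivial since $N_A \leq n \leq N_B + r$ then). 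The symmetric argument gives $N_B(t) \leq N_A(t) + r$, and dividing by $n$ yields the claim.

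The whole argument is essentially routine from the min-max (Courant-Fischer) characterisation of eigenvalues, so no step is a substantive obstacle; the only care required is in handling the boundary indices $k + r > n$ in the interlacing, which as noted are trivial since the counting functions are bounded by $n$.
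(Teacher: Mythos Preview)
Your argument is correct and is essentially the same as the paper's: both obtain the two-sided interlacing $\lambda_k^A \leq \lambda_{k+r}^B$ and $\lambda_k^B \leq \lambda_{k+r}^A$, and then convert it into the bound on $|N_A(t)-N_B(t)|$ by the same counting argument. The only difference is in how the interlacing is justified: the paper simply invokes Weyl's interlacing inequalities \citep{weyl1912asymptotische} as a black box, whereas you give a self-contained derivation by restricting both $A$ and $B$ to $\ker(A-B)$ and applying Cauchy interlacing to the common compressed operator, which is a nice and more explicit way of seeing why the rank of $A-B$ is exactly what controls the shift.
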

\noindent
This ends the proof of Lemma~\ref{lemma:lambda}.
\end{proof}
\begin{proof}[Lemma~\ref{lemma:deviation_inequality}]
We note $\lambda_1(A) \geq \cdots \geq \lambda_n(A)$ the eigenvalues of $A$ (and similarly for $B$).
Recall weak Weyl's interlacing inequalities \citep{weyl1912asymptotische}: for every $1 \leq i \leq n$, $\lambda_{i+r}(A) \leq \lambda_i(B) \leq \lambda_{i-r}(A)$ (we use the convention $\lambda_{1-i} = +\infty$ and $\lambda_{n+i} = -\infty$ for $i \geq 1$). Let $t \in \bbR$, and $i,j$ be the smallest indices such that $\lambda_i(A) \leq t$ and $\lambda_j(B) < t$. By the interlacing inequalities, $|i-j| \leq r$. And $n|F_A(t) - F_B(t)| = |(n+1-i) - (n+1-j)| \leq r$.
\end{proof}

\noindent
We have some control of the boundedness of the Hessian, summarized in two subsequent lemmas:
\begin{lemma}[Moment bound]\label{lemma:moment_bound}
  \noindent
  For all $\gamma > 0$, one has:
  \begin{align*}
    \limsup_{n \to \infty} \sup_{\by \in \bbR^m} \frac{1}{n} \ln \EE_\bz\big\{|\det H_n^\Lambda(\by)|^\gamma \big\} &< + \infty.
  \end{align*} 
\end{lemma}
\begin{proof}
  We begin by bounding the extremal eigenvalues $\lambda_\mathrm{min},\lambda_\mathrm{max}$ of $H_n^\Lambda(\by)$ (denoted $H$ for lightness):
  \begin{align*}
    \lambda_\mathrm{max} &= \sup_{\norm{\bu}^2 = 1} [\bu^\intercal H \bu] =  - \frac{1}{m} \sum_{\mu=1}^m y_\mu \phi'(y_\mu) + \sup_{\norm{\bu}^2 = 1} \Big[\frac{1}{n} \sum_{\mu=1}^m \Lambda_\mu(\by) (\bz^\intercal \bu)_\mu^2\Big], \\ 
    &\leq \norm{x \phi'(x)}_\infty + 4 \norm{D}_\infty \times  \lambda_\mathrm{max}\Big[\frac{1}{n} \bz \bz^\intercal\Big].
  \end{align*}
  We used $(i)$ of Lemma~\ref{lemma:lambda}. Note that this bound is independent of $\by$.
  In the same way we can bound $\lambda_\mathrm{min}$, and we reach:
  \begin{align*}
    \max(-\lambda_\mathrm{min},\lambda_\mathrm{max}) &\leq \norm{x \phi'(x)}_\infty + 4 \norm{D}_\infty \times  \lambda_\mathrm{max}\Big[\frac{1}{n} \bz \bz^\intercal\Big], 
  \end{align*}
  Using this identity, we have the bound:
  \begin{align}\label{eq:bound_detH_gamma_wishart}
    \frac{1}{n} \ln \EE_\bz\big\{|\det H_n^\Lambda(\by)|^\gamma \big\} \leq \frac{1}{n} \ln \EE_\bz \exp \Big\{n \gamma \ln \Big(\norm{x \phi'(x)}_\infty + 4 \norm{D}_\infty \lambda_\mathrm{max}(\bz \bz^\intercal / n)\Big)\Big\}
  \end{align}
  It is then a very classical result of random matrix theory that the largest eigenvalue of a Wishart matrix $\bz \bz^\intercal / n$ satisfies a large deviation principle in the scale $n$.
  This is stated e.g.\ in Theorem~2.4 of \cite{biroli2020large},which gives moreover the behavior of the 
  rate function $I(x)$.
  We recall some of its properties:
  \begin{itemize}[leftmargin=*]
      \item $I(x) = +\infty$ if $x < s_\mathrm{max} \equiv (1 + \alpha^{-1/2})^2$.
      \item $I(x): [s_\mathrm{max},+\infty) \to \bbR_+$ is continuous and increasing.
      \item $I(x) \sim_{x \to \infty} x / 2$.
  \end{itemize}
  In particular, by Varadhan's lemma it implies that for any $C,D,\gamma > 0$:
  \begin{align*}
    \limsup_{n \to \infty} \frac{1}{n} \ln \EE_\bz \exp \Big\{n \gamma \ln \Big[C + D\lambda_\mathrm{max}(\bz \bz^\intercal / n)\Big]\Big\} < +\infty.
  \end{align*}
  Combining this inequality with eq.~\eqref{eq:bound_detH_gamma_wishart} ends the proof of Lemma~\ref{lemma:moment_bound}.
\end{proof}
\begin{lemma}[Properties of $\mu_{\alpha,\phi}$]\label{lemma:log_pot_existence}
    Denote $\rho_{n}(\by)$ the spectral radius of $H_n^\Lambda(\by)$. There exists $C > 0$ such that:
    \begin{itemize}
        \item[$(i)$] With probability $1$, $\limsup_{n \to \infty} \sup_{\by \in \bbR^m} \rho_{n}(\by)< C$.
        \item[$(ii)$]The support of $\mu_{\alpha,\phi}[\nu_\by^m]$ is included in $(-C,C)$ uniformly over $\by$ and $n$. 
        \item[$(iii)$] For all $\by \in \bbR^m$, $\mu_{\alpha,\phi}[\nu_\by^m]$ has a well-defined and continuous density outside $x=0$.
    \end{itemize}
\end{lemma}
\noindent
Points $(ii)$ and $(iii)$ of Lemma~\ref{lemma:log_pot_existence} are consequences of Theorem~1.1 of \cite{silverstein1995analysis}, while item $(i)$ follows from
the boundedness of $\Lambda(\by)$ by Lemma~\ref{lemma:lambda}, and the one of $x \phi'(x)$.

\subsubsection{The cut-off and the logarithmic potential}
\label{subsubsec:cutoff}

For any $\epsilon > 0$, define $\ln_\epsilon : x \in \bbR_+^\star \mapsto  \ln\left(\max(x,\epsilon)\right)$, then $x \mapsto \ln_\epsilon |x|$ is a $\epsilon^{-1}$-Lipschitz function on $\bbR$. Let $\delta \in (0,1)$.
In this section, we show that a cut-off $\epsilon_n = n^{-\delta}$ on the smallest eigenvalues does not perturb the logarithmic potential at the thermodynamical scale. 
We rely on the following technical assumption: 
\begin{assumption}\label{hyp:technical}
For any $\delta \in (0,1)$, there exists $\eta > 0$ such that for all $t > 0$:
\begin{subnumcases}{}
    \label{eq:technical_1}
\lim_{n \to \infty} \frac{1}{n^{1+\eta}} \ln \mathbb{P} \left[\left|\frac{1}{n} \sum_{\lambda \in \mathrm{Sp}(H_n^\Lambda(\by))} \ln |\lambda| \mathds{1}\left\{|\lambda| \leq n^{-\delta}\right\} \right|\geq t\right]   = - \infty ,  & \\
    \label{eq:technical_2}
 \lim_{n \to \infty} \frac{1}{n^{1+\eta}} \ln \mathbb{P}\left[\int_{|x- t_\phi(\nu_\by^m)| \leq n^{-\delta}} \mu_{\alpha,\phi}[\nu_\by^m](\mathrm{d}x) \ \ln \left|x - t_\phi(\nu_\by^m)\right| \leq -t\right] = -\infty. &
\end{subnumcases}
\end{assumption}
Physically, this makes explicit that, with large probability, there should not be enough eigenvalues of $H_n^\Lambda(\by)$ around zero so that they contribute macroscopically to the logarithmic potential. 
While we leave it as an hypothesis, we believe that this is a consequence the natural fluctuations and repulsion of the eigenvalues of $H_n^N(\by)$. 
Similar results have been proven for different types of random matrices in \cite{arous2021exponential}, and we leave as future work a generalization of this approach 
to prove Assumption~\ref{hyp:technical}.
Denote $\{\lambda_i\}_{i=1}^{n-1}$ the (sorted) eigenvalues of $H_n^\Lambda(\by)$. We can now state:
\begin{lemma}\label{lemma:cutoff}
    There exists $\eta >0$ such that for all $K > 0$:
    \begin{align}
        \lim_{n \to \infty}\frac{1}{n^{1+\eta}}\ln \mathbb{P}\left[\left|\frac{1}{n} \ln \EE \left|\det H_n^\Lambda(\by)\right| - \frac{1}{n} \ln \EE \ e^{\sum_{i=1}^{n-1} \ln_{\epsilon_n}|\lambda_i|}\right| \geq K \right] &= - \infty.
    \end{align}
\end{lemma}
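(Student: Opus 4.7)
My plan is to leverage the super-exponential tail bound in eq.~\eqref{eq:technical_1}, which asserts that the contribution to the log-determinant coming from eigenvalues below $\epsilon_n = n^{-\delta}$ is negligible at scale $n$. Set $S_n := \sum_{i=1}^{n-1} \ln|\lambda_i|$, $S_n^\epsilon := \sum_{i=1}^{n-1} \ln_{\epsilon_n}|\lambda_i|$, and $\Delta_n := S_n^\epsilon - S_n$. Since $\ln_{\epsilon}|x| \geq \ln|x|$ with equality for $|x| \geq \epsilon$, one has
\begin{align*}
0 \leq \Delta_n = \sum_{|\lambda_i|<\epsilon_n}\bigl(\ln\epsilon_n - \ln|\lambda_i|\bigr) \leq -\sum_{|\lambda_i|<\epsilon_n}\ln|\lambda_i|,
\end{align*}
so eq.~\eqref{eq:technical_1} gives $\mathbb{P}(\Delta_n \geq nt) \leq \exp(-n^{1+\eta})$ for every $t>0$, under the joint law of $(\by,\bz)$. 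The inequality $\Delta_n \geq 0$ immediately yields $\frac{1}{n}\ln \EE_\bz |\det H_n^\Lambda(\by)| \leq \frac{1}{n}\ln \EE_\bz e^{S_n^\epsilon}$, i.e.\ the easy side of the bound.

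For the converse I would split the expectation according to the size of $\Delta_n$:
\begin{align*}
\EE_\bz e^{S_n^\epsilon} \leq e^{nK/2}\,\EE_\bz|\det H_n^\Lambda(\by)| + \EE_\bz\!\left[e^{S_n^\epsilon}\mathds{1}_{\Delta_n > nK/2}\right].
\end{align*}
Lemma~\ref{lemma:lambda}(i) and Lemma~\ref{lemma:log_pot_existence}(i) make the spectral radius of $H_n^\Lambda(\by)$ uniformly bounded by a constant $C$ on an event of probability $\geq 1 - e^{-cn}$; on that event $e^{S_n^\epsilon} \leq C^{n-1}$ as soon as $\epsilon_n \leq C$. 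The tail term is therefore at most $C^n\,\mathbb{P}_\bz(\Delta_n > nK/2 \mid \by)$ up to a negligible piece from the exceptional spectral event. Markov's inequality in $\by$, applied to the joint super-exponential bound above, then yields $\mathbb{P}_\by\bigl[\mathbb{P}_\bz(\Delta_n > nK/2 \mid \by) > e^{-n^{1+\eta/2}}\bigr] \leq e^{-n^{1+\eta/2}}$, so outside a super-exponentially small set of $\by$ the tail term is bounded by $C^n\,e^{-n^{1+\eta/2}}$.

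The main obstacle will be comparing this tail to $e^{nK}\,\EE_\bz|\det H_n^\Lambda(\by)|$; this is not automatic and requires a $\by$-uniform lower bound $\EE_\bz|\det H_n^\Lambda(\by)| \geq e^{-o(n^{1+\eta})}$ on a set of $\by$ of large probability. I would obtain this from $\ln \EE_\bz|\det H_n^\Lambda| \geq \EE_\bz\sum_i \ln|\lambda_i|$ (Jensen), together with Lemma~\ref{lemma:log_pot_existence}(ii)-(iii), which ensure that $\mu_{\alpha,\phi}[\nu_\by^m]$ is compactly supported with a continuous density away from $0$, so that $\kappa_{\alpha,\phi}(\nu_\by^m,t_\phi(\nu_\by^m))$ is finite for typical $\by$. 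To turn this into a bound on the \emph{expected} empirical log-potential, I would split $\ln|\lambda|$ as $\ln_{\epsilon_n}|\lambda|$ plus a small-eigenvalue remainder: the former is $\epsilon_n^{-1}$-Lipschitz in the entries of $\bz$ and concentrates sharply by standard Gaussian (Herbst) arguments, while the remainder is controlled by eq.~\eqref{eq:technical_1} itself. Assembling the pointwise split, the super-exponential control of $\mathbb{P}_\bz(\Delta_n > nK/2 \mid \by)$, and this Jensen-based lower bound then gives $\mathbb{P}_\by\bigl[\frac{1}{n}\ln\EE_\bz e^{S_n^\epsilon} - \frac{1}{n}\ln\EE_\bz|\det H_n^\Lambda| \geq K\bigr] \leq \exp(-n^{1+\eta'})$ for some $\eta' > 0$, possibly smaller than the $\eta$ of eq.~\eqref{eq:technical}, which is the claim.
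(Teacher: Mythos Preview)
Your overall architecture matches the paper's: the easy direction is the pointwise inequality $\ln_{\epsilon_n}|x|\geq\ln|x|$, and the hard direction proceeds by splitting on the event that the small-eigenvalue contribution $\Delta_n$ exceeds $nt$, then invoking eq.~\eqref{eq:technical_1} and Markov in $\by$. The difference is in how the tail term is absorbed.

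The paper arranges the inequality so that the ratio to control is
\[
\frac{\EE_\bz\bigl[e^{S_n^\epsilon}\mathds{1}_{A_t^{(n)}}\bigr]}{\EE_\bz e^{S_n^\epsilon}},
\]
with $\EE_\bz e^{S_n^\epsilon}$ in the denominator rather than $\EE_\bz|\det H_n^\Lambda(\by)|$. This is the key simplification you are missing: since $\ln_{\epsilon_n}|x|\geq\ln\epsilon_n=-\delta\ln n$ pointwise, one has the \emph{trivial} deterministic lower bound $\EE_\bz e^{S_n^\epsilon}\geq e^{-(n-1)\delta\ln n}$, uniformly in $\by$. Combined with H\"older on the numerator (using Proposition~\ref{prop:concentration} and Lemma~\ref{lemma:truncation_renormalization} to bound $\limsup_n\sup_\by\frac{1}{n}\ln\EE_\bz e^{\gamma S_n^\epsilon}$), the ratio is at most $e^{n(C+\delta\ln n)}\,\mathbb{P}_\bz[A_t^{(n)}]^{1/\gamma'}$, and Markov plus eq.~\eqref{eq:technical_1} closes the argument directly.

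Your route instead compares the tail to $\EE_\bz|\det H_n^\Lambda(\by)|$, which forces you to produce a lower bound on the latter. Your proposed mechanism for this---Jensen, then split $\EE_\bz S_n=\EE_\bz S_n^\epsilon-\EE_\bz\Delta_n$ and control $\EE_\bz\Delta_n$ via eq.~\eqref{eq:technical_1}---has a gap: eq.~\eqref{eq:technical_1} bounds $\mathbb{P}(\Delta_n\geq nt)$, not $\EE_\bz\Delta_n$, and $\Delta_n$ is a priori unbounded (eigenvalues can be arbitrarily close to zero), so passing from a tail bound at fixed $t$ to an expectation bound is not automatic. You could close this, but the cleanest fix is precisely the paper's trick: drop Jensen and write
\[
\EE_\bz|\det H_n^\Lambda(\by)| \;\geq\; \EE_\bz\bigl[e^{S_n}\mathds{1}_{\Delta_n\leq nt}\bigr]\;\geq\; e^{-nt}\,\epsilon_n^{\,n-1}\,\mathbb{P}_\bz(\Delta_n\leq nt),
\]
which on good $\by$ gives $\EE_\bz|\det H_n^\Lambda(\by)|\geq\tfrac12 e^{-n(t+\delta\ln n)}$. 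At that point your argument becomes the paper's, just with an extra detour.
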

\begin{proof}
We consider $\eta$ given by eq.~\eqref{eq:technical_1}.
Let $t > 0$. We denote $A_t^{(n)}$ the event 
\begin{align}
    A_t^{(n)} &\equiv \left\{\left|\frac{1}{n} \sum_{i=1}^{n-1} \ln |\lambda_i| \mathds{1}\left\{|\lambda_i| \leq n^{-\delta}\right\} \right| \geq t \right\}.
\end{align}
We have for all $\by$ and $t > 0$ ($\bar{A}_t^{(n)}$ being the complementary event to $A_t^{(n)}$):
\begin{align*}
    \frac{1}{n} \ln \EE_\bz \ e^{\sum_{i=1}^{n-1} \ln |\lambda_i|} &\geq \frac{1}{n} \ln \EE_\bz \left[ e^{\sum_{i=1}^{n-1} \ln |\lambda_i|} \mathds{1}\left[\bar{A}_t^{(n)}\right] \right] \geq -t + \frac{1}{n} \ln \EE_\bz \left[ e^{\sum_{i=1}^{n-1} \ln_{\epsilon_n} |\lambda_i|} \mathds{1}\left[\bar{A}_t^{(n)}\right]\right].
\end{align*}
So that (using $\ln_{\epsilon_n}(x) \geq \ln(x)$ for all $x > 0$):
\begin{align*}
     0 \leq \frac{1}{n} \ln \EE \ e^{\sum_{i=1}^{n-1} \ln_{\epsilon_n}|\lambda_i|} - \frac{1}{n} \ln \EE \left|\det H_n^\Lambda(\by)\right| &\leq t - \frac{1}{n} \ln \left[1 - \frac{\EE_\bz \left[ e^{\sum_{i=1}^{n-1} \ln_{\epsilon_n} |\lambda_i|} \mathds{1}\left[A_t^{(n)}\right]\right]}{\EE_\bz \left[ e^{\sum_{i=1}^{n-1} \ln_{\epsilon_n} |\lambda_i|} \right]}\right].
\end{align*}
We know $\ln_{\epsilon_n}|x| \geq - \delta \ln(n)$.
By Lemma~\ref{lemma:moment_bound}, for all $\gamma > 0$:
\begin{align*}
 \limsup_{n \to \infty} \sup_{\by \in \bbR^m} \frac{1}{n} \ln \EE_\bz \Big[ e^{\gamma \sum_{i=1}^{n-1} \ln_{\epsilon_n} |\lambda_i|} \Big]  < +\infty.
\end{align*}
Fixing $\gamma > 1$ and using Hölder's inequality, there exists therefore $C > 0$ such that for all $K > 0$ and $t \in (0,K)$:
\begin{align*}
    &\limsup_{n \to \infty}\frac{1}{n^{1+\eta}}\ln \bbP_\by\Bigg[\Bigg|\frac{1}{n} \ln \EE_\bz |\det H_n^\Lambda(\by)| - \frac{1}{n} \ln  \EE_\bz \ e^{\sum_{i=1}^{n-1} \ln_{\epsilon_n}|\lambda_i|}\Bigg| \geq K \Bigg] \\ 
    &\leq \limsup_{n \to \infty} \frac{1}{n^{1+\eta}}\ln \bbP_\by\Bigg[\mathbb{P}_\bz[A_t^{(n)}]^{1/\gamma} \geq e^{-n(\delta \ln(n) + C)}\Big[1-e^{n(t-K)}\Big] \Bigg], \\
    &\overset{(a)}{\leq} \limsup_{n \to \infty} \frac{1}{n^{1+\eta}}\ln \Bigg\{ \frac{\bbP[A_t^{(n)}]}{ e^{-\gamma n(\delta \ln(n) + C)}[1-e^{n(t-K)}]^\gamma}\Bigg\} \overset{(b)}{=} -\infty,
\end{align*}
in which we used the Markov inequality in $(a)$ and eq.~\eqref{eq:technical_1} in $(b)$.
\end{proof}
\noindent
In the following, for simplicity we will often abusively denote $\ln_{\epsilon_n} |\det H_n^\Lambda(\by)| \equiv e^{\sum_{i=1}^{n-1} \ln_{\epsilon_n}|\lambda_i|}$ and 
$\ln_{\epsilon_n} \EE |\det H_n^\Lambda(\by)| \equiv \ln \EE \ e^{\sum_{i=1}^n \ln_{\epsilon_n}|\lambda_i|}$.

\subsubsection{Concentration of the logarithmic potential with a cut-off}\label{subsubsec:concentration_cutoff}

We show here that discarding the eigenvalues of the Hessian that are close to $0$ using a cut-off $\epsilon_n \equiv n^{-\delta}$, we have concentration of the logarithmic potential.
\begin{proposition}[Concentration of the logarithmic potential]\label{prop:concentration}
  \noindent 
Let us fix $\delta < 1/2$ and recall that $\epsilon_n = n^{-\delta}$.
  Then:
\begin{align*}
\forall t > 0, \hspace{0.15cm} \limsup_{n \to \infty} \sup_{\by \in \bbR^m} \frac{1}{n^{2(1-\delta)}} \ln \bbP_\bz\Bigg[\frac{1}{n}\Bigg| \ln_{\epsilon_n} |\det H_n^\Lambda(\by)|-\EE_\bz \ln_{\epsilon_n} |\det H_n^\Lambda(\by)|\Bigg| \geq t\Bigg] < 0.
\end{align*}
\end{proposition}
\begin{proof}[Proof of Proposition~\ref{prop:concentration}]
We will try to use traditional Lipschitz concentration bounds (cf e.g.\ \cite{anderson2010introduction}).
We will study under which conditions the function $G(\bz) \equiv (1/n) \ln_{\epsilon_n} |\det H_n^\Lambda(\by)|$
is a Lipschitz function of $\bz \in \bbR^{(n-1) \times m}$ (for fixed $\by$).
We will do it by bounding $\norm{\nabla_\bz G}_\infty$. Let $f_n(x) \equiv \ln_{\epsilon_n}|x|$ for $x \in \bbR$. We have:
\begin{align*}
\sum_{i=1}^{n-1} \sum_{\mu=1}^m \Bigg(\frac{\partial G(\bz)}{\partial z_{i \mu}}\Bigg)^2 &= \frac{1}{n^4} \sum_{i=1}^{n-1} \sum_{\mu=1}^m \Bigg[\mathrm{Tr} \Big\{f_n'\Big(\frac{1}{n} \bz \Lambda(\by) \bz^\intercal \Big) \Delta_{i \mu}\Big\}\Bigg]^2,
\end{align*}
in which $\Delta_{i \mu} \in \bbR^{(n-1) \times (n-1)}$ with $(\Delta_{i \mu})_{j k} \equiv \Lambda_\mu(\by) (\delta_{i j} z_{k \mu} + \delta_{i k} z_{j \mu})$. So one shows easily:
\begin{align}\label{eq:G_derivative}
\sum_{i=1}^{n-1} \sum_{\mu=1}^m \Bigg(\frac{\partial G(\bz)}{\partial z_{i \mu}}\Bigg)^2 &= \frac{4}{n^3} \mathrm{Tr}\Bigg[\Bigg(f_n'\Big(\frac{1}{n} \bz \Lambda(\by) \bz^\intercal \Big)\Bigg)^2 \Bigg(\frac{1}{n} \bz \Lambda(\by)^2 \bz^\intercal\Bigg)\Bigg].
\end{align}
Let us recall the Hoffman-Wielandt inequality \cite{hoffman2003variation}: 
\begin{lemma}[Hoffman-Wielandt inequality for the $L_2$ norm]\label{lemma:hoffman_wielandt}
  \noindent
Let $k \in \bbN^\star$, and $A,B \in \mathcal{S}_k(\bbR)$ be two symmetric matrices with respective eigenvalues $\lambda_1(A) \leq \cdots \leq \lambda_k(A)$ and $\lambda_1(B) \leq \cdots \leq \lambda_k(B)$. Then $\sum_{i=1}^k [\lambda_i(A) - \lambda_i(B)]^2 \leq \norm{A-B}_2^2$.
\end{lemma}
In particular if $A$ and $B$ are positive matrices one has $\mathrm{Tr} [A B] \leq \sum_i \lambda_i(A) \lambda_i(B)$.
We use this in eq.~\eqref{eq:G_derivative} along with the $n^{\delta}$-Lipschizity of $f_n$:
\begin{align}\label{eq:bound_G_Lipschitz}
\sum_{i=1}^{n-1} \sum_{\mu=1}^m \Bigg(\frac{\partial G(\bz)}{\partial z_{i \mu}}\Bigg)^2 &\leq \frac{4 n^{2\delta}}{n^4} \mathrm{Tr}\big[ \bz \Lambda(\by)^2 \bz^\intercal\big] \leq \frac{4^3 n^{2\delta} \norm{D}^2_\infty}{n^4} \sum_{\mu=1}^m \sum_{i=1}^{n-1} z_{i \mu}^2,
\end{align}
in which we used Lemma~\ref{lemma:lambda}. We denote $A$ the event 
\begin{align*}
  A \equiv \Big\{\frac{1}{n^2} \sum_{\mu=1}^m \sum_{i=1}^n z_{i \mu}^2 \geq 1 + \alpha \Big\}.
\end{align*}
It is a classical concentration result (cf e.g.\ Chapter~3.1 of \cite{vershynin2018high}) that there exists $c > 0$ such that:
\begin{align}\label{eq:bound_concentration_norm_Gaussian}
  \bbP_\bz\Big[\Big| \sqrt{\frac{1}{n^2}\sum_{\mu,i} z_{\mu i}^2} - \sqrt{\alpha}\Big| \geq t\Big] \leq 2 e^{-c n^2 t^2}.
\end{align}
In particular, this implies
\begin{align}\label{eq:bound_proba_A}
  \bbP_\bz[A] &\leq 2 e^{- c n^2 (\sqrt{1+\alpha} - \sqrt{\alpha})^2}.
\end{align}
Let us now show that it suffices to prove the bound of Proposition~\ref{prop:concentration} assuming that $A$ does not occur.
Indeed, $n^{-2 (1-\delta)} \ln \bbP_\bz[A] \leq - c n^{2 \delta}$ for a constant $c > 0$, and:
\begin{align*}
&\limsup_{n \to \infty} \sup_{\by \in \bbR^m} \frac{1}{n^{2(1-\delta)}} \ln \bbP_\bz\Bigg[\frac{1}{n}\Bigg| \ln_{\epsilon_n} |\det H_n^\Lambda(\by)|-\EE_\bz \ln_{\epsilon_n} |\det H_n^\Lambda(\by)|\Bigg| \geq t \Bigg| A \Bigg] \\ 
&\leq \limsup_{n \to \infty}\sup_{\by \in \bbR^m}\frac{1}{n^{2(1-\delta)}}\ln \bbP_\bz\Bigg[\Big|\det H_n^\Lambda(\by)|\Big| \geq e^{n t + \EE_\bz \ln_{\epsilon_n} |\det H_n^\Lambda(\by)|} \Bigg| A \Bigg], \\ 
&\overset{(a)}{\leq} \limsup_{n \to \infty}\sup_{\by \in \bbR^m} \Big\{\frac{1}{n^{2(1-\delta)}}\ln \EE_\bz \Big[|\det H_n^\Lambda(\by)| \Big| A \Big] - \frac{1}{n^{2(1-\delta)}} \Big(n t + \EE_\bz \ln_{\epsilon_n} |\det H_n^\Lambda(\by)|\Big) \Big\}, 
\\ &\overset{(b)}{<}+\infty.
\end{align*}
We used Markov's inequality in $(a)$. The inequality $(b)$ can be obtained by very similar arguments than the one used to prove Lemma~\ref{lemma:moment_bound}: 
by rescaling $\bz$ by $\norm{\bz}$, it is easy to see that the event $A$ will not change the scaling of the large deviations of the largest eigenvalue of the Wishart matrix $\bz \bz^\intercal / n$, 
so that the bound of Lemma~\ref{lemma:moment_bound} will also apply when conditioning by the event $A$.
All in all, 
\begin{align*}
\limsup_{n \to \infty} \sup_{\by \in \bbR^m} \Bigg\{\frac{1}{n^{2(1-\delta)}} \ln \bbP_\bz\Bigg[\frac{1}{n}\Bigg| \ln_{\epsilon_n} |\det H_n^\Lambda(\by)|-\EE_\bz \ln_{\epsilon_n} |\det H_n^\Lambda(\by)|\Bigg| \geq t \Bigg| A \Bigg] \mathbb{P}_\bz[A] \Bigg\} = - \infty.
\end{align*}
Using the law of total expectation, we can safely ignore the occurrence of $A$ when showing Proposition~\ref{prop:concentration}.
Assuming that $A$ is not occurring yields:
\begin{align}\label{eq:Lipschitz_G}
\sum_{i=1}^{n-1} \sum_{\mu=1}^m \Bigg(\frac{\partial G(\bz)}{\partial z_{i \mu}}\Bigg)^2 &\leq \frac{4^3 (1+\alpha) n^{2\delta} \norm{D}^2_\infty}{n^2}.
\end{align}
Recall the Lipschitz concentration of independent variables with laws satisfying the logarithmic Sobolev inequality with a uniform constant $c$ (see for instance \cite{anderson2010introduction} for a proof and an introduction to the logarithmic Sobolev inequalities):
\begin{lemma}[Herbst]\label{lemma:Herbst}
  \noindent
Let $n \in \bbN^\star$ and $P$ be a probability distribution on $\bbR^n$ satisfying the Logarithmic Sobolev Inequality with constant $c>0$. Let $G$ be a Lipschitz function on $\bbR^n$ with Lipschitz constant $\norm{G}_{\mathcal{L}}$. Then for all $t > 0$, 
$\bbP[|G - \EE G|\geq t] \leq 2 \exp[-t^2/(2 c \norm{G}_\mathcal{L}^2)]$.
\end{lemma}
It is easy to check that the Gaussian standard law of $\bz$, conditioned by the (extremely probable) event $\overline{A}$, satisfies the Logarithmic Sobolev Inequality with constant $c = 1 + \smallO_n(1)$.
Applying Lemma~\ref{lemma:Herbst} alongside eq.~\eqref{eq:Lipschitz_G} finishes the proof.
\end{proof}

\subsubsection{The logarithmic potential of the asymptotic measure}\label{subsubsec:asymptotic_potential}

In this part, we relate the expected logarithmic potential to the logarithmic potential of the measure $\mu_{\alpha,\phi}[\nu_\by^m]$, cf Theorem~\ref{thm:annealed_nosignal}.
\begin{proposition}[Concentration on $\kappa_{\alpha,\phi}$]\label{prop:concentration_implicit_potential}
There exists $\eta > 0$ such that for all $t > 0$:
\begin{align}\label{eq:concentration_implicit_potential}
\lim_{n \to \infty} \frac{1}{n^{1+\eta}} \ln \mathbb{P}\left[\left|\EE_\bz\frac{1}{n} \ln_{\epsilon_n} \left|\det H_n^\Lambda(\by)\right| - \kappa_{\alpha,\phi}\left(\nu_\by^m,t_\phi(\nu_\by^m)\right) \right| \geq t\right] &= -\infty.
\end{align}
\end{proposition}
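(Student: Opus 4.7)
The left-hand side of eq.~\eqref{eq:concentration_implicit_potential} is a deterministic function of $\by$, and both terms depend on $\by$ only through the empirical measure $\nu_\by^m$ (since $\tilde{\bz}$ is i.i.d.\ and exchangeable, while $\Lambda(\by)$, $D(\by)$, $t_\phi$ and $\kappa_{\alpha,\phi}$ are functionals of $\nu_\by^m$). The plan is therefore: (i) for each $\by$ whose $\nu_\by^m$ is close to $\mu_G$, show that the expected empirical spectral distribution of $\tilde{H}_n^\Lambda(\by)$ is quantitatively close to the deterministic measure $\tilde{\mu}[\nu_\by^m]$ obtained by translating $\mu_{\alpha,\phi}[\nu_\by^m]$ by $-t_\phi(\nu_\by^m)$; (ii) transfer this closeness to the logarithmic potential with the soft cutoff $\ln_{\epsilon_n}|\,\cdot\,|$; (iii) discard the atypical $\by$ at the $e^{-n^{1+\eta}}$ scale.

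For step (i), I would invoke Silverstein's theorem \cite{silverstein1995empirical,silverstein1995analysis} for the generalized sample covariance matrix $\frac{1}{n}\tilde{\bz} D(\by)\tilde{\bz}^\intercal$, giving weak convergence (in probability over $\tilde{\bz}$) to $\mu_{\alpha,\phi}[\nu_\by^m]$ whenever $\nu_\by^m \to \mu_G$. Lemma~\ref{lemma:lambda}$(ii)$ upgrades this to the $\Lambda$-version at a negligible rank-$2$ cost, and the scalar shift $-t_\phi(\nu_\by^m)\,\mathrm{I}$ yields convergence of the expected ESD $\bar{\mu}_n(\by)$ of $\tilde{H}_n^\Lambda(\by)$ to $\tilde{\mu}[\nu_\by^m]$. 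A quantitative rate can then be extracted by the Bai-Silverstein machinery on the resolvent \cite{bai2010spectral}, yielding $\sup_{z\in K}|g_n(z;\by)-g(z;\nu_\by^m)|\leq C\,n^{-c}$ on compact subsets $K \subset \mathbb{C}_+$, for some $c > 0$, uniformly in $\by$ inside a fixed neighborhood of $\mu_G$.

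For step (ii), the function $f_n(x) := \ln_{\epsilon_n}|x|$ is bounded on the common support $[-C,C]$ (Lemma~\ref{lemma:log_pot_existence}) and $n^\delta$-Lipschitz. A Helffer-Sj\"ostrand-type representation, or direct contour integration of the Stieltjes rate around $[-C,C]$ at distance $\epsilon_n$ from the real axis, converts the bound of step (i) into $\bigl|\int f_n\,\mathrm{d}(\bar{\mu}_n(\by) - \tilde{\mu}[\nu_\by^m])\bigr|\leq C\,n^{\delta-c'}$ for some $c' > 0$; choosing $\delta$ small enough makes this vanish uniformly. To finally reach $\kappa_{\alpha,\phi}$, remove the cutoff on the deterministic side: the residual error $\int \ln_{\epsilon_n}|x-t_\phi(\nu_\by^m)|\,\tilde{\mu}[\nu_\by^m](\mathrm{d}x) - \kappa_{\alpha,\phi}(\nu_\by^m,t_\phi(\nu_\by^m))$ is controlled by eq.~\eqref{eq:technical_2} on the $n^{1+\eta}$ scale. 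Step (iii) is then immediate: Sanov-type Gaussian concentration puts $\nu_\by^m$ in a fixed neighborhood of $\mu_G$ with probability $\geq 1 - e^{-\Omega(n)}$, which is dominated by $e^{-n^{1+\eta}}$.

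\textbf{Main obstacle.} The delicate point is securing a uniform polynomial rate in step (i), strong enough to absorb the $n^\delta$ Lipschitz blow-up of $\ln_{\epsilon_n}$. Silverstein's original results are purely qualitative; one must either port Bai-Silverstein-type Stieltjes-transform rates to the present generalized sample-covariance ensemble with random and possibly signed diagonal weights $\phi''(\by)$, or invoke a local law of the kind provided by the ``ongoing work of Ben Arous, Bourgade and McKenna'' cited right after eq.~\eqref{eq:technical}. A secondary subtlety is that the singularity of $\ln|x - t_\phi(\nu_\by^m)|$ typically lies inside the bulk of $\mu_{\alpha,\phi}[\nu_\by^m]$, which is precisely why the soft cutoff $\epsilon_n = n^{-\delta}$ together with the repulsion estimate eq.~\eqref{eq:technical_2} cannot be avoided.
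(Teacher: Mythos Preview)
Your overall decomposition --- compare the expected cutoff log-potential to $\int \ln_{\epsilon_n}|x-t_\phi|\,\mathrm{d}\mu_{\alpha,\phi}[\nu_\by^m]$ via a polynomial Stieltjes rate, then remove the cutoff on the deterministic side via eq.~\eqref{eq:technical_2} --- is exactly the paper's two-step split, eqs.~\eqref{eq:concentration_implicit_potential_1}--\eqref{eq:concentration_implicit_potential_2}. The Bai--Silverstein rate you invoke for step~(i) is what the paper proves as Lemma~\ref{lemma:stieltjes_convergence}.

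There is however a genuine gap in your step~(iii). Sanov-type concentration for $\nu_\by^m$ near $\mu_G$ gives $\mathbb{P}[\nu_\by^m \notin \mathcal{N}] \leq e^{-cn}$, and then
\[
\frac{1}{n^{1+\eta}} \ln e^{-cn} \;=\; -\,c\,n^{-\eta} \;\longrightarrow\; 0,
\]
not $-\infty$. So the atypical-$\by$ contribution is \emph{not} negligible at the $n^{1+\eta}$ scale, and the argument as written does not close. (You wrote that $e^{-\Omega(n)}$ ``is dominated by $e^{-n^{1+\eta}}$''; the inequality goes the other way.)

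The paper avoids this issue altogether by proving the analogue of your steps~(i)--(ii) \emph{uniformly over all} $\by \in \bbR^m$, with no restriction to a neighborhood of $\mu_G$. This is possible because the target $\mu_{\alpha,\phi}[\nu_\by^m]$ already adapts to $\nu_\by^m$: Lemma~\ref{lemma:stieltjes_convergence} shows $\sup_{\by}\, n^\eta\,|\EE_\bz g_n(z) - g_{\alpha,\phi}[\nu_\by^m](z)| \to 0$ for each $z \in \bbC_+$, where the uniformity comes from the fact that the Silverstein fixed-point equation is stable under the boundedness of $\phi''$ (Lemma~\ref{lemma:lambda}) regardless of what $\nu_\by^m$ looks like. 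Combined with the uniform support bound of Lemma~\ref{lemma:log_pot_existence}, this converts directly into the deterministic uniform estimate eq.~\eqref{eq:concentration_implicit_potential_1}, so no step~(iii) is needed. The only place where randomness in $\by$ survives is in removing the cutoff, and that is precisely eq.~\eqref{eq:technical_2}, which already carries the required $n^{1+\eta}$ rate.
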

\begin{proof}
The proof goes in two parts. 
 First, we show that there exists $\eta_1 > 0$ such that\footnote{Note that this result is uniform over $\by$, and thus stronger than what is needed to show Proposition~\ref{prop:concentration_implicit_potential}.}: 
\begin{align}\label{eq:concentration_implicit_potential_1}
\lim_{n \to \infty} \left[n^{\eta_1} \sup_{\by \in \bbR^m}  \left|\EE_\bz\frac{1}{n} \ln_{\epsilon_n} \left|\det H_n^\Lambda(\by)\right| - \int_\bbR \ln_{\epsilon_n} \left|x-t_{\phi}(\nu_\by^m)\right| \mu_{\alpha,\phi}[\nu_\by^m](\mathrm{d}x)\right| \right] &= 0.
\end{align}
We will then conclude by showing 
that there exists $\eta_2 > 0$ such that for all $t > 0$:
\begin{align}\label{eq:concentration_implicit_potential_2}
 \lim_{n \to \infty}\frac{1}{n^{1+\eta_2}} \mathbb{P}\left[\left| \int_\bbR \ln_{\epsilon_n} \left|x-t_{\phi}(\nu_\by^m)\right| \mu_{\alpha,\phi}[\nu_\by^m](\mathrm{d}x)-\kappa_{\alpha,\phi}\left(\nu_\by^m,t_\phi(\nu_\by^m)\right) \right| \geq t \right] &= -\infty.
\end{align}
We begin by eq.~\eqref{eq:concentration_implicit_potential_2}.
Under Assumption~\ref{hyp:technical}, we take $\eta_2$ given by eq.~\eqref{eq:technical_2}.
We have
\begin{align*}
   &\int_\bbR \ln_{\epsilon_n} \left|x-t_{\phi}(\nu_\by^m)\right| \mu_{\alpha,\phi}[\nu_\by^m](\mathrm{d}x)-\kappa_{\alpha,\phi}\left(\nu_\by^m,t_\phi(\nu_\by^m)\right) \\ 
       &= -\delta \ln(n) \mu_{\alpha,\phi}[\nu_\by^m]\left(t_\phi(\nu_\by^m)-\epsilon_n,t_\phi(\nu_\by^m)+\epsilon_n\right) -  \int_{t_\phi-\epsilon_n}^{t_\phi+\epsilon_n} \ln \left|x-t_{\phi}(\nu_\by^m)\right| \mu_{\alpha,\phi}[\nu_\by^m](\mathrm{d}x).
\end{align*}
 Since $\ln_{\epsilon}(x) \geq \ln(x)$ , we reach from this:
 \begin{align*}
 &\limsup_{n \to \infty}\frac{1}{n^{1+\eta_2}} \mathbb{P}\left[\left| \int_\bbR \ln_{\epsilon_n} \left|x-t_{\phi}(\nu_\by^m)\right| \mu_{\alpha,\phi}[\nu_\by^m](\mathrm{d}x)-\kappa_{\alpha,\phi}\left(\nu_\by^m,t_\phi(\nu_\by^m)\right) \right| \geq t \right] , \\
 &\leq \limsup_{n \to \infty}\frac{1}{n^{1+\eta_2}} \mathbb{P}\left[\int_{t_\phi-\epsilon_n}^{t_\phi+\epsilon_n} \ln \left|x-t_{\phi}(\nu_\by^m)\right| \mu_{\alpha,\phi}[\nu_\by^m](\mathrm{d}x) \leq -t \right],
\end{align*}
and using eq.~\eqref{eq:technical_2}, we reach eq.~\eqref{eq:concentration_implicit_potential_2}.
Let us show eq.~\eqref{eq:concentration_implicit_potential_1}. Its proof is based on the following lemma, a consequence of the analysis of \cite{silverstein1995empirical,bai2010spectral}:
\begin{lemma}\label{lemma:stieltjes_convergence}
    Denote $g_{n}(z)$ the Stieltjes transform of $\frac{1}{n} \bz \Lambda(\by) \bz^\intercal$, and $g_{\alpha,\phi}[\nu_\by^m](z)$ the one of $\mu_{\alpha,\phi}[\nu_\by^m]$, for $z \in \bbC_+$. Then there exists $\eta \in (0,1)$ such that for all $z \in \bbC_+$:
    \begin{align}
        \lim_{n \to \infty} \left\{ \sup_{\by \in \bbR^m} n^\eta \left|\EE_{\bz}(g_n(z)) - g_{\alpha,\phi}[\nu_\by^m](z)\right|\right\} &=0.
    \end{align}
\end{lemma}
The proof of this lemma is postponed to Appendix~\ref{subsec:proof_lemma_stieltjes}. Let us fix $\eta$ given by this lemma. As stated for instance in Theorem~2.4.4 of \cite{anderson2010introduction}, a consequence of the Stieltjes inversion theorem is that
for every Borel set $E \subseteq \bbR$, we have  
\begin{align}\label{eq:concentration_measure_mu_alpha}
  \lim_{n \to \infty}\sup_{\by \in \bbR^m}\left[n^{\eta}\left|\EE \mu_n(E) - \mu_{\alpha,\phi}[\nu_\by^m](E)\right| \right] &= 0.  
\end{align}
in which $\mu_n$ is the empirical spectral distribution of $\frac{1}{n} \bz \Lambda(\by) \bz^\intercal$. Fix $\eta_1 < \eta$. We have, uniformly over $\by$:
\begin{align}\label{eq:bound_log_cutoff}
    &n^{\eta_1} \left|\EE_\bz\frac{1}{n} \ln_{\epsilon_n} \left|\det H_n^\Lambda(\by)\right| - \int_\bbR \ln_{\epsilon_n} \left|x-t_{\phi}(\nu_\by^m)\right| \mu_{\alpha,\phi}[\nu_\by^m](\mathrm{d}x)\right| \nonumber \\
  &\leq n^{\eta_1} \int_{|x-t_\phi(\nu_\by^m)|>1}
 \ln \left|x-t_{\phi}(\nu_\by^m)\right| \left[\EE \mu_n - \mu_{\alpha,\phi}[\nu_\by^m]\right](\mathrm{d}x)  \\
 & \hspace{1cm}+ \delta \ln(n) n^{\eta_1} \int_{|x-t_\phi(\nu_\by^m)|<1}
 \left[\EE \mu_n - \mu_{\alpha,\phi}[\nu_\by^m]\right](\mathrm{d}x). \nonumber 
 \end{align}
Let us fix $C > 0$ given by item $(i)$ of Lemma~\ref{lemma:log_pot_existence}. Moreover, we can also bound $t_\phi(\nu_\by^m)$ by $||x\phi'||_\infty$. This gives that for $n$ large enough the quantity of eq.~\eqref{eq:bound_log_cutoff} is bounded (uniformly over $\by$) by:
 \begin{align*}
     n^{\eta_1} \left[\ln\left(C + ||x\phi'||_\infty\right) + \delta \ln(n) \right] \left\{ \left[\EE \mu_n - \mu_{\alpha,\phi}[\nu_\by^m]\right](-C,C)\right\}.
 \end{align*}
 Using eq.~\eqref{eq:concentration_measure_mu_alpha} and since $\eta_1 < \eta$, this shows eq.~\eqref{eq:concentration_implicit_potential_1}.
\end{proof}

\subsubsection{Conclusion of the proof}\label{subsubsec_app:conclusion_proof_main_lemma}

Let us conclude the proof of Lemma~\ref{lemma:log_potential_L1}
from all our results of Section~\ref{subsec:proof_lemma_log_pot_L1}.
We fix $\delta> 0$ such that $\delta < 1/2$.
Note that Proposition~\ref{prop:concentration} is a uniform result on $\by$, much stronger than what we required. 
Since it shows that the concentration (as a function of $\bz$) of the log-determinant is in a scale $n^{1+\epsilon}$ with $\epsilon > 0$,
it implies that there exists $\eta > 0$ such that for all $t > 0$:
\begin{align}
  \label{eq:concentration_potential}
    \lim_{n \to \infty} \frac{1}{n^{1+\eta}} \ln \bbP \Big[\Big|\frac{1}{n} \ln_{\epsilon_n}\EE_\bz |\det H_n^\Lambda(\by)|-\frac{1}{n}\EE_\bz \ln_{\epsilon_n} |\det H_n^\Lambda(\by)|\Big| \geq t\Big] = - \infty. & 
\end{align}
Combining this identity with Lemma~\ref{lemma:cutoff} and Proposition~\ref{prop:concentration_implicit_potential}, we reach the conclusion of Lemma~\ref{lemma:log_potential_L1}.

\subsection{Proof of Lemma~\ref{lemma:varadhan_well_posed}}\label{subsec:proof_lemma_moment_condition}

\begin{proof}
Let $\gamma \in (1,\alpha)$. We fix $C>0$ given by Lemma~\ref{lemma:log_pot_existence}. 
Then for all $\by \in \bbR^m$:
\begin{align*}
    \kappa_{\alpha,\phi}(\nu_\by^m,t_\phi(\nu_\by^m) ) 
    &\leq \int \mu_{\alpha,\phi}[\nu_\by^m](\mathrm{d}x) \ \ln (1 + |x|) + \ln (1 + |t_\phi(\nu_\by^m)|) , \\
    &\leq \ln(1+\norm{x \phi'(x)}_\infty) + \ln(1+C). 
\end{align*}
By Lemma~\ref{lemma:moment_bound} we have:
\begin{align*}
    \limsup_{n \to \infty} \sup_{\by \in \bbR^m} \Big[\frac{1}{n} \ln \EE |\det H_n^\Lambda(\by)| \Big] &< +\infty. 
\end{align*}
Therefore, in order to prove Lemma~\ref{lemma:varadhan_well_posed}, it only remains to show that 
\begin{align}\label{eq:bound_without_kappa}
\limsup_{n \to \infty} \frac{1}{n} \ln \EE_\by \Big[\exp\Big\{-\frac{\gamma n}{2}\ln \Big(\frac{1}{m} \sum_{\mu=1}^m\phi'(y_\mu)^2\Big)\Big\} \Big] < \infty.
\end{align}
Let us now prove eq.~\eqref{eq:bound_without_kappa}.
We denote $v \equiv \EE_{y \sim {\cal N}(0,1)}[\phi'(y)^2]$ and $A \equiv \norm{\phi'}_\infty^2$ .
Since $A < \infty$, we can apply Cramer's theorem to $S \equiv (1/m) \sum_{\mu}\phi'(y_\mu)^2$, so that we have:
\begin{align}\label{eq:bound_cramer_ln}
\limsup_{n \to \infty} \frac{1}{n} \ln \EE_\by \Big[\exp\Big\{-\frac{\gamma n}{2}\ln \Big(\frac{1}{m} \sum_{\mu=1}^m\phi'(y_\mu)^2\Big)\Big\} \Big]
 &\leq \sup_{S \in (0,A)} [-\frac{\gamma}{2} \ln S - \alpha \Lambda^\star(S)] ,
\end{align}
in which $\Lambda^\star(S)$ is defined as the Legendre transform of the moment generating function of $\phi'(y)^2$:
\begin{align*}
    \Lambda^\star(S) &\equiv 
    \begin{cases} 
       \sup_{\theta \geq 0} \{\theta S - \ln \EE_{y \sim {\cal N}(0,1)}[e^{\theta \phi'(y)^2}]\} \hspace{1cm} &\mathrm{if} \ S \geq v, \\ 
    \sup_{\theta \geq 0} \{-\theta S - \ln \EE_{y \sim {\cal N}(0,1)}[e^{-\theta \phi'(y)^2}]\} \hspace{1cm} &\mathrm{if} \ S \leq v.
    \end{cases}
\end{align*}
By continuity of the involved functions, in order to conclude from eq.~\eqref{eq:bound_cramer_ln} we just need to be able to show that $(i) : \limsup_{S \uparrow A} (-\Lambda^\star(S)) < \infty$ and  $(ii) : \limsup_{S \downarrow 0}[-\frac{\gamma}{2} \ln S - \alpha \Lambda^\star(S)] < \infty$.
Point $(i)$ is trivial since $\Lambda^\star(S) \geq 0$ for all $S \in (0,A)$ (it is a rate function).
To show $(ii)$, we use the fact that for all $S \in (0,v)$ and $\theta \geq 0$ we have $\Lambda^\star(S) \geq - \theta S - \ln \EE[e^{-\theta \phi'(y)^2}]$.
In particular, for $\theta = S^{-1}$ we have $\Lambda^\star(S) \geq - 1 - \ln \EE[e^{-S^{-1}\phi'(y)^2}]$. 
Since $a=\phi'(y)$ has a density $\varphi_a$ continuous around $0$ (Def.~\ref{def:well_behaved}), we fix $a_0 > 0$ such that $\varphi_a$ is continuous in $[-a_0,a_0]$. For every $\theta > 0$:
\begin{align*}
    \ln \EE [e^{-\theta \phi'(y)^2}] &\leq  \ln \Big[\EE (e^{-\theta a^2} \mathds{1}_{|a| \leq a_0}) + e^{-\theta a_0^2} \Big] \leq  \ln \Big[(\sup_{|a|\leq a_0}|\varphi_a(a)|) \frac{\sqrt{\pi}}{\sqrt{\theta}} + e^{-\theta a_0^2} \Big],
\end{align*}
and thus $\ln \EE [e^{-\theta \phi'(y)^2}] \leq C - (1/2) \ln \theta$
with a constant $C > 0$. Using this bound and the remark before we reach
\begin{align*}
    -\frac{\gamma}{2} \ln S - \Lambda^\star(S) &\leq \frac{\alpha-\gamma}{2} \ln S + \alpha (1+C).
\end{align*}
Since $\alpha-\gamma > 0$, we have $\lim_{S \downarrow 0}[-\frac{\gamma}{2} \ln S - \alpha \Lambda^\star(S)] = -\infty$, which obviously implies point $(ii)$, which in turn shows eq.~\eqref{eq:bound_without_kappa}.
\end{proof}

\subsection{Proof of \texorpdfstring{eq.~\eqref{eq:final}}{eq.(22)}}\label{subsec:proof_eq_before_Varadhan}
Let $t > 0$, and fix $\eta > 0$ given by Lemma~\ref{lemma:log_potential_L1}. We define $E_n^{(t)}$, $A_n$, $B_n$:
\begin{subnumcases}{\label{eq:def_An_Bn}}
    E_n^{(t)} \equiv \left\{\left|\frac{1}{n} \ln \EE_{\bz} \left[ \left|\det H_n^\Lambda(\by) \right| \right] - \kappa_{\alpha,\phi}\left(\nu_\by^m,t_\phi(\nu_\by^m)\right) \right| \geq t\right\}, & \\
    A_n \equiv \frac{1}{n} \ln \EE \left[\mathds{1}[L_1(\by) \in B] e^{-\frac{n}{2} \ln\left(\frac{1}{m} \sum_\mu\phi'(y_\mu)^2\right)} \EE |\det H_n^\Lambda(\by)|\right], & \\
    B_n \equiv \frac{1}{n} \ln \EE \left[\mathds{1}[L_1(\by) \in B] e^{-\frac{n}{2} \ln\left(\frac{1}{m} \sum_\mu\phi'(y_\mu)^2 \right) + n \kappa_{\alpha,\phi}(\nu_\by^m,t_\phi(\nu_\by^m))}\right]. &
\end{subnumcases}
$A_n$ is related to the complexity by Lemma~\ref{lemma:annealed_finite_n_L1}, and by Varadhan's lemma (and Lemma~\ref{lemma:varadhan_well_posed}), we have:
 \begin{align}\label{eq:limit_Bn}
     \lim_{n \to \infty} B_n = \sup_{\nu \in {\cal M}_\phi(B)} \,  \left[ - \frac{1}{2}\mathcal{E}_\phi(\nu) + \kappa_{\alpha,\phi}(\nu,t_\phi(\nu)) - \alpha H(\nu|\mu_G)\right] \in [-\infty,+\infty).
 \end{align}
The factor $\alpha$ in front of the relative entropy arises as we consider the empirical distribution of $m = \alpha n$ i.i.d.\ variables.
For all $t > 0$, we have by definition of $A_n,B_n$:
{\small
\begin{subnumcases}{}
\label{eq:lower_bound_AnBn}
    A_n - B_n \geq -t + \frac{1}{n} \ln \left[1- \frac{\EE \left[\mathds{1}_{L_1(\by) \in B ; E_n^{(t)}} e^{-\frac{n}{2} \ln\left(\frac{1}{m} \sum_\mu\phi'(y_\mu)^2 \right) + n \kappa_{\alpha,\phi}(\nu_\by^m,t_\phi(\nu_\by^m))}\right]}{\EE \left[\mathds{1}_{L_1(\by) \in B} \ e^{-\frac{n}{2} \ln\left(\frac{1}{m} \sum_\mu\phi'(y_\mu)^2 \right) + n \kappa_{\alpha,\phi}(\nu_\by^m,t_\phi(\nu_\by^m))}\right]}\right], & \\
    \label{eq:upper_bound_AnBn}
    A_n - B_n \leq t + \frac{1}{n} \ln \left[1 + e^{-nt} \frac{\EE \left[\mathds{1}_{L_1(\by) \in B ; E_n^{(t)}} e^{-\frac{n}{2} \ln\left(\frac{1}{m} \sum_\mu\phi'(y_\mu)^2 \right)}\EE |\det H_n^\Lambda(\by)|\right]}{\EE \left[\mathds{1}_{L_1(\by) \in B} e^{-\frac{n}{2} \ln\left(\frac{1}{m} \sum_\mu\phi'(y_\mu)^2 \right) }\EE |\det H_n^\Lambda(\by)|\right]}\right]. &
\end{subnumcases}
}
Using Hölder's inequality and Lemma~\ref{lemma:varadhan_well_posed}, there exists $\gamma > 1$ and a constant $C > 0$ such that:
\begin{align}
\label{eq:bound_AnBn}
     -t + \frac{1}{n} \ln \left[1 -\frac{ \mathbb{P}[E_n^{(t)}]^{\frac{1}{\gamma}}}{e^{n B_n - n C}} \right] &\leq A_n - B_n \leq t + \frac{1}{n} \ln \left[1 + \frac{\mathbb{P}[E_n^{(t)}]^{\frac{1}{\gamma}}}{e^{nt + n A_n - n C}} \right].
\end{align}
\begin{itemize}[leftmargin=*]
    \item Assume that $\lim B_n = - \infty$ and $\limsup A_n > - \infty$. Let us fix a lower-bounded sub-sequence $A_{\varphi(n)}$ of $A_n$, so that $\lim[A_{\varphi(n)} - B_{\varphi(n)}] = + \infty$. However, by eq.~\eqref{eq:bound_AnBn} and Lemma~\ref{lemma:log_potential_L1}, we have $\limsup[A_{\varphi(n)} - B_{\varphi(n)}] \leq t$, as $(1/n) \ln \mathbb{P}[E_n^{(t)}] \to - \infty$. So we showed that $\lim B_n = - \infty \Rightarrow \lim A_n = - \infty$, which shows eq.~\eqref{eq:final} in this case. 
    \item Let us now assume that $\lim B_n > - \infty$. 
    Using the left inequality of eq.~\eqref{eq:bound_AnBn} and Lemma~\ref{lemma:log_potential_L1}, we reach in the same way that $\liminf[A_n - B_n] \geq -t$, which implies that $\liminf A_n > -\infty$. Thus we can use the right inequality of eq.~\eqref{eq:bound_AnBn} to show similarly that $\limsup[A_n - B_n] \leq t$. Taking the $t \to 0$ limit finishes the proof of eq.~\eqref{eq:final}.
\end{itemize}

\subsection{Proof of Lemma~\ref{lemma:stieltjes_convergence}}\label{subsec:proof_lemma_stieltjes}

\begin{proof}
This proof is a generalization of the arguments of \cite{silverstein1995empirical}. To fix the notations here, $(\mu_n,g_n(z))$ are the ESD\footnote{Empirical Spectral Distribution.} and Stieltjes transform of $\frac{1}{n} \bz \Lambda(\by) \bz^\intercal$, $(\mu^D_n,g^D_n(z))$ the ones of $\frac{1}{n} \bz D(\by) \bz^\intercal$, and $g_{\alpha,\phi}[\nu_\by^m](z)$ is the Stieltjes transform of $\mu_{\alpha,\phi}[\nu_\by^m]$. We show:
\begin{itemize}
    \item[$(i)$] There exists $\eta_1 \in (0,1)$ such that for all $z \in \bbC_+$:
    \begin{align*}
        \lim_{n \to \infty} \left\{n^{\eta_1}\sup_{\by \in \bbR^m} \left|\EE_\bz g_n(z) - \EE_\bz g^D_n(z) \right|\right\} &= 0.
    \end{align*}
    \item[$(ii)$] There exists $\eta_2 \in (0,1)$ and $K > 0$ such that for all $z \in \bbC_+$:
    \begin{align*}
        \limsup_{n \to \infty} \left\{n^{\eta_2}\sup_{\by \in \bbR^m} \left|\EE_\bz g_n^D(z) - g_{\alpha,\phi}[\nu_\by^m](z) \right|\right\} &< K.
    \end{align*}
\end{itemize}
Points $(i)-(ii)$ obviously imply Lemma~\ref{lemma:stieltjes_convergence}, so we now prove them.
\paragraph{Proof of $(i)$} This is a direct consequence of the second part of Lemma~\ref{lemma:lambda}, which implies that we can fix $\eta_1 \in (0,1)$ such that uniformly in $\by$, $\left\{n^{\eta_1} \EE_\bz \left[\mu^D_n - \mu_n\right] \right\} \to_{n \to \infty} 0$ weakly. By a classical characterization of the Stieltjes transform (cf for instance Theorem~2.4.4 of \cite{anderson2010introduction}), this implies $(i)$.
\paragraph{Proof of $(ii)$} This part is more involved. Let us give some definitions and conventions. We define $M_n \equiv \frac{1}{n} \bz D(\by) \bz^\intercal$. If $\bz_\mu$ is the column of $\bz$ indexed by $\mu$, we denote $M_n^{(\mu)} \equiv \frac{1}{n} \sum_{\nu (\neq \mu)} D(y_\mu) \bz_\nu \bz_\nu^\intercal$, which is independent of $\bz_\mu$, and $g_n^{(\mu)}(z)$ its Stieltjes transform\footnote{To lighten the results, we state the results as if these matrices were of size $n$, even though they are of size $(n-1)$, as it does not change anything to the argument.}.
Finally, we define $x$ and $x^{(\mu)}$ as:
\begin{align}\label{eq:def_x_xmu}
    x &\equiv \frac{\alpha}{m} \sum_{\mu=1}^m \frac{\phi''(y_\mu)}{\alpha + \phi''(y_\mu) g_n(z)} \, \, , \hspace{1cm } x^{(\mu)} \equiv  \frac{\alpha}{m} \sum_{\nu (\neq \mu)} \frac{\phi''(y_\nu)}{\alpha + \phi''(y_\nu) g_n(z)}. &
\end{align}
We fix $z = z_1 + i z_2 \in \bbC_+$ and we start from the trivial identity:
\begin{align}
    -\frac{1}{z-x} \mathrm{I}_n = (M_n - z \mathrm{I}_n)^{-1} - \frac{1}{z-x} (M_n -x \mathrm{I}_n) (M_n - z \mathrm{I}_n)^{-1}.
\end{align}
For every invertible matrix $B$, vector $\bq$, and $\tau \in \bbR$, we have by the Sherman–Morrison formula:
\begin{align*}
    q^\intercal (B+ \tau q q^\intercal)^{-1} &= \frac{1}{q + \tau q^\intercal B^{-1}\tau} q^\intercal B^{-1}.
\end{align*}
Plugging it into the last equation and taking the averaged trace yields
\begin{align}\label{eq:cavity_gn}
    -\frac{1}{z - x} - g_n(z) &= -\frac{1}{z-x} \frac{1}{n} \mathrm{Tr}\left[\left(\frac{1}{n} \sum_\mu \phi''(y_\mu) \bz_\mu \bz_\mu^\intercal - x\mathrm{I}_n \right) (M_n - z \mathrm{I}_n)^{-1}\right], \nonumber \\
    &= \frac{1}{m} \sum_{\mu=1}^m \frac{\alpha \phi''(y_\mu)}{\alpha + \phi''(y_\mu) g_n(z)} d_\mu,
\end{align}
in which we defined 
\begin{align}\label{eq:def_dmu}
    d_\mu &\equiv \frac{g_n(z)}{z-x} - \frac{1}{z-x} \frac{\alpha + \phi''(y_\mu)g_n(z)}{\alpha + \phi''(y_\mu) \frac{1}{n} \bz_\mu^\intercal (M_n^{(\mu)} - z \mathrm{I}_n)^{-1} \bz_\mu} \frac{1}{n} \bz_\mu^\intercal (M_n^{(\mu)} - z \mathrm{I}_n)^{-1} \bz_\mu.
\end{align}
Let us denote $L_n(z,g) \equiv -g - \left[z-\frac{\alpha}{m} \sum_{\mu=1}^m \frac{\phi''(y_\mu)}{\alpha + \phi''(y_\mu) g}\right]^{-1}$. We know by \cite{silverstein1995empirical} that $g_{\alpha,\phi}[\nu_\by^m](z)$ is defined as the only solution in $\bbC_+$ to $L_n(z,g) = 0$. Let us first show, using eq.~\eqref{eq:cavity_gn}, that there exists $\eta \in (0,1)$ such that:
\begin{align}\label{eq:step_A}
    \lim_{n \to \infty} n^\eta \sup_{\by \in \bbR^m} |L_n(z,\EE g_n(z))| &= 0.
\end{align}
\begin{proofof}{eq.~\eqref{eq:step_A}}
First, notice that as $M_n^{(\mu)}$ is independent of the Gaussian vector $\bz_\mu$, we have, denoting 
$\{\lambda_i\}_{i=1}^n$ the eigenvalues of $(M_n^{(\mu)} - z \mathrm{I}_n)^{-1}$, that $|\lambda_i| \leq z_2^{-1}$, and moreover: 
\begin{align}
    \EE_\bz \left|\frac{1}{n}  \bz_\mu^\intercal (M_n^{(\mu)}-z\mathrm{I}_n)^{-1} \bz_\mu - g_n^{(\mu)}(z)\right|^2
    &= \EE_{\{\bz_\nu\}_{\nu (\neq \mu)}} \EE_{\bw} \left|\frac{1}{n}  \bw^\intercal (M_n^{(\mu)}-z\mathrm{I}_n)^{-1} \bw - g_n^{(\mu)}(z)\right|^2,
\end{align}
with $\bw \sim \mathcal{N}(0,\mathrm{I}_n)$ independently of all $\{\bz_\nu\}_{\nu (\neq \mu)}$.
Therefore, by rotational invariance of the Gaussian distribution, we reach:
\begin{align}
    \label{eq:bound_convergence_stieltjes}
    \EE_\bz \left|\frac{1}{n}  \bz_\mu^\intercal (M_n^{(\mu)}-z\mathrm{I}_n)^{-1} \bz_\mu - g_n^{(\mu)}(z)\right|^2
    \nonumber
    &= \EE_{\{\bz_\nu\}_{\nu (\neq \mu)}} \EE_\bw \Bigg[\Big\{\frac{1}{n}  \sum_{i=1}^n \lambda_i (w_i^2 - 1) \Big\}^2\Bigg], \\ 
    \nonumber
    &= \EE_{\{\bz_\nu\}_{\nu (\neq \mu)}} \EE_\bw \Big[\frac{1}{n^2}  \sum_{i,j} \lambda_i \lambda_j (w_i^2 - 1) (w_j^2 - 1) \Big], \\ 
    \nonumber
    &= \EE_{\{\bz_\nu\}_{\nu (\neq \mu)}} \EE_\bw \Big[\frac{1}{n^2}  \sum_{i} \lambda_i^2 (w_i^2 - 1)^2\Big], \\ 
    &\leq \frac{1}{z_2^2} \frac{1}{n^2} \EE_{\bw} \sum_{i=1}^n (w_i^2 - 1)^2 \leq \frac{K}{z_2^2 n},
\end{align}
with a $K> 0$ (independent of $\by$ and z).
Moreover, let us denote $\Delta \equiv \frac{z_2}{2 (C^2 + z_1^2) + z_2^2}$, in which $C$ is the constant of Lemma~\ref{lemma:log_pot_existence}. Thanks to this lemma, we have that (uniformly in $\by$ and a.s.) $\liminf_{n \to \infty} \mathrm{Im}[g_n(z)] \geq \Delta$. In particular, using this bound, we can easily show (cf Part~4 of \cite{silverstein1995empirical} for a completely similar argument) that, uniformly in $\by$ and almost surely:
\begin{subnumcases}{\label{eq:bound_lemma_silverstein_2}}
    \max_{1 \leq \mu \leq m} \left|\frac{\alpha \phi''(y_\mu)}{\alpha+\phi''(y_\mu) g_n(z)}\right| \leq \frac{\alpha}{\Delta}, &\\
    \lim_{n \to \infty} \left[\max_{1 \leq \mu \leq m} \max \left\{\left|\frac{\alpha + \phi''(y_\mu)g_n(z)}{\alpha + \phi''(y_\mu) \frac{1}{n} \bz_\mu^\intercal (M_n^{(\mu)} - z \mathrm{I}_n)^{-1} \bz_\mu} - 1\right|, |x - x^{(\mu)}|\right\}\right] = 0.&
\end{subnumcases}
Combining eqs.~\eqref{eq:bound_convergence_stieltjes},\eqref{eq:bound_lemma_silverstein_2} into eqs.~\eqref{eq:cavity_gn},\eqref{eq:def_dmu} yields that there exists $\eta \in (0,1)$ such that eq.~\eqref{eq:step_A} is satisfied.
\end{proofof}
\\
By eq.~\eqref{eq:step_A} there exists $K > 0$ and a function $K(\by) \in \bbC$, such that for all $\by$: $|K(\by)| \leq K$, and
\begin{align}\label{eq:implicit_eg_n}
    \EE g_n(z) &= - \left[z - \frac{\alpha}{m} \sum_{\mu=1}^m \frac{\phi''(y_\mu)}{\alpha + \phi''(y_\mu) \EE g_n(z)}\right]^{-1} + \frac{K(\by)}{n^\eta}.
\end{align}
We write $g_{\alpha,\phi}[\nu_\by^m](z) = m_1(z) + i m_2(z)$ and $\EE g_n(z) = m_{n,1}(z) + i m_{n,2}(z)$. Note that all imaginary parts are strictly positive. Taking the imaginary part of eq.~\eqref{eq:implicit_eg_n}:
\begin{align}\label{eq:imaginary_implicit_eg_n}
    m_{n,2}(z) &= \frac{\mathrm{Im}(K(\by))}{n^\eta} + \frac{z_2 + \frac{\alpha}{m} \sum_{\mu=1}^m \frac{\phi''(y_\mu)^2 m_{n,2}(z)}{\left|\alpha + \phi''(y_\mu) \EE g_n(z)\right|^2}}{\left|z - \frac{\alpha}{m} \sum_{\mu=1}^m \frac{\phi''(y_\mu)}{\alpha + \phi''(y_\mu) \EE g_n(z)}\right|^{2}}.
\end{align}
Using eq.~\eqref{eq:implicit_eg_n} and its counterpart for $g_{\alpha,\phi}[\nu_\by^m](z)$ (which does not have a second term), we reach
\begin{align}\label{eq:contraction_Egn}
    g_{\alpha,\phi}[\nu_\by^m](z)  - \EE g_n(z) &= \frac{K(\by)}{n^\eta} + \left[g_{\alpha,\phi}[\nu_\by^m](z)  - \EE g_n(z)\right] A_n(z), 
\end{align}
with 
\begin{align}
    A_n(z) &\equiv \frac{\frac{\alpha}{m}\sum_{\mu} \frac{\phi''(y_\mu)^2}{\left(\alpha + \phi''(y_\mu) \EE g_n(z)\right)\left(\alpha + \phi''(y_\mu) g_{\alpha,\phi}[\nu_\by^m](z)\right)}}{\left[z - \frac{\alpha}{m} \sum_{\mu=1}^m \frac{\phi''(y_\mu)}{\alpha + \phi''(y_\mu) \EE g_n(z)}\right]\left[z - \frac{\alpha}{m} \sum_{\mu=1}^m \frac{\phi''(y_\mu)}{\alpha + \phi''(y_\mu) g_{\alpha,\phi}[\nu_\by^m](z)}\right]}.
\end{align}
By the Cauchy-Schwarz inequality, $|A_n(z)| \leq \sqrt{A_1(z) A_2(z)}$, with 
\begin{subnumcases}{}
    A_1(z) \equiv \frac{\alpha}{m} \sum_{\mu=1}^m \frac{\phi''(y_\mu)^2}{|\alpha + \phi''(y_\mu) \EE g_n(z)|^2} \left|z - \frac{\alpha}{m} \sum_{\mu=1}^m \frac{\phi''(y_\mu)}{\alpha + \phi''(y_\mu) \EE g_n(z)}\right|^{-2}, & \\ 
    A_2(z) \equiv \frac{\alpha}{m} \sum_{\mu=1}^m \frac{\phi''(y_\mu)^2}{|\alpha + \phi''(y_\mu) g_{\alpha,\phi}[\nu_\by^m](z)|^2} \left|z - \frac{\alpha}{m} \sum_{\mu=1}^m \frac{\phi''(y_\mu)}{\alpha + \phi''(y_\mu) g_{\alpha,\phi}[\nu_\by^m](z)}\right|^{-2}. &
\end{subnumcases}
In particular, using the counterpart of eq.~\eqref{eq:imaginary_implicit_eg_n} for $m_2(z)$, we have:
\begin{align}\label{eq:A1_bound}
    A_1(z) &= \frac{\alpha}{m} \sum_{\mu=1}^m \frac{\phi''(y_\mu)^2 m_2(z)}{|\alpha + \phi''(y_\mu) \EE g_n(z)|^2} \frac{1}{z_2 + \frac{\alpha}{m} \sum_{\mu=1}^m \frac{\phi''(y_\mu)^2 m_{n,2}(z)}{\left|\alpha + \phi''(y_\mu) \EE g_n(z)\right|^2}}.
\end{align}
Using items $(i)$ and $(ii)$ of Lemma~\ref{lemma:log_pot_existence}, we have the inequalities: $\frac{z_2}{2(C^2+z_1^2)+z_2^2} \leq m_2(z) \leq \frac{1}{z_2}$ and $\frac{z_2}{2(C^2+z_1^2)+z_2^2} \leq m_{n,2}(z) \leq \frac{1}{z_2}$. This implies from eq.~\eqref{eq:A1_bound}:
\begin{align}
    A_1(z) &\leq \frac{z_2}{2(C^2+z_1^2)+z_2^2} \frac{1}{z_2 + \frac{z_2}{2(C^2+z_1^2)+z_2^2}}.
\end{align}
In particular, there exists a constant $\Gamma \in (0,1)$ such that $A_1(z) \leq \Gamma$ uniformly in $n,\by$. Similarly, we find that $A_2(z) \leq \Gamma (1 + \frac{K}{z_2 n^\eta})$ uniformly in $\by$. So we have from eq.~\eqref{eq:contraction_Egn}:
\begin{align}
    \limsup_{n \to \infty}\left\{n^\eta \sup_{\by \in \bbR^m} \left|\EE g_n^D(z) - g_{\alpha,\phi}[\nu_\by^m](z)\right| \right\} &\leq \frac{K}{1-\Gamma} < \infty, 
\end{align}
which proves item $(ii)$.
\end{proof}
\section{Technical aspects of the quenched calculation}\label{sec:app_quenched}

 \subsection{The phase volume factor}\label{subsec_app:quenched_phasevolume}
 
Introducing the Fourier transform of the delta functions, we reach at leading exponential order in $n$:
\begin{align*}
   \frac{1}{n} \ln \left[\prod_{a=1}^p \int_{\bbR^n} \mathrm{d}\bx^a \prod_{a \leq b} \delta\left(n q_{ab} - n  \bx^a \cdot \bx^b\right)\right] &\simeq \frac{p}{2} \ln \frac{2\pi}{n} + \frac{1}{2} \sup_{\{\hat{q}_{ab}\}}\left[\sum_{a,b} q_{ab} \hat{q}_{ab} - \ln \det \hat{\bq}\right].
\end{align*}
The replica symmetric assumption can be made on the variables $\hat{\bq}$ that achieve this supremum : $\hat{q}_{aa} = \hat{q}_0$ and $\hat{q}_{ab} = -\hat{q}$ for $a \neq b$. This leads to $\det \hat{\bq} = (\hat{q}_0 + \hat{q})^{p-1} (\hat{q}_0 - (p-1)\hat{q})$, and after taking the $p \to 0^+$ limit, we reach:
{\small 
\begin{align*}
   &\frac{1}{np} \ln \left[\prod_{a=1}^p \int_{\bbR^n} \mathrm{d}\bx^a \prod_{a \leq b} \delta\left(n q_{ab} - n  \bx^a \cdot \bx^b\right)\right] \simeq \frac{1}{2} \log \frac{2 \pi}{n}+ \frac{1}{2} \sup_{\hat{q}_0,\hat{q}} \left[\hat{q}_0 + q \hat{q} - \log(\hat{q}_0 + \hat{q}) + \frac{\hat{q}}{\hat{q}_0 + \hat{q}} \right].
\end{align*}
}
The diverging term $-\frac{1}{2} \log n$ will be canceled out by the joint density of the gradients as we will see later.
The solution of the supremum is easy to carry out, and we finally reach eq.~\eqref{eq:result_phaseterm_replicas}.

\subsection{The joint density of the gradients}\label{subsec_app:quenched_gradient_density}

We denote $S = \mathrm{Span}\, (\{\bx^a\}_{a=1}^p)$.
Following \cite{ros2019complex}, for every $1 \leq a \leq p$ we can construct an orthonormal basis of $S$, denoted $(\bbe^a_{b})_{1 \leq b \leq p}$ for which $\bx^a$ is the first vector, that is $\bbe^a_a = \bx^a$. This basis is convenient, since $\{\bx^a\}^\perp \cap S = \mathrm{Span} \, (\{\bbe^a_b\}_{b (\neq a)})$.
We can also chose an arbitrary orthonormal basis $(\bbe_{p+1}, \cdots, \bbe_{n})$ of $S^\perp$. With this choice of basis, 
we can see that the gradient $\mathrm{grad} \, L(\bx^a)$ is identified with the vector in $\bbR^{n-1}$ with components:
\begin{align}\label{eq:gradient_projection}
   \mathrm{grad} \, L(\bx^a) &
   = \left(
     \{\nabla L(\bx^a) \cdot \bbe^a_i\}_{i=1}^{a-1}, 
     \{\nabla L(\bx^a) \cdot \bbe^a_i\}_{i=a+1}^{p}, 
     \{\nabla L(\bx^a) \cdot \bbe_i\}_{i=p+1}^{n}
    \right).
\end{align}
Recall that $\nabla L(\bx^a) = \frac{1}{m} \sum_{\mu} \bxi_\mu \, \phi'(y^a_\mu)$.
Let us make a few remarks:
\begin{itemize}
   \item For every $a$, the basis $(\bbe^a_b)_{b=1}^p$ is only a function of the values of the overlaps $\{q_{ab}\}$.
   \item We consider the joint density of the gradients \emph{conditioned} by the value of $\{\by^a\}$.
   In particular, this means that for every $a \neq b$, $\nabla L(\bx^a) \cdot \bbe^a_b$ is fixed by the values of $\{\by^c\}_{c=1}^p$
   and the overlaps $q_{ab}$. In particular, the first $(p-1)$ components of eq.~\eqref{eq:gradient_projection} are deterministic, thus
   their density will yield delta functions that are constraints on $\{\by^a\}$ and $\{q_{ab}\}$.
   \item The last $n-p$ components of eq.~\eqref{eq:gradient_projection} are (at fixed $\{\by^a\}$) zero mean Gaussian random variables with covariance given by $\EE \left[\mathrm{grad} L(\bx^a)_i \, \mathrm{grad} L(\bx^b)_j\right] = \frac{\delta_{ij}}{m^2} \sum_{\mu} \phi'(y^a_\mu) \, \phi'(y^b_\mu)$.
   Their joint density taken at $0$ is thus at leading exponential order in the $n \to \infty$ limit:
   \begin{align}\label{eq:gradient_density_1}
     \exp\left\{\frac{np}{2} \log \frac{m}{2\pi} - \frac{n}{2} \log \det \left[\left(\frac{1}{m} \sum_{\mu=1}^m \phi'(y^a_\mu) \phi'(y^b_\mu)\right)_{1 \leq a,b \leq p}\right] \right\}.
   \end{align}
\end{itemize}
Given these remarks and eq.~\eqref{eq:gradient_density_1}, in order to complete the calculation of the joint gradient density we
need to compute the quantities $(\nabla \,L(\bx^a) \cdot \bbe^a_b)$ for every $a \neq b$ as a function 
of $\{y^a_\mu\}$ and $\{q_{ab}\}$. In order to simplify the calculation, we will already make use of the replica-symmetric assumption on $q$, 
that is we assume $q_{aa} = 1$ and $q_{ab} = q$ for $a\neq b$.
Let us now describe a possible construction for the basis $(\bbe^a_b)_{b=1}^p$. We introduce three auxiliary quantities that are functions 
of $q$ and $p$:
\begin{subnumcases}{\label{eq:def_auxiliary_functions}}
 f^0_{p}(q) \equiv \frac{1}{p-1} \left[\frac{p-2}{\sqrt{1-q}} + \frac{1}{\sqrt{1+(p-2)q - (p-1)q^2}}\right], &\\
  f_{p}(q) \equiv \frac{1}{p-1} \left[-\frac{1}{\sqrt{1-q}} + \frac{1}{\sqrt{1+(p-2)q - (p-1)q^2}}\right]  , &\\
  z_{p}(q) \equiv - \frac{q}{\sqrt{1+(p-2)q - (p-1)q^2}}.&
\end{subnumcases}
With these definitions, we can consider:
\begin{align}\label{eq:basis}
    \begin{cases} 
      \bbe^a_a &\equiv\bx^a, \\
  \bbe^a_b &\equiv z_p(q) \bx^a + f^0_p(q) \bx^b + f_p(q) \sum_{c (\neq a,b)} \bx^c, \qquad (b \neq a). 
    \end{cases}
\end{align}
It is straightforward to check from eq.~\eqref{eq:basis} that we have for all $a,b,c$ that $\bbe^a_b \cdot \bbe^a_c = \delta_{bc}$.
We can now see that the delta term of the joint density of the gradients taken at $0$ is:
\begin{align}
   \label{eq:gradient_density_2}
   \prod_{a \neq b} \delta \left[\nabla L_1(\bx^a) \cdot \bbe^a_b\right] &=\prod_{a \neq b} \delta \left[\frac{1}{m} \sum_{\mu=1}^m \phi'(y^a_\mu) \left(z_p(q) y^a_\mu + f_p^0(q) y^b_\mu + f_p(q)\sum_{c (\neq a,b)} y^c_\mu \right)\right].
\end{align}
The product of eq.~\eqref{eq:gradient_density_1} and eq.~\eqref{eq:gradient_density_2} gives eq.~\eqref{eq:gradient_density}.

\subsection{Decoupling the replicas and taking the \texorpdfstring{$p \to 0^+$}{p->0} limit}\label{subsec_app:quenched_decoupling}

\subsubsection{Replica symmetry and decoupling}

In order to apply the replica method, we need to be able to take the $p \to 0$ limit, by analytically extending eq.~\eqref{eq:p_moment} to all $p > 0$.
The main idea is that we expect replica symmetry to influence the measure $\nu$ that appears in eq.~\eqref{eq:p_moment}. 
More precisely, we expect that for all permutation $\pi \in \mathcal{S}_p$, we have $\nu(\mathrm{d} \lambda^1, \cdots, \mathrm{d}\lambda^p) = \nu(\mathrm{d}\lambda^{\pi(1)}, \cdots, \mathrm{d}\lambda^{\pi(p)})$. 
We make in eq.~\eqref{eq:p_moment} the substitution:
\begin{align}
    \sup_{\nu \in \mathcal{M}(p,q)} \to \sup_{\{\mu_a\}_{a=1}^p \in {\cal M}(\bbR)} \, \, \sup_{\substack{\nu \in {\cal M}(p,q)\\\mathrm{s.t.}\, \{\nu^a = \mu_a\}}}
\end{align}
In this last expression, the replica symmetric assumption leads us in particular to assume that $\mu_a = \mu$ for all $a$. In order to make the remaining calculation tractable we will also need to fix some linear statistics of $\nu$ via Lagrange multipliers:
\begin{itemize}
   \item For every $a \leq b$, we will fix the linear statistics $
     \int \nu(\mathrm{d}\lambda) \, \phi'(\lambda^a) \phi'(\lambda^b) = A_{ab}$,
   with Lagrange multipliers $\hat{A}_{ab}$. Note that by replica symmetry, we actually assume that $A_{ab} = a$ for $a \neq b$ and $A_{aa} = A$ (and samely for the Lagrange multipliers).
   \item For all $a,b$ we will fix the linear statistics $\int  \nu(\mathrm{d}\lambda) \,\phi'(\lambda^a) \lambda^b = B_{ab}$,
   with Lagrange multipliers $\hat{B}_{ab}$. By replica symmetry, we assume that $B_{aa} = B$ and $B_{ab} = b$.
\end{itemize}
Combining these remarks, we reach that the $\nu$-dependent term of eq.~\eqref{eq:p_moment} is equal to:
\begin{align}\label{eq:first_simplification_replicas}
  &\sup_{\mu \in \mathcal{M}_\phi(B)} \sup_{\substack{A,a \\ B,b}} \extr_{\substack{\hat{A},\hat{a} \\ \hat{B}, \hat{b}} } \sup_{\substack{\nu \in \mathcal{M}(\bbR^n) \\ \mathrm{s.t.}\, \{\nu^a = \mu\}}} \Bigg[p \kappa_{\alpha,\phi}\left[\mu,t_\phi(\mu)\right] - \frac{1}{2} \ln \det \left[\{A_{ab}\}\right] - \sum_{a,b} \left[\frac{1}{2} A_{ab} \hat{A}_{ab} + B_{ab} \hat{B}_{ab}\right]  \nonumber \\
  &+ \sum_{a,b} \left[\frac{1}{2} \hat{A}_{ab} \int \nu(\mathrm{d}\lambda) \phi'(\lambda^a) \phi'(\lambda^b) + \hat{B}_{ab} \int \nu(\mathrm{d}\lambda) \phi'(\lambda^a) \lambda^b \right]  - \alpha H(\nu|\mu_{\mathrm{G},q}) \Bigg].
\end{align}
Note that here we did not always explicit the replica-symmetry assumption on all the variables to obtain more compact expressions.
The supremum over $B,b$ is moreover constrained by the following condition of eq.~\eqref{eq:conditions_Mpq_2}: $\forall a \neq b, \ z_p(q) B_{aa} + f^0_p(q) B_{ab} + f_p(q) \sum_{c (\neq a,b)} B_{ac} = 0$.  
Under the replica symmetric assumption, this becomes:
\begin{align}\label{eq:conditions_B}
  z_p(q) B + f^0_p(q) b + f_p(q) (p-2) b &= 0. 
\end{align}
Again, we introduce Lagrange multipliers $C_{ab}$ to fix these conditions, that reduce to $C_{ab} = C$ because of replica symmetry.
Finally, in order to fix the marginal distributions of $\nu$, we will have to introduce 'functional' Lagrange multipliers $g^a(\lambda^a)$. 
Again, by replica symmetry, we expect all of them to be equal to $g(\lambda^a)$. In the end, we reach the simplification of eq.~\eqref{eq:first_simplification_replicas}:
\begin{align}\label{eq:second_simplification_replicas}
  &\sup_{\substack{\mu \in \mathcal{M}_\phi(B) \\ \nu \in \mathcal{M}(\bbR^n)}} \sup_{\substack{A,a \\ B,b}} \extr_{\substack{C, \hat{A},\hat{a} \\ \hat{B}, \hat{b}, \{g(\lambda)\} } } \Bigg\{p \kappa_{\alpha,\phi}\left[\mu,t_\phi(\mu)\right] - \frac{1}{2} \ln \det \left[\{A_{ab}\}\right] - \sum_{a,b} \left[\frac{1}{2}\hat{A}_{ab} A_{ab} + \hat{B}_{ab} B_{ab} \right]  \\
  &  - p \int \mu(\mathrm{d} \lambda) g(\lambda) + p(p-1)C\left[B z_p(q) + b \left(f^0_p(q) +(p-2) f_p(q) \right) \right]  - \alpha H(\nu|\mu_{\mathrm{G},q}) \nonumber \\
  & + \sum_{a,b}\left[\frac{\hat{A}_{ab}}{2} \int \nu(\mathrm{d}\lambda) \phi'(\lambda^a) \phi'(\lambda^b) +  \hat{B}_{ab} \int \nu(\mathrm{d}\lambda) \phi'(\lambda^a) \lambda^b \right] + \sum_a \int \nu(\mathrm{d}\lambda) g(\lambda^a) \Bigg\}\nonumber.
\end{align}
We can now solve exactly the supremum over $\nu$. By a classical Gibbs measure calculation that we already detailed in section~\ref{sec:annealed}, we obtain (recall that $Q \in \bbR^{p \times p}$ is the overlap matrix):
\begin{align}\label{eq:third_simplification_replicas}
  &\sup_{\nu \in \mathcal{M}(\bbR^n) } \left\{\int \nu(\mathrm{d}\lambda) \left[ \sum_{a,b} \left( \frac{\hat{A}_{ab}}{2} \phi'(\lambda^a) \phi'(\lambda^b) + \hat{B}_{ab}  \phi'(\lambda^a) \lambda^b \right) + \sum_a g(\lambda^a) \right]- \alpha H(\nu|\mu_{\mathrm{G},q})\right\} \nonumber \\
  &= \alpha \ln \left[\int_{\bbR^p} \frac{\mathrm{d}\lambda}{\sqrt{2\pi}^p \sqrt{\det Q}} e^{\sum_{a,b} \left(-\frac{1}{2}  (Q^{-1})_{ab} \lambda^a \lambda^b + \frac{\hat{A}_{ab}}{2 \alpha} \phi'(\lambda^a) \phi'(\lambda^b) +  \frac{\hat{B}_{ab}}{\alpha} \phi'(\lambda^a) \lambda^b \right)+ \sum_a \frac{g(\lambda^a)}{\alpha}} \right].
\end{align}
To completely decouple the replicas, we will make use of two classical identities, for any $x,y$:
\begin{align*}
  e^{\frac{x^2}{2}} = \int \mathcal{D}\xi \, e^{\xi x}\ , \hspace{1cm}
  e^{xy} = \int \mathcal{D}\xi \mathcal{D}\xi' \, e^{\frac{x}{\sqrt{2}} (\xi + i \xi') + \frac{y}{\sqrt{2}} (\xi - i \xi')}.
\end{align*}
Thanks to replica symmetry, we can compute $Q^{-1}$ and $\det Q$ as:
\begin{subnumcases}{}
  \det Q = (1-q)^{p-1} [1 + (p-1)q] , &\\
  Q^{-1}_{ab} = \frac{1 +(p-1)q}{1 + (p-2)q -(p-1)q^2} \delta_{ab} - \frac{q}{(1-q)(1+(p-1)q)}. &
\end{subnumcases}
We define
\begin{subnumcases}{}
  d_{0,p}(q) \equiv  \frac{1 +(p-1)q}{1 + (p-2)q -(p-1)q^2}, &\\
  d_p(q) \equiv \frac{q}{(1-q)(1+(p-1)q)}.&
\end{subnumcases}
Using all the above, we can now simplify eq.~\eqref{eq:third_simplification_replicas}: 
\begin{align}
  \label{eq:fourth_simplification_replicas}
&\alpha \ln \left[\int_{\bbR^p} \frac{\mathrm{d}\lambda}{\sqrt{2\pi}^p \sqrt{\det Q}} e^{\sum_{a,b} \left(-\frac{1}{2}  (Q^{-1})_{ab} \lambda^a \lambda^b + \frac{\hat{A}_{ab}}{2\alpha} \phi'(\lambda^a) \phi'(\lambda^b) +  \frac{\hat{B}_{ab}}{\alpha} \phi'(\lambda^a) \lambda^b \right)+ \sum_a \frac{g(\lambda^a)}{\alpha}} \right] \nonumber \\
&= -\frac{\alpha p}{2} \ln 2 \pi - \frac{\alpha(p-1)}{2} \ln (1-q) - \frac{\alpha}{2} \ln [1 + (p-1)q] + \alpha \ln \left[\int_{\bbR^4} \mathcal{D} \bxi I_p(\bxi)^p \right],
\end{align}
in which we defined $\bxi \equiv (\xi_q,\xi_a,\xi_b,\xi_b')$ and
{\small
\begin{align*}
    I_p(\bxi) &\equiv \int \mathrm{d} \lambda \, e^{\frac{g(\lambda)}{\alpha} -\frac{d_{0,p}(q)\lambda^2 }{2} + \frac{\hat{A} - \hat{a}}{2\alpha} \phi'(\lambda)^2 + \frac{\hat{B} - \hat{b}}{\alpha} \phi'(\lambda) \lambda +  \sqrt{d_p(q)} \xi_q \lambda + \sqrt{\frac{\hat{a}}{\alpha}} \xi_a \phi'(\lambda) + \sqrt{\frac{\hat{b}}{2\alpha}} [\phi'(\lambda)(\xi_b + i \xi_b') + \lambda (\xi_b - i\xi_b') ] }.
\end{align*}
}%
While the involved expressions are very cumbersome, we have successfully decoupled the replicas.

\subsubsection{The \texorpdfstring{$p \to 0$}{p->0} limit and final result}

We begin by a remark on eq.~\eqref{eq:fourth_simplification_replicas}. Note that $\lim_{p \to 0} (1/p) \ln \left[\int \mathcal{D} \xi \, I_p(\bxi)^p \right] = \int \mathcal{D} \bxi \,\ln \left[I_0(\bxi) \right]$.
Thus, after multiplication by $(1/p)$, the $p \to 0^+$ limit of eq.~\eqref{eq:fourth_simplification_replicas} will yield:
\begin{align}\label{eq:first_term}
  &-\frac{\alpha}{2} \ln 2 \pi - \frac{\alpha}{2} \ln (1-q) - \frac{\alpha q}{2 (1-q)} + \alpha \int {\cal D}\bxi \ln I(\bxi),
\end{align}
in which $I(\bxi)$ is defined in Result~\ref{result:quenched_nosignal}. We can wrap up the calculation. We make two remarks. Firstly the condition eq.~\eqref{eq:conditions_B} reduces, in the $p \to 0$ limit, to $b = q B$, so that we will be able to simplify the terms involving the Lagrange multiplier $C$. Secondly, the variable $B$ is equal to $t_\phi(\mu)$, defined in Theorem~\ref{thm:annealed_nosignal}.
We combine now eqs.~\eqref{eq:conditions_Mpq_1},\eqref{eq:p_moment},\eqref{eq:second_simplification_replicas} and \eqref{eq:first_term} with the two remarks above. 
Changing notations from $\mu$ to $\nu$ and $B$ to $C$, we obtain finally the conclusion of Result~\ref{result:quenched_nosignal}.
\changelocaltocdepth{2}
\section{Generalization to more involved models}\label{sec_app:other_models}

\subsection{Annealed and quenched calculations for \texorpdfstring{$L_2$}{L2}}\label{subsec_app:L2}

We give here a sketch of the generalization of our annealed and quenched calculations to $L_2$, thus yielding Theorem~\ref{thm:annealed_signal} and Result~\ref{result:quenched_signal}.
A more detailed derivation of these results will be included in a future extended version of this work.
We restrict here to the annealed calculation (the generalization of the quenched calculation is completely similar). The majority of the arguments being identical to the $L_1$ case, we will only highlight the main differences and give the important intermediary results.

In the Kac-Rice formula, one has now to integrate over the overlap $q \equiv \bx \cdot \bx^\star$ as well. Moreover, we condition over the joint values of $a_\mu \equiv \bxi_\mu \cdot \bx$ and $b_\mu \equiv (1-q^2)^{-1/2}\left[(\bxi_\mu \cdot \bx^\star) - q a_\mu \right]$, rather than just $\bxi_\mu \cdot \bx$ (as we did for $L_1$). Note that $(a_\mu,b_\mu)$ follows a joint standard Gaussian distribution.Using these definitions, we can obtain the counterpart of Lemma~\ref{lemma:annealed_finite_n_L1} for $L_2$:
{\small
\begin{align*}
    \EE \, \mathrm{Crit}_{n,L_2}(B,Q) &= {\cal C}_n \int_{Q} \mathrm{d}q \ e^{\frac{n(1+\ln \alpha + \ln(1-q^2))}{2}} \EE_{\ba,\bb}\left[\delta(P_n(\ba,\bb))\mathds{1}_{L_2(\ba,\bb) \in B} e^{-n {\cal E}_n(\ba,\bb)} \EE_\bz \left|\det H_n(\ba,\bb)\right|\right],
\end{align*}
}
in which ${\cal C}_n$ is exponentially trivial, and we defined:
{\small
\begin{subnumcases}{}
P_n(\ba,\bb) \equiv \frac{1}{m} \sum_{\mu=1}^m b_\mu \phi'(a_\mu) \left[\phi\left(q a_\mu + \sqrt{1-q^2} b_\mu\right) - \phi(a_\mu)\right] , & \\
{\cal E}_n(\ba,\bb) \equiv \frac{1}{2} \ln \left\{\sum_{\mu=1}^m \phi'(a_\mu)^2 \left[\phi\left(q a_\mu + \sqrt{1-q^2} b_\mu\right) - \phi(a_\mu)\right]^2\right\}, & \\
H_n(\ba,\bb) \equiv \frac{1}{m} \sum_{\mu=1}^m \left[\phi'(a_\mu)^2 - \theta''(a_\mu) \left[\phi\left(\sqrt{1-q^2}b_\mu + qa_\mu\right)-\phi(a_\mu)\right]\right]\bz_\mu \bz_\mu^\intercal & \\
\hspace{2cm}- \left[\frac{1}{m} \sum_{\mu=1}^m a_\mu \phi'(a_\mu) \left[ \phi(a_\mu) - \phi\left(q a_\mu + \sqrt{1-q^2} b_\mu\right) \right]\right] \mathrm{I}_{n-2}. \nonumber& 
\end{subnumcases}
}
Here $\bz \in \bbR^{(n-2) \times m}$ is an i.i.d.\ standard Gaussian matrix. T
The condition $P_q(\ba,\bb) = 0$ arises from the conditioning on the nullity of the gradient in the linear subspace spanned by $\bx^\star$, and ${\cal E}_n(\ba,\bb)$ from the density of the gradient in the subspace orthogonal to $\{\bx,\bx^\star\}$.
A crucial feature of this equation is that the joint distribution of $(L_2(\bx),\mathrm{grad}L_2(\bx),\mathrm{Hess}L_2(\bx))$ only depends on $\bx$ via the overlap $q = \bx\cdot \bx^\star$ with the ``true'' solution. Once conditioned over the values of $q$, it thus becomes clear why the calculations made for $L_1$ will generalize here.

As in Section~\ref{subsec:large_deviations_annealed_L1}, one can then show the concentration of the empirical logarithmic potential on the functional $\kappa_{\alpha,\phi}(q,\nu_{\ba,\bb}^m)$, in which $\nu_{\ba,\bb}^m \in {\cal M}(\bbR^2)$ is now the empirical measure of $\{a_\mu,b_\mu\}_{\mu=1}^m$. 
We obtain the counterpart of Lemma~\ref{lemma:log_potential_L1}, under the analogue of Assumption~\ref{hyp:technical} for $H_n(\ba,\bb)$: there exists $\eta > 0$ such that for all $t > 0$:
\begin{align}
    \lim_{n \to \infty}\frac{1}{n^{1+\eta}} \ln \mathbb{P}\left[\left|\frac{1}{n} \ln \EE_\bz \left|\det H_n(\ba,\bb)\right| - \kappa_{\alpha,\phi}(q,\nu^m_{\ba,\bb})\right| \geq t\right] &= -\infty.
\end{align}
Thanks to this result, we perform then a saddle point on the overlap $q$ and the empirical measure $\nu \in {\cal M}(\bbR^2)$, using Sanov's theorem and Varadhan's lemma. This yields the result of Theorem~\ref{thm:annealed_signal}.

As a final note, there exists similar results to the one presented in Section~\ref{sec:variational} that allow to compute the density (and the logarithmic potential) of $\mu_{\alpha,\phi}[q,\nu]$,
via the computation of its Stieltjes transform.

\subsection{Generalizations to other models}\label{subsec_app:other_models}

Our calculations, both annealed and quenched, can be generalized straightforwardly to many other loss functions and models. As is clear for instance in the annealed computation of Section~\ref{sec:annealed}, the key features that must be present are a Gaussian distribution for the data, and a loss function $L(\bx)$ that only depends on the data samples $\bxi_\mu$ via their projection over a few vectors (as $\bx$ for $L_1(\bx)$ and $\bx, \bx^\star$ for $L_2(\bx)$). We give thereafter three examples of models, that can be found in \cite{engel2001statistical,mei2018landscape}, and for which the calculations can be performed.

\begin{model}[Binary linear classification]\label{model:binary_classification}
    Consider $n,m \geq 1$ such that $m/n \to \alpha > 1$.
    Let $\sigma : \bbR \to [0,1]$ a smooth threshold function. 
    We are given $m$ samples $(y_\mu,\bx_\mu)_{\mu=1}^m$ with $y_\mu \in \{0,1\}$ and $\bx_\mu \in \bbR^n$. 
    The elements of $(y_\mu)_{\mu=1}^m$ are generated according to $\mathbb{P}\left(Y_\mu = 1 | \bX_\mu = \bx\right) = \sigma(\bm{\theta}_0\cdot \bx)$, and the $\bx_\mu$ are i.i.d.\ standard Gaussian random variables in $\bbR^n$. 
    We want to learn the vector $\bm{\theta}_0 \in \mathbb{S}^{n-1}$ by minimizing the loss function:
    \begin{align}
        L(\bm{\theta}) &\equiv \frac{1}{2m} \sum_{\mu=1}^m \left[y_\mu - \sigma \left(\bm{\theta} \cdot \bx_\mu\right)\right]^2, \hspace{1cm} \bm{\theta} \in \mathbb{S}^{n-1}. 
    \end{align}
\end{model}

\begin{model}[Mixture of two Gaussians]\label{model:mixture_gaussians}
    Consider $n,m \geq 1$ such that $m/n \to \alpha > 1$. We are given $m$ samples $\by_\mu \in \bbR^n$, generated as $\by_\mu \overset{\mathrm{i.i.d.}}{\sim} \sum_{a=1}^2 p_a {\cal N}(\bm{\theta}^0_a, \mathrm{I}_n)$. The proportions $p_1,p_2$ are known, and we wish to recover $\bm{\theta}^0_1$ and $\bm{\theta}^0_2$ by minimizing the maximum-likelihood estimator:
    \begin{align}
        L(\bm{\theta}_1,\bm{\theta}_2) &\equiv -\frac{1}{m} \sum_{\mu=1}^m \ln \left[\frac{1}{2} \sum_{a=1,2} \frac{1}{\sqrt{2\pi}^n} \exp \left\{-\frac{1}{2} \norm{\by_\mu - \bm{\theta}_a}^2\right\}\right].
    \end{align}
    \end{model}
\begin{model}[Simple unsupervised learning model]\label{model:unsupervised_learning}
    Consider $n,m \geq 1$ such that $m/n \to \alpha > 1$. Let $\phi : \bbR \to \bbR$ a smooth activation function, $V : \bbR \to \bbR_+$ a ``potential'', and $\bx^0 \in \mathbb{S}^{n-1}$ a fixed vector. We assume that we are given i.i.d.\ data samples $\{\bxi_\mu\}_{\mu=1}^m \in \bbR^n$ distributed such that their projection on $\bx^0$ has a probability density $P(\bxi_\mu \cdot \bx = h) \propto e^{-\frac{1}{2} h^2 - V(h)}$, and the other coordinates of $\bxi_\mu$ are i.i.d.\ standard Gaussian variables.
    We wish to recover the vector $\bx^0$ by minimizing:
    \begin{align}
        L(\bx) &\equiv \frac{1}{m} \sum_{\mu=1}^m \phi\left(\bxi_\mu \cdot \bx\right), \hspace{1cm} \bx \in \mathbb{S}^{n-1}. 
    \end{align}
\end{model}
For each of these three models one can replicate the annealed and quenched calculations of Sections~\ref{sec:annealed} and \ref{sec:quenched}, under suitable technical hypotheses. 

\paragraph{A note on non-spherical priors}
Finally, it is clear from the calculation of Appendix~\ref{sec:app_quenched} 
(particularly Section~\ref{subsec_app:quenched_phasevolume}) 
that we can also generalize these techniques (at least heuristically) to non-spherical prior distributions on the vectors $\bx$.
The most natural hypothesis that allows the computation to be generalized is that the prior distribution takes the decoupled form $P(\mathrm{d}\bx) = \prod_{i} P(\mathrm{d}x_i)$.

\end{document}